\tikzstyle{vertex}=[circle,fill=black!25,minimum size=20pt,inner sep=0pt]
\tikzstyle{selected vertex} = [vertex, fill=red!24]
\tikzstyle{edge} = [draw,thick,-]
\tikzstyle{weight} = [font=\small]
\tikzstyle{selected edge} = [draw,line width=5pt,-,red!50]
\tikzstyle{ignored edge} = [draw,line width=5pt,-,black!20]
\newcommand{\PLATEAU}{\textsc{Plateau}\xspace}
\newcommand{\GAP}{\textsc{Gap}\xspace}
\newcommand{\LINEAR}{\textsc{Linear\xspace}}
\newcommand{\UNITATION}{\textsc{Unitation\xspace}}
\newcommand{\ONEMAX}{\textsc{Onemax}\xspace}
\newcommand{\LEADINGONES}{\textsc{LeadingOnes}\xspace}
\newcommand{\NEEDLE}{\textsc{Needle}\xspace}
\newcommand{\xmin}{\ensuremath{x_{\min}}}
\newcommand{\xmax}{\ensuremath{x_{\max}}}
\newcommand{\wmin}{\ensuremath{w_{\min}}}
\newcommand{\wmax}{\ensuremath{w_{\max}}}
\newcommand{\expect}[1]{\mathbb{E}\left[#1\right]}
\newcommand{\Prob}[1]{\Pr\left(#1\right)}
\newtheorem{theorem}             {Theorem}
\newtheorem{lemma}      [theorem]{Lemma}
\newtheorem{definition} [theorem]{Definition}
\definecolor{darkgreen}{rgb}{0.11, 0.44, 0.11}
\DeclareMathOperator{\bin}{Bin}
\begin{document}

\title{Theoretical Analysis of Stochastic Search Algorithms}


\author{Per Kristian Lehre\\
\small School of Computer Science,\\
\small University of Birmingham,\\
\small  Birmingham, UK
\and Pietro S. Oliveto\\
\small Department of Computer Science,\\
\small University of Sheffield,\\
\small  Sheffield, UK
}

\maketitle

\abstract{Theoretical analyses of stochastic search algorithms, albeit few, have always existed since these algorithms became popular. 
Starting in the nineties a systematic approach to analyse the performance of stochastic search heuristics has been put in place.
This quickly increasing basis of results allows, nowadays, the analysis of sophisticated algorithms such as population-based evolutionary algorithms, 
ant colony optimisation and artificial immune systems. 
Results are available concerning problems from various domains including classical combinatorial and continuous optimisation, single and multi-objective
optimisation, and noisy and dynamic optimisation.
This chapter introduces the mathematical techniques that are most commonly used in the runtime analysis of stochastic search heuristics.
Careful attention is given to the very popular artificial fitness levels and drift analyses techniques for which several variants are presented.
To aid the reader's comprehension of the presented mathematical methods, these are applied to the analysis of simple evolutionary algorithms for artificial example functions.
The chapter is concluded by providing references to more complex applications and further extensions of the techniques for the obtainment of advanced results.
}

%

\section{Introduction}

Stochastic search algorithms, also called randomised search
heuristics, are general purpose optimisation algorithms that are often
used when it is not possible to design a specific algorithm for the
problem at hand.  Common reasons are the lack of available resources
(e.g., enough money and/or time) or because of an insufficient
knowledge of the complex optimisation problem which has not been
studied extensively before. Other times, the only way of acquiring
knowledge about the problem is by evaluating the quality of candidate
solutions.

Well-known stochastic search algorithms are random local search and simulated annealing. Other
more complicated approaches are inspired by processes observed in
nature. Popular examples are evolutionary algorithms (EAs)
 inspired by the concept of natural evolution, ant
colony optimisation (ACO) inspired by ant
foraging behaviour and artificial immune systems (AIS) inspired by the
immune system of vertebrates.

The main advantage of stochastic search heuristics is that, being
general purpose algorithms, they can be applied to a wide range of
applications without requiring hardly any knowledge of the problem at
hand.  Also, the simplicity for which they can be applied implies that
practitioners can use them to find high quality solutions to a wide
variety of problems without needing skills and knowledge of algorithm
design. Indeed, numerous applications report high performance results
which make them widely used in practice.  However, through
experimental work and applications it is difficult to understand the
reasons for these successes.  In particular, given a stochastic search
algorithm, it is unclear on which kind of problems it will achieve
good \emph{performance} and on which it will perform poorly. Even more
crucial is the lack of understanding of how the parameter settings
influence the performance of the algorithms.  The goal of a rigorous
theoretical foundation of stochastic search algorithms is to answer
questions of this nature by \emph{explaining} the success or the
failure of these methods in practical applications. The benefits of a
theoretical understanding are threefold: (a) guiding the choice of the
best algorithm for the problem at hand, (b) determining the optimal
parameter settings, and (c) aiding the
algorithm design, ultimately leading to the achievement of better
algorithms.

Theoretical studies of stochastic optimisation methods have always
existed, albeit few, since these algorithms became popular.  In
particular, the increasing popularity gained by evolutionary
and genetic algorithms in
the seventies led to various attempts at building a theory for these
algorithms.  However, such initial studies attempted to provide
insights on the \emph{behaviour} of evolutionary algorithms rather than
estimating their performance.  The most popular of these theoretical
frameworks was probably the \emph{schema theory} introduced by Holland
\cite{Holland1975} and made popular by Goldberg \cite{Goldberg1989}.
In the early nineties a very different approach appeared to the
analysis of evolutionary algorithms and consequently randomised search
heuristics in general, driven by the insight that these heuristics are
indeed randomised algorithms, albeit general-purpose ones, and as such
they should be analysed in a similar spirit to that of classical
randomised algorithms \cite{MotwaniRaghavan1995}. For the last 25
years this field has kept growing considerably and nowadays several
advanced and powerful tools have been devised that allow the analysis
of the performance of involved stochastic search algorithms for
problems from various domains.  These include problems from classical
combinatorial and continuous optimisation, dynamic optimisation and
noisy optimisation.  The generality of the developed techniques, has
allowed their application to the analyses of several families of
stochastic search algorithms including evolutionary algorithms, local
search, metropolis, simulated annealing, ant colony optimisation,
artificial immune systems, particle swarm optimisation, estimation of
distribution algorithms amongst others.  

The aim of this chapter is to introduce the reader to the most common
and powerful tools used in the performance analysis of randomised
search heuristics.  Since the main focus is the understanding of the
methods, these will be applied to the analysis of very simple
evolutionary algorithms for artificial example functions. The hope is
that the structure of the functions and the behaviour of the algorithms
are easy to grasp so the attention of the reader may be mostly focused
on the mathematical techniques that will be presented. At the end of
the chapter references to complex applications of the techniques for
the obtainment of advanced results will be pointed out for further
reading.

\section{Computational Complexity of Stochastic Search Algorithms}

From the perspective of computer science, stochastic search heuristics
are randomised algorithms although more general than problem specific
ones.  Hence, it is natural to analyse their performance in the
classical way as done in computer science. From this perspective an
algorithm should be \emph{correct}, i.e., for every instance of the
problem (the input) the algorithm halts with the correct solution
(i.e., the correct output) and it should be efficient in terms of its
\emph{computational complexity} i.e., the algorithm uses the
computational resources wisely.  The resources usually considered are
the number of basic computations to find the solution (i.e., time) and
the amount of memory required (i.e., space).

Differently from problem-specific algorithms, the goal behind
general-purpose algorithms such as stochastic search heuristics is to
deliver good performance independently of the problem at hand.  In
other words, a general-purpose algorithm is ``correct'' if it
\emph{visits the optimal solution} of \emph{any problem} in finite
time.  If the optimum is never lost afterwards, then a stochastic
search algorithm is said to \emph{converge} to the optimal solution.
In a formal sense the latter condition for convergence is required
because most search heuristics are not capable of recognising when an
optimal solution has been found (i.e., they do not halt).  However, it
suffices to keep track of the best found solution during the run of
the algorithm, hence the condition is trivial to satisfy.  What is
particularly relevant and can make a huge difference on the usefulness
of a stochastic search heuristic for a given problem is its \emph{time
  complexity}.  In each iteration the evaluation of the quality of a
solution is generally far more expensive than its other algorithmic
steps.  As a result, it is very common to measure \emph{time} as the
number of evaluations of the fitness function (also called objective
function) rather than counting the number of basic computations.
Since randomised algorithms make random choices during their
execution, the runtime of a stochastic search heuristic $A$ to
optimise a function $f$ is a random variable $T_{A,f}$.  The main
measure of interest is:
\begin{enumerate}
\item The \emph{expected runtime} $\expect{T_{A,f}}$: the expected
  number of fitness function evaluations until the optimum of $f$ is
  found; \\ 
  For skewed runtime distributions, the expected runtime may be a
  deceiving measure of algorithm performance. The
  following measure therefore provides additional information.
\item The \emph{success probability in $t$ steps} $\Prob{T_{A,f} \leq
    t}$: the probability that the optimum is found within $t$ steps.
\end{enumerate}
Just like in the classical theory of efficient algorithms the time is
analysed in relation to growing input length and usually described
using asymptotic notation \cite{CormenLeisersonRivestStein2001}. A
search heuristic $A$ is said to be \emph{efficient} for a function
(class) $f$ if the runtime grows as polynomial function of the
instance size. On the other hand, if the runtime grows as an
exponential function, then the heuristic is said to be
\emph{inefficient}. See Figure~\ref{fig:complexity} for an
illustrative distinction.

\begin{figure}[t]
\begin{center}
        \includegraphics[width=10cm]{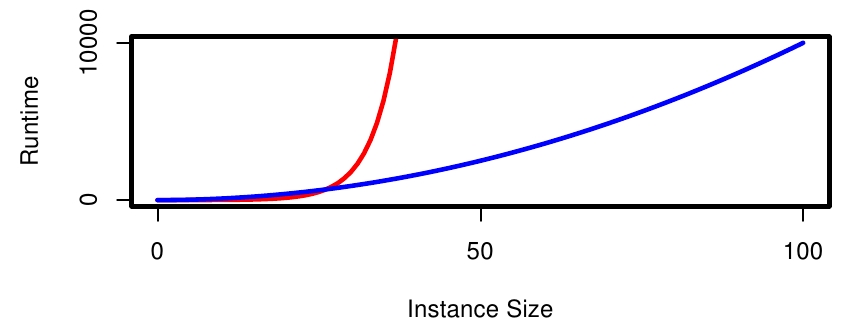}
      \end{center}
      \caption{An efficient search heuristic (blue) versus and
        inefficient search heuristic (red) for a given instance.}
\label{fig:complexity}   
\end{figure}

\section{Evolutionary Algorithms}
A general framework of an evolutionary algorithm is the
($\mu$+$\lambda$)~EA defined in Algorithm \ref{alg:mulambdaea}.  The
algorithm evolves a population of $\mu$ candidate solutions, generally
called the \emph{parent population}.  At each generation an
\emph{offspring population} of $\lambda$ individuals is created by
selecting individuals from the parent population uniformly at random
and by applying a mutation operator to them. The generation is
concluded by selecting the $\mu$ fittest individuals out of the $\mu +
\lambda$ parents and offspring. Algorithm \ref{alg:mulambdaea}
presents a formal definition.

\begin{algorithm}[h]
\caption{($\mu$+$\lambda$)~EA} \label{alg:mulambdaea}
 \begin{algorithmic}[1]
      \State  {\bf Initialisation}: 
      \Statex Initialise $P_0 = \{x^{(1)}, \dots, x^{(\mu)}\}$ with
              $\mu$ individuals chosen uniformly a random from $\{0,1\}^n$;
      \Statex $t\gets 0$; 
      \For{$i=1, \dots, \lambda$}  \label{lineFor}
	  \State {\bf Selection for Reproduction}: Choose $x\in P_t$ uniformly at random;
	  \State {\bf Variation}: Create $y^{(i)}$ by flipping each bit in $x$ with probability $p_m$;
      \EndFor
      \State {\bf Selection for Replacement}\label{lineSelection}
      \Statex Create the new population $P_{t+1}$ by choosing the best $\mu$ individuals out of 
      $\{x^{(1)}, \dots, x^{(\mu)}, y^{(1)}, \dots, y^{(\lambda)}\}$;
      \State $t\gets t+1$; Continue at \ref{lineFor};
    \end{algorithmic}
\end{algorithm}
In order to apply the algorithm for the optimisation of a fitness
function $f : \{0,1\}^n \rightarrow \mathbb{R}$, some parameters need
to be set. The population size $\mu$, the offspring population size
$\lambda$ and the mutation rate $p_m$.  Generally $p_m = 1/n$ is
considered a good setting for the mutation rate.  Also, in practical
applications a stopping criterion has to be defined since the
algorithm does not halt.  A fixed number of generations or a fixed
number of fitness function evaluations are usually decided in advance.
Since the objective of the analysis is to calculate the time required
to reach the optimal (approximate) solution for the first time, no
stopping condition is required, and one can assume that the algorithms
are allowed to run forever.  The + symbol in the algorithm's name
indicates that \emph{elitist truncation selection} is applied.  This
means that the whole population consisting of both parents and
offspring are sorted according to fitness and the best $\mu$ are
retained for the next generation.  Some criterion needs to be decided
in case the best $\mu$ individuals are not uniquely defined.  Ties
between solutions of equal fitness may be broken uniformly at
random. Often offspring are preferred over parents of equal fitness.
In the latter case if $\mu=\lambda=1$ are set, then the standard
(1+1)~EA is obtained, a very simple and well studied evolutionary
algorithm.  On the other hand if some stochastic selection mechanism
was used instead of the elitist mechanism and a crossover operator was
added as variation mechanism, then Algorithm \ref{alg:mulambdaea}
would become a genetic algorithm (GA) \cite{Goldberg1989}. Given the
importance of the (1+1)~EA in this chapter, a formal definition is
given in Algorithm \ref{alg:oneoneea}.
\begin{algorithm}[h]
\caption{(1+1)~EA} \label{alg:oneoneea}
 \begin{algorithmic}[1]
      \State  {\bf Initialisation}: 
      \Statex Initialise $x^{(0)}$ uniformly a random from $\{0,1\}^n$;
      \Statex $t\gets 0$;
      \State {\bf Variation}: 
      \Statex Create $y$ by flipping each bit in $x^{(t)}$ with probability $p_m = 1/n$; \label{variation}
      \State {\bf Selection for Replacement}
      \If{$f(y) \geq f(x^{(t)})$}
      \State $x^{(t+1)}\gets y$
      \EndIf
      \State $t\gets t+1$; Continue at \ref{variation};
    \end{algorithmic}
\end{algorithm}

The algorithm is initialised with a random bitstring. At each
generation a new candidate solution is obtained by flipping each bit
with probability $p_m = 1/n$.  The number of bits that flip can be
represented by a binomial random variable $X\sim\bin(n,p)$ where $n$
is the number of bits (i.e., the number of trials) and $p=1/n$ is the
probability of a success (i.e. a bit actually flips), while $1-p = 1
-1/n$ is the probability of a failure (i.e., the bit does not flip).
Then, the expected number of bits that flip in one generation is given
by the expectation of the binomial random variable, $\expect{X} = n
\cdot p = n \cdot 1/n = 1$.

The algorithm behaves in a very different way compared to the random
local search (RLS) algorithm that flips \emph{exactly} one bit per
iteration.  Although the (1+1)~EA flips exactly one bit in expectation
per iteration, many more bits may flip or even none at all.  In
particular, the (1+1)~EA is a \emph{global optimiser} because there is
a positive probability that any point in the search space is reached
in each generation.  As a consequence, the algorithm will find the
global optimum in finite time. On the other hand, RLS is a local
optimiser since it gets stuck once it reaches a local optimum because
it only flips one bit per iteration.

The probability that a binomial random variable $X\sim\bin(n,p)$ takes
value $j$ (i.e., $j$ bits flip) is
\[ \Prob{X=j} = {n \choose j} p^j (1 - p)^{n-j}.\] 
Hence, the probability that the (1+1)~EA flips exactly one bit is 
\[
 \Prob{X=1} = {n \choose 1} \cdot \left( \frac{1}{n}\right) \cdot
              \left(1 - \frac{1}{n}\right)^{n-1} 
            = \left( 1 - \frac{1}{n}\right)^{n-1} \geq 1/e \approx 0.37
\]
So the outcome of one generation of the (1+1)~EA is similar to that of
RLS only approximately 1/3 of the generations.  The probability that
two bits flip is exactly half the probability that one flips:
\begin{align*}
\Prob{X=2} 
   & = {n \choose 2}\left(\frac{1}{n}\right)^2 \left(1 - \frac{1}{n}\right)^{n-2} \\
   & = \frac{n (n-1)}{2}\left(\frac{1}{n}\right)^2 \left(1 - \frac{1}{n}\right)^{n-2} \\
  & = \frac{1}{2} \left(1 - \frac{1}{n}\right)^{n-1} \approx 1/(2e)   
\end{align*}
On the other hand the probability no bits flip at all is: 
\[
 \Prob{X=0} = {n \choose 0} (1/n)^0 \cdot (1-1/n)^n \approx 1/e
\]
The latter result implies that in more than 1/3 of the iterations no
bits flip.  This should be taken into account when evaluating the
fitness of the offspring, especially for expensive fitness functions.

In general, the probability that $i$ bits flip decreases exponentially
with $i$:
\[
 \Prob{X=i} = {n \choose i} \cdot \frac{1}{n^i} \cdot \left(1 - \frac{1}{n}\right)^{n-i} = \frac{1}{i!} \cdot \left( 1 - \frac{1}{n}\right)^{n-i} \approx \frac{1}{i! \cdot e}
 \]

 In the worst case all the bits may need to flip to reach the optimum
 in one step. This event has probability $1/n^n$.  Since, this is
 always a lower bound on the probability of reaching the optimum in
 each generation, by a simple waiting time argument an upper bound of
 $O(n^n)$ may be derived for the expected runtime of the (1+1)~EA on
 any pseudo-Boolean function $f : \{0,1\}^n \rightarrow \mathbb{R}$.
 It is simple to design an example trap function for which the
 algorithm actually requires $\Theta(n^n)$ expected steps to reach the
 optimum \cite{DrosteJansenWegener2002}.  This simple result further
 motivates why it is fundamental to gain a foundational understanding
 of how the runtime of stochastic search heuristics depends on the
 parameters of the problem and on the parameters of the algorithms.

\section{Test Functions}
\begin{figure}[t]
\begin{center}
        \includegraphics[width=7cm]{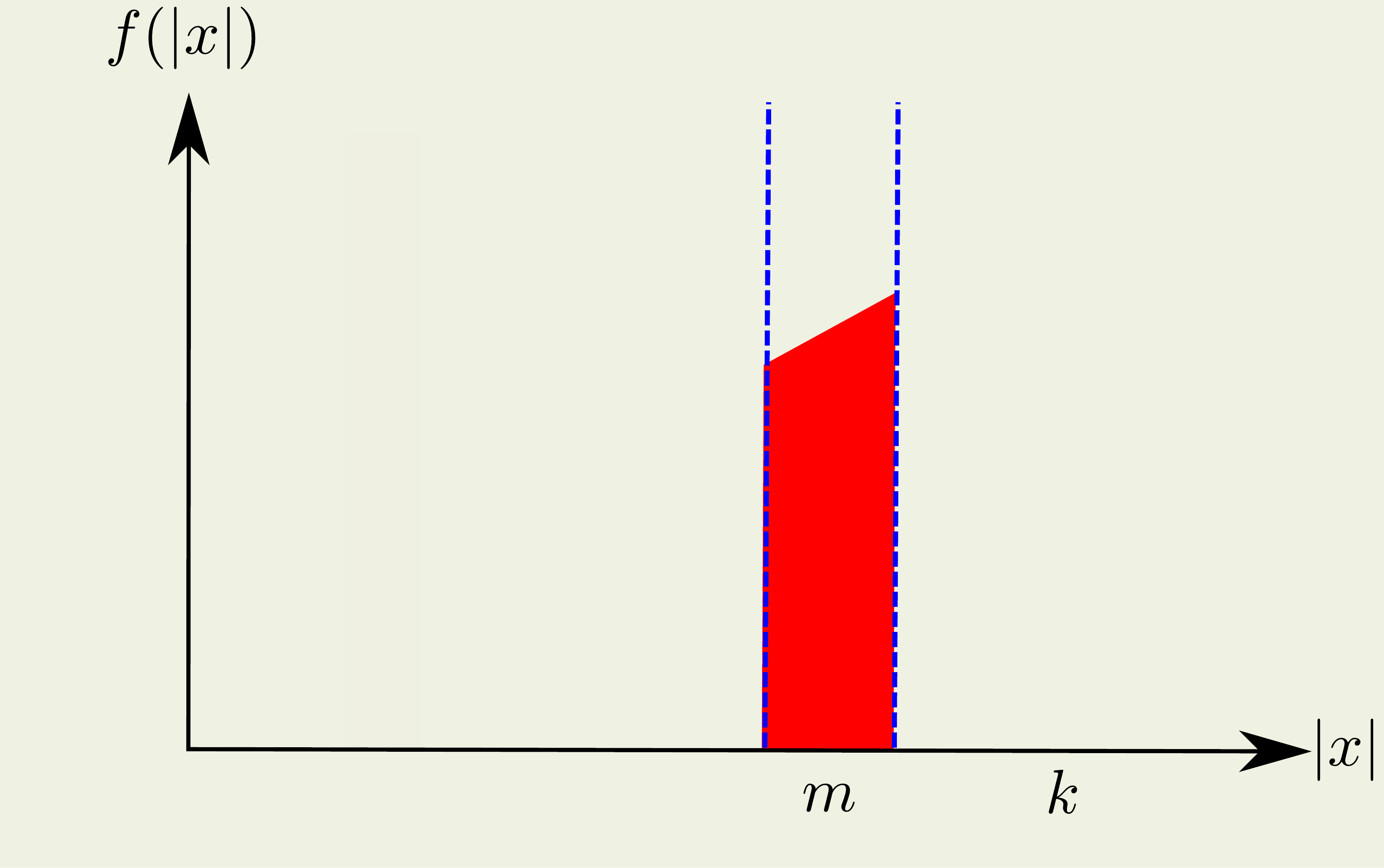}
      \end{center}
      \caption{A linear unitation block of length $m$ starting at
        position $m+k$ and ending at position $k$. A linear unitation
        block of length $n$ is the \ONEMAX function.}
\label{fig:linearblock}       
\end{figure}

Test functions are artificially designed to analyse the performance of
stochastic search algorithms when they face optimisation problems with
particular characteristics.  These functions are used to highlight
characteristics of function classes which may make the optimisation
process easy or hard for a given algorithm.  For this reason they are
often referred to as \emph{toy problems}. The analysis on test
functions of simple and well understood structure has allowed the
development of several general techniques for the analysis. Afterwards
these techniques have allowed to analyse the same algorithms for more
complicated problems with practical applications such as classical
combinatorial optimisation problems. Furthermore, in recent years
several standard techniques originally developed for simple algorithms
have been extended to allow the analyses of more realistic
algorithms. In this section the test functions that will be used as
example functions throughout the chapter are introduced.

The most popular test function is \ONEMAX(x) := $\sum_{i=1}^n x_i$
which simply counts the number of one-bits in the bitstring. The
global optimum is a bitstring of only one-bits.  \ONEMAX is the
easiest function with unique global optimum for the (1+1)~EA
\cite{EasiestGecco15}.

A particularly difficult test function for stochastic search
algorithms is the \emph{needle-in-a-haystack} function.  $\NEEDLE(x)
:= \prod_{i=1}^n x_i$ consists of a huge plateau of fitness value zero
apart from only one optimal point of fitness value one represented by
the bitstring of only one-bits.  This function is hard for search
heuristics because all the search points apart from the optimum have
the same fitness.  As a consequence, the algorithms cannot gather any
information about where the needle is by sampling search points.

Both \ONEMAX and \NEEDLE (as defined above) have the property that the
function values only depend on the number of ones in the bitstring.
The class of functions with this property is called \emph{functions of
  unitation}
\[
 \UNITATION(x) := f \left(\sum_{i=1}^n x_i\right)
\]
Throughout this chapter, functions of unitation will be used as a
general example class to demonstrate the use of the techniques that
will be introduced.  For simplicity of the analysis, the optimum is
assumed to be the bitstring of only one-bits.

For the analysis the function of unitation will be divided into three
different kinds of sub-blocks: linear blocks, gap blocks and plateau
blocks.  Each block will be defined by its length parameter $m$
(i.e. the number of bits in the block) and by its position parameter
$k$ (i.e., each block starts at bitstrings with $m+k$ zeroes and ends
at bitstrings with $k$ zeroes).  Given a unitation function it is
divided into sub-blocks proceeding from left to right from the
all-zeroes bitstring towards the all-ones bitstring.  If the fitness
increases with the number of ones, then a \emph{linear block} is
created. The linear block ends when the function value stops
increasing with the number of ones.
\[
\LINEAR (|x|)  =  \left\{ \begin{array}{ll}
    a|x| + b & \text{ if } k < n-|x| \leq k+m\\
           0 & \text{ otherwise.}
  \end{array} \right.       
\]
See Figure~\ref{fig:linearblock} for an illustration.

If the fitness function decreases with the number of ones, then a
\emph{gap block} is created. The gap block ends when the fitness value
reaches for the first time a higher value than the value at the
beginning of the block.
\[
\GAP (|x|)  =  \left\{ \begin{array}{ll}
    a & \text{ if }  n-|x| = k+m\\
           0 & \text{ otherwise.}
  \end{array} \right.       
\]
See Figure~\ref{fig:gapblock} for an illustration.

\begin{figure}[t]
\begin{center}
        \includegraphics[width=7cm]{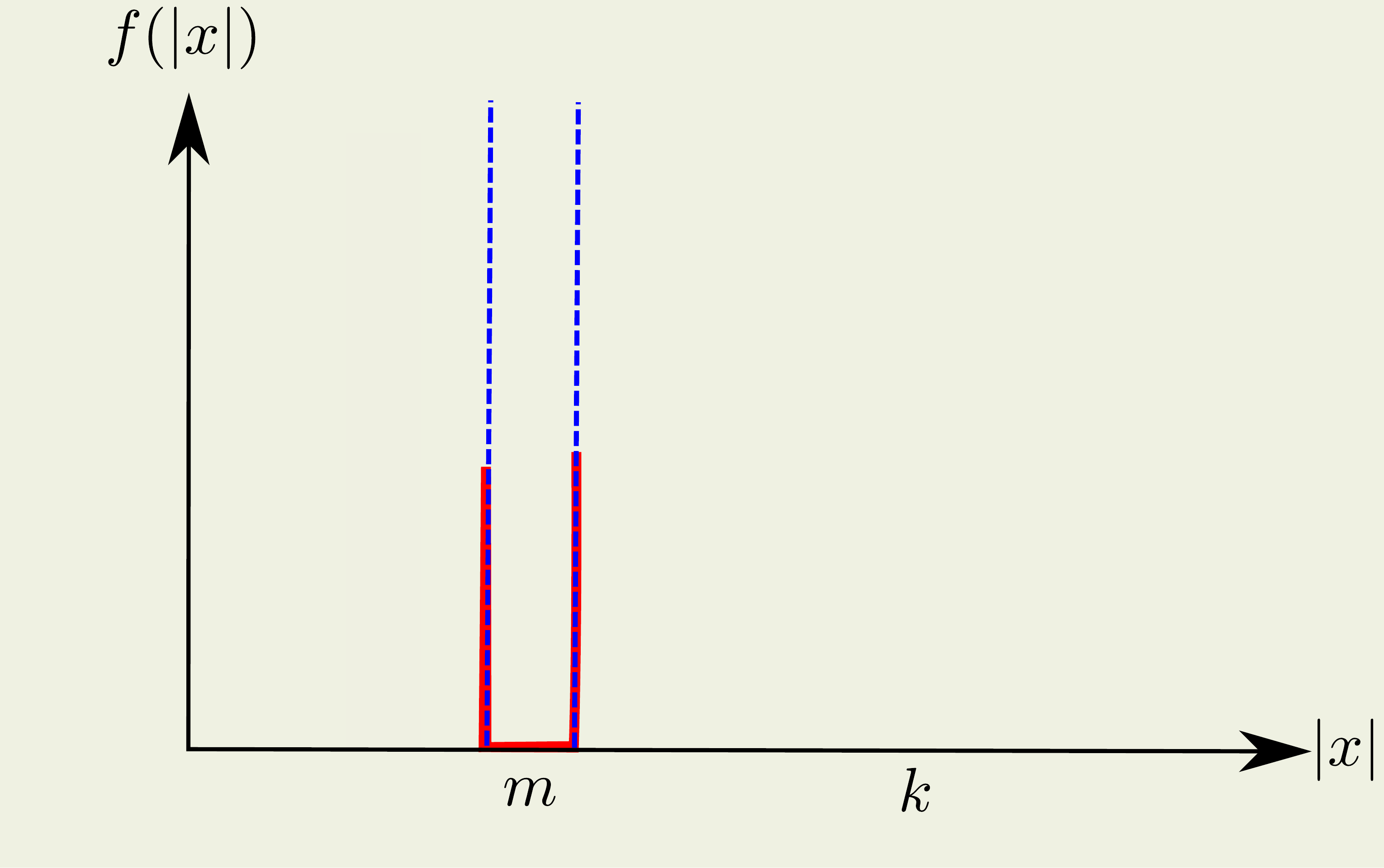}
      \end{center}
      \caption{A gap unitation block of length $m$ starting at
        position $m+k$ and ending at position $k$. A gap unitation
        block of length $n-1$ is the \NEEDLE function.}
\label{fig:gapblock}       
\end{figure}

If the fitness remains the same as the number of ones in the
bitstrings increases, then a \emph{plateau block} is created. The
block ends at the first point where the fitness value changes.
\[
\PLATEAU (|x|)  =  \left\{ \begin{array}{ll}
    a & \text{ if } k<  n-|x| \leq k+m\\
           0 & \text{ otherwise.}
  \end{array} \right.       
\]
See Figure~\ref{fig:plateaublock} for an illustration.

\begin{figure}[t]
\begin{center}
        \includegraphics[width=7cm]{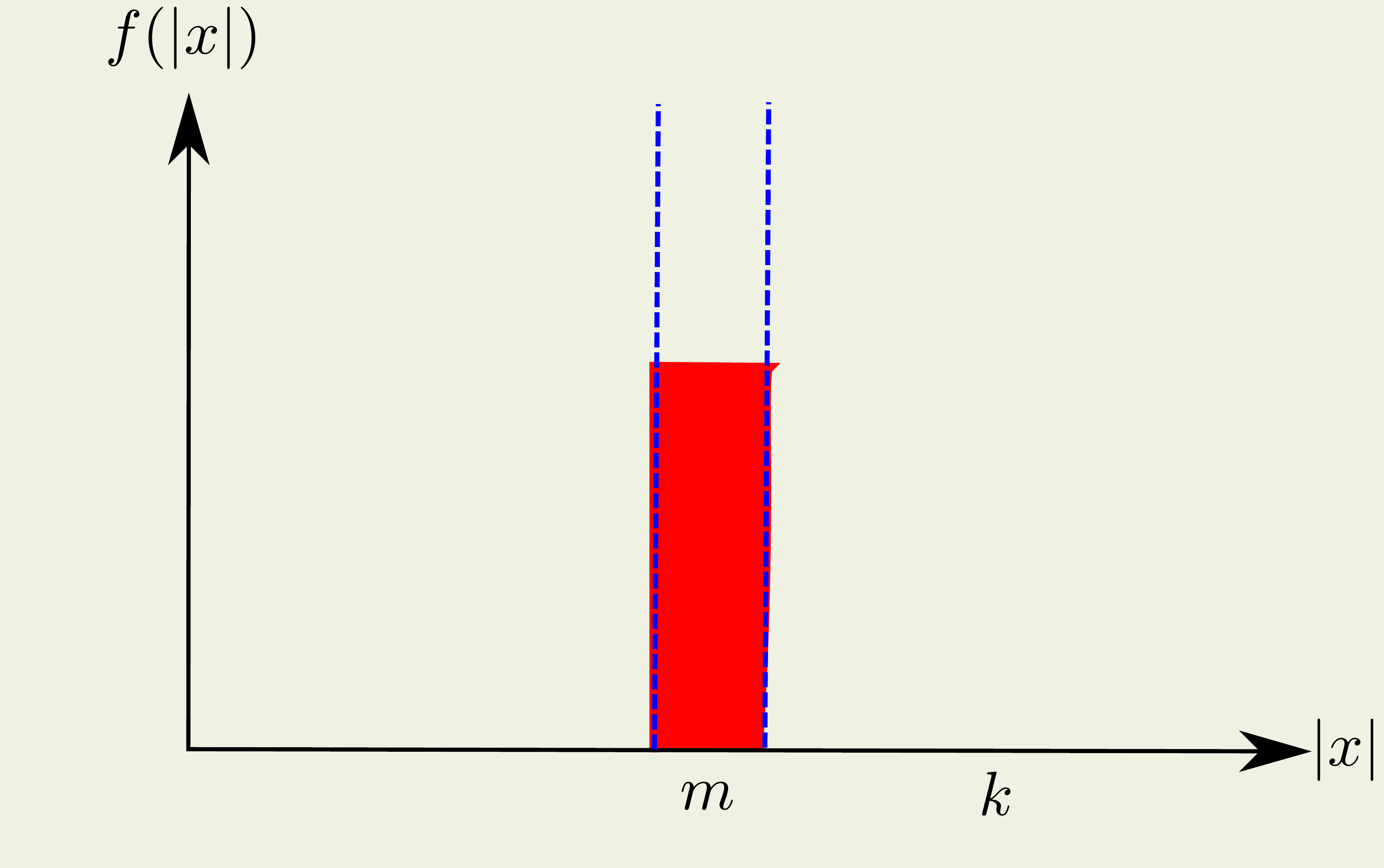}
      \end{center}
      \caption{A plateau unitation block of length $m$ starting at
        position $m+k$ and ending at position $k$.}
\label{fig:plateaublock}       
\end{figure}

By proceeding from left to right the whole search space is subdivided
into blocks. See Figure~\ref{gif:unitationblocks} for an illustration.
\begin{figure}
\centering
\begin{minipage}{.3\textwidth}
  \centering
  \includegraphics[width=1.0\linewidth]{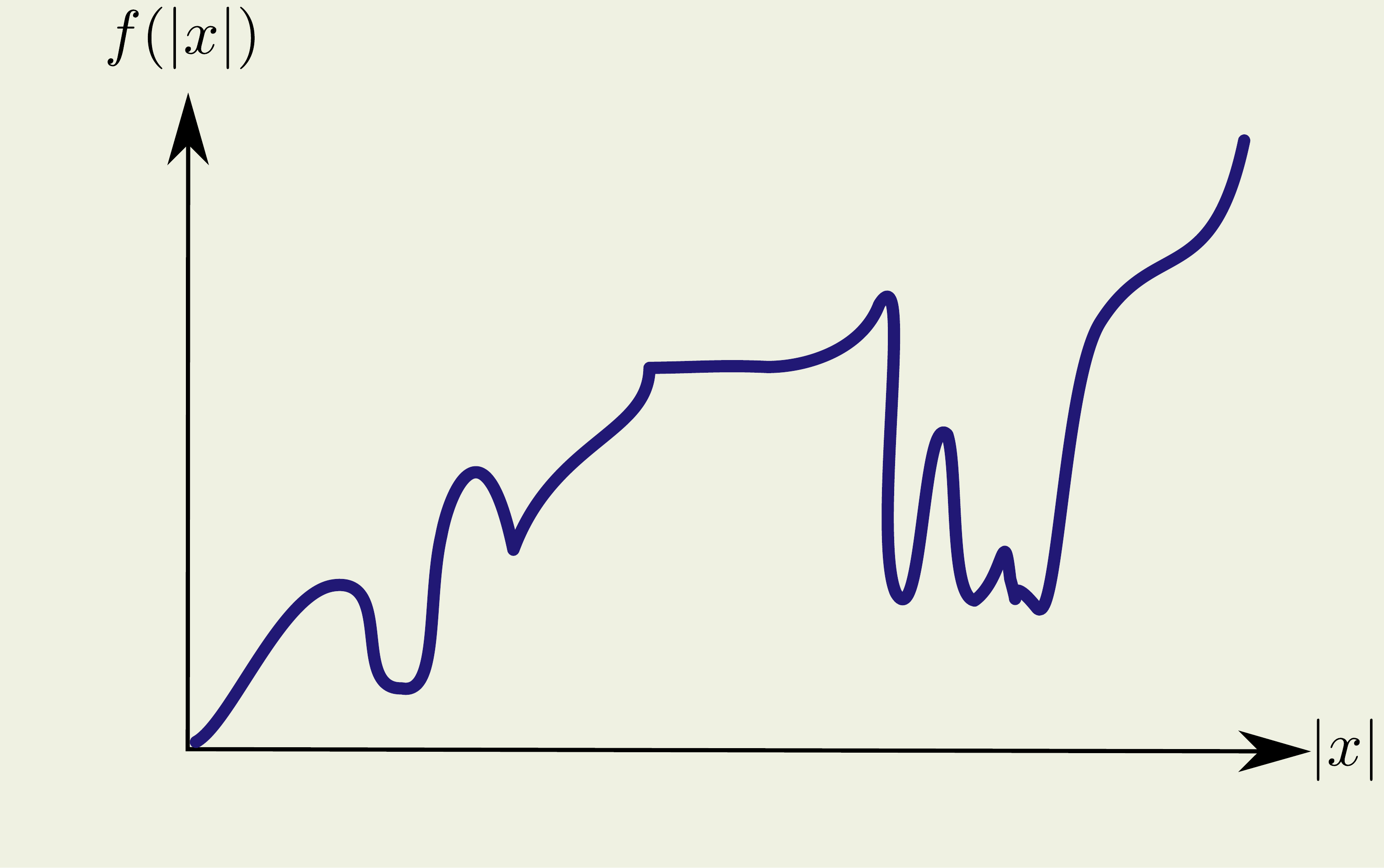}
\end{minipage}%
\begin{minipage}{.3\textwidth}
  \centering
  \includegraphics[width=1.0\linewidth]{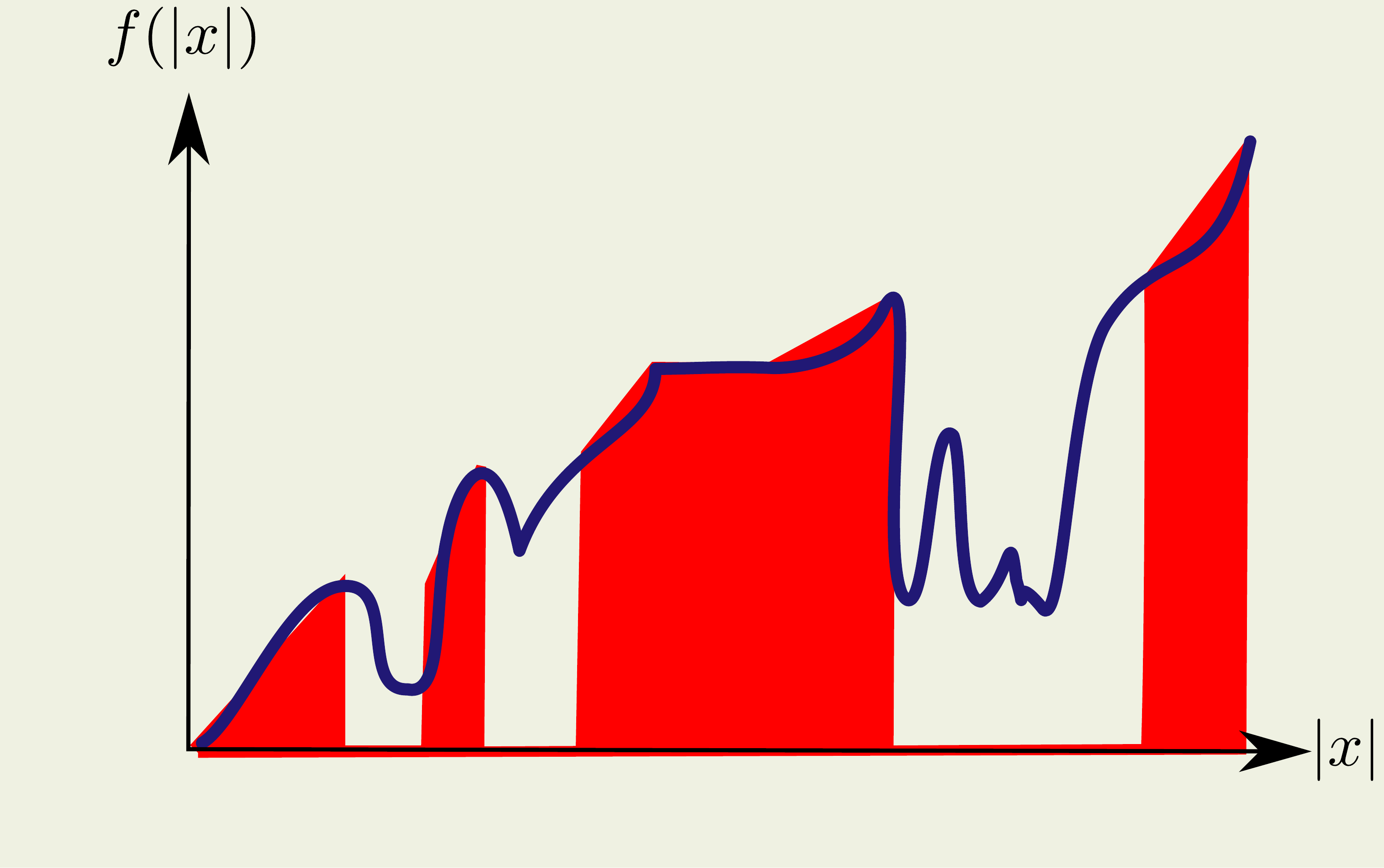}
\end{minipage}
\begin{minipage}{.3\textwidth}
  \centering
  \includegraphics[width=1.0\linewidth]{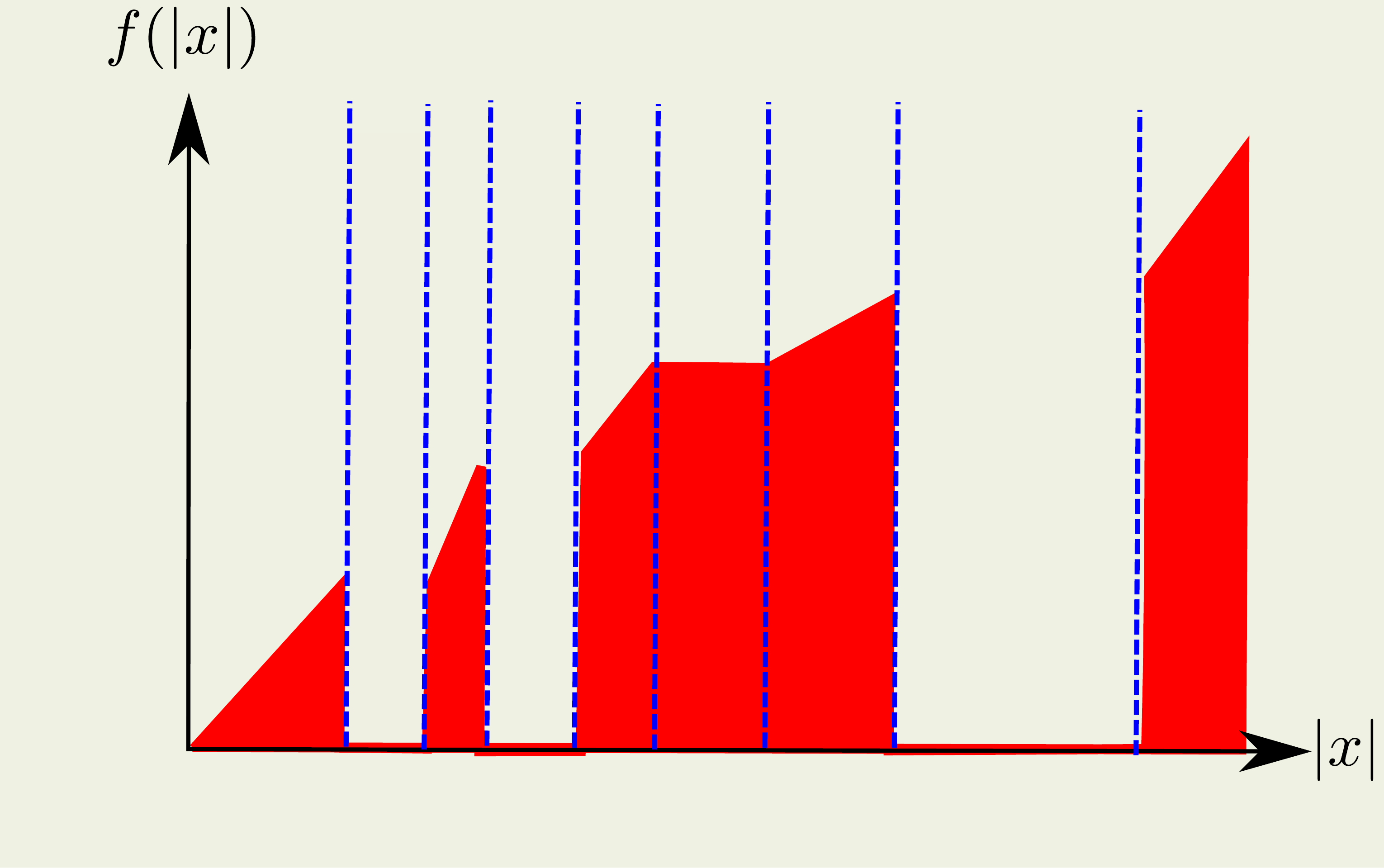}
\end{minipage}
\caption{An illustration of how the search space of a unitation
  function may be subdivided into blocks of the three kinds.}
\label{gif:unitationblocks}
\end{figure}
Let the unitation function be subdivided into $r$ sub-functions $f_1,
f_2, \dots f_r$, and let $T_i$ be the runtime for an elitist search
heuristic to optimise each sub-function $f_i$. Then by linearity of
expectation, an upper bound on the expected runtime of an elitist
stochastic search heuristic for the unitation function is:
 \[
  \expect{T} \leq \expect{\sum_{i=1}^r T_i} = \sum_{i=1}^r \expect{T_i} .
  \]

  Hence, an upper bound on the total runtime for the unitation
  function may be achieved by calculating upper bounds on the runtime
  for each block separately.  Once these are obtained, summing all the
  bounds yields an upper bound on the total runtime.  Attention needs
  to be put when calculating upper bounds on the runtime to overcome a
  plateau block when this is followed by a gap block because points
  straight after the end of the plateau will have lower fitness
  values, hence will not be accepted. In these special cases, the
  upper bound for the \PLATEAU block needs to be multiplied by the
  upper bound for the \GAP block to achieve a correct upper bound on
  the runtime to overcome both blocks.  In the remainder of the
  chapter upper and lower bounds for each type of block will be
  derived as example applications of the presented runtime analysis
  techniques.  The reader, will then be able to calculate the runtime
  of the (1+1)~EA and other evolutionary algorithms for any such
  unitation function.

  By simply using waiting time arguments it is possible to derive
  upper and lower bounds on the runtime of the (1+1)~EA for the \GAP
  block.  Assuming that the algorithm is at the beginning of the gap
  block then to reach the end it is \emph{sufficient} to flip $m$
  zero-bits into one-bits and leave the other bits unchanged.  On the
  other hand it is a \emph{necessary} condition to flip at least $m$
  zero-bits because all search points achieved by flipping less than
  $m$ zero-bits have a fitness value of zero and would not be accepted
  by selection. Given that there are $m+k$ zero-bits available at the
  beginning of the block, the following upper and lower bounds on the
  probability of reaching the end of the block follows
\[
 \left(\frac{m+k}{nm}\right)^m \frac{1}{e} \leq 
{m+k \choose m} \left( \frac{1}{n}\right)^m \frac{1}{e}
\leq {p} \leq {m+k \choose m} \left( \frac{1}{n}\right)^m 
{\leq \left(\frac{(m+k)e}{nm}\right)^m}.
\]
Here the outer inequalities are achieved by using
$\left(\frac{n}{k}\right)^k \leq {n\choose k}\leq
\left(\frac{en}{k}\right)^k$ for $k\geq 1$.  Then by simple waiting
time arguments, the expected time for the (1+1)~EA to optimise a \GAP
block of length $m$ and position $k$ is upper and lower bounded by
\[
 {\left( \frac{nm}{(m+k)e}\right)^m \leq }
{m+k \choose m}^{-1} n^m \leq {\expect{T}}
\leq en^m  {m+k \choose m}^{-1} {\leq  e  \left( \frac{nm}{m+k}\right)^m }.
\]

\section{Tail Inequalities}

%
%

\begin{figure}[t]
\begin{center}
        \includegraphics[width=9cm]{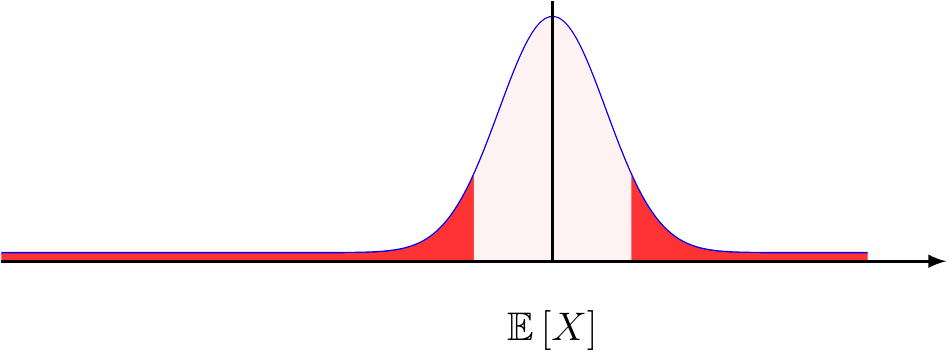}
      \end{center}
      \caption{The expectation of a random variable and its probability
  distribution. The tails are highlighted in red.}
\label{fig:distribution0}       
\end{figure}

The runtime of a stochastic search algorithm $A$ for a function
(class) $f$ is a random variable $T_{A,f}$ and the main goal of a
runtime analysis is to calculate its expectation $\expect{T_{A,f}}$.
Sometimes the expected runtime may be particularly large, but there
may also be a high probability that the actual optimisation time is
significantly lower.  In these cases a result about the \emph{success
  probability} within $t$ steps, helps considerably the understanding
of the algorithm's performance. In other occasions it may be
interesting to simply gain knowledge about the probability that the
actual optimisation time deviates from the expected runtime. In such
circumstances \emph{tail inequalities} turn out to be very useful
tools by allowing to obtain bounds on the runtime that hold with high
probability. An example of the expectation of a random variable and
its probability distribution are given in
Figure~\ref{fig:distribution0}.

Given the expectation of a random variable, which often may be
estimated easily, tail inequalities give bounds on the probability
that the actual random variable deviates from its expectation
\cite{MotwaniRaghavan1995,MitzenmacherUpfal2005}. The most simple tail
inequality is Markov's inequality. Many strong tail inequalities are
derived from Markov's inequality.

\begin{theorem}[Markov's Inequality]
  Let $X$ be a random variable assuming only non-negative values. Then for all $t \in \mathbb{R}^+$,
 \begin{align*}
   \Pr(X \ge t) \le \frac {\expect{X}} t.
 \end{align*}
\end{theorem}

The power of the inequality is that no knowledge about the random
variable is required apart from it being non-negative.

Let $X$ be a random variable indicating the number of bits flipped in
one iteration of the (1+1)~EA.  As seen in the previous section, one
bit is flipped per iteration in expectation, i.e., $\expect{X}=1$.
One may wonder what is the probability that more than one bit is
flipped in one time step.  A straightforward application of Markov's
Inequality reveals that in at least half of the iterations either one
bit is flipped or none:
\[
 \Prob{X \geq 2} \leq \frac{\expect{X}}{2} = \frac{1}{2}
\]
Similarly, one may want to gain some information on how many
\emph{ones} are contained in the bitstring at initialisation, given
that in expectation there are $\expect{X} = n/2$ (here $X$ is a
binomial random variable with parameters $n$ and $p=1/2$).  An
application of Markov's inequality yields that the probability of
having more than $(2/3)n$ ones at initialisation is bounded by
\begin{align}\label{eq:1}
  \Prob{X \geq (2/3)n} \leq \frac{\expect{X}}{(2/3)n} = \frac{n/2}{(2/3)n} = 3/4
\end{align}
Since $X$ is binomially distributed it is reasonable to expect that,
for large enough $n$, the actual number of obtained ones at
initialisation would be more concentrated around the expected
value. In particular while the bound is obviously correct, the
probability that the initial bitstring has more than $(2/3)n$ ones is
much smaller than $3/4$.  However, to achieve such a result more
information about the random variable should be required by the tail
inequality (i.e., that it is binomially distributed).  An important
class of tail inequalities used in the analysis of stochastic search
heuristics are Chernoff bounds.

\begin{theorem}[Chernoff Bounds]
Let $X_1,X_2, \dots X_n$  be independent random variables taking values in $\{0,1\}$.
Define $X= \sum_{i=1}^n X_i$, which has expectation $E(X) = \sum_{i=1}^n \Pr(X_i=1)$.
    \begin{itemize}
    \item[(a)]   $\Pr(X \leq (1- \delta)\expect{X}) \le e^{\frac{-\expect{X}\delta^2}{2}}$
      for $0 \le \delta \le 1$.
    \item[(b)]
      $\Pr(X > (1+ \delta)\expect{X}) \le \left( \frac {e^\delta}
        {(1+\delta)^{1+\delta}} \right)^{\expect{X}}$  for $\delta  > 0$.
    \end{itemize}
  \end{theorem}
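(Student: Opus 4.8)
The plan is to use the standard moment-generating-function method: apply Markov's inequality (already established above) to the random variable $e^{tX}$ for a judiciously chosen parameter $t>0$, and then exploit the independence of the $X_i$ to factor the moment generating function. Throughout, write $\mu := \expect{X} = \sum_{i=1}^n p_i$ with $p_i := \Pr(X_i = 1)$, and use the elementary inequality $1 + x \le e^x$, valid for every real $x$. I may assume $\mu > 0$, the case $\mu = 0$ being trivial since then $X = 0$ almost surely.

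For part (b), I would start from the observation that, for any $t > 0$, the map $x \mapsto e^{tx}$ is increasing, so $\Pr(X > (1+\delta)\mu) = \Pr\!\left(e^{tX} > e^{t(1+\delta)\mu}\right) \le e^{-t(1+\delta)\mu}\,\expect{e^{tX}}$ by Markov's inequality. Independence gives $\expect{e^{tX}} = \prod_{i=1}^n \expect{e^{tX_i}}$, and for each Bernoulli factor $\expect{e^{tX_i}} = 1 + p_i(e^t-1) \le e^{p_i(e^t-1)}$, so that $\expect{e^{tX}} \le e^{(e^t-1)\mu}$. Combining, $\Pr(X > (1+\delta)\mu) \le \exp\!\left(\bigl((e^t-1) - t(1+\delta)\bigr)\mu\right)$. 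Minimising the exponent over $t > 0$ gives $t = \ln(1+\delta)$, which is strictly positive exactly because $\delta > 0$; substituting this value yields the exponent $\bigl(\delta - (1+\delta)\ln(1+\delta)\bigr)\mu$, i.e. the claimed bound $\left( \frac{e^\delta}{(1+\delta)^{1+\delta}} \right)^{\mu}$.

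For part (a), the symmetric argument uses $e^{-tX}$: for $t > 0$, Markov's inequality gives $\Pr(X \le (1-\delta)\mu) = \Pr\!\left(e^{-tX} \ge e^{-t(1-\delta)\mu}\right) \le e^{t(1-\delta)\mu}\,\expect{e^{-tX}}$, and the same factorisation yields $\expect{e^{-tX}} = \prod_{i=1}^n \bigl(1 - p_i(1 - e^{-t})\bigr) \le e^{-(1-e^{-t})\mu}$. Minimising the resulting exponent over $t > 0$ gives $t = \ln\frac{1}{1-\delta}$ (positive for $0 < \delta < 1$) and the intermediate bound $\left( \frac{e^{-\delta}}{(1-\delta)^{1-\delta}} \right)^{\mu}$; the case $\delta = 0$ is immediate since the right-hand side is then $1$.

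The one step that is not mechanical bookkeeping is converting this intermediate bound into the cleaner form stated in part (a), and I expect it to be the main obstacle. It suffices to prove $\left( \frac{e^{-\delta}}{(1-\delta)^{1-\delta}} \right)^{\mu} \le e^{-\mu\delta^2/2}$, equivalently (after dividing the exponents by $\mu>0$) that $g(\delta) := (1-\delta)\ln(1-\delta) + \delta - \tfrac{\delta^2}{2} \ge 0$ on $[0,1)$. This follows from a short Taylor-type argument: $g(0) = 0$; $g'(\delta) = -\ln(1-\delta) - \delta$ so $g'(0) = 0$; and $g''(\delta) = \frac{\delta}{1-\delta} \ge 0$ on $[0,1)$. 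Hence $g'$ is non-decreasing and therefore non-negative, so $g$ is non-decreasing and therefore non-negative, which is exactly what is needed.
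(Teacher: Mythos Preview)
Your proof is correct and follows the standard moment-generating-function route. However, the paper does not actually prove this theorem: it simply states the Chernoff bounds as a known tool (with a pointer to standard references on randomised algorithms) and then immediately applies them to bound the probability that the initial bitstring has more than $(2/3)n$ one-bits. So there is nothing in the paper to compare your argument against; you have supplied a full proof where the authors chose to quote the result.

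One small gap worth patching: the statement allows $\delta=1$ in part~(a), but your optimisation step sets $t=\ln\frac{1}{1-\delta}$, which is undefined there, and your analysis of $g$ is carried out on $[0,1)$. The endpoint is easy to handle directly (e.g.\ $\Pr(X=0)=\prod_i(1-p_i)\le e^{-\mu}\le e^{-\mu/2}$), but as written you have not covered it.
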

  
  An application of Chernoff bounds reveals that the probability that
  the initial bitstring has more than $(2/3)n$ one-bits is
  exponentially small in the length of the bitstring.  Let $X=
  \sum_{i=1}^n X_i$ be the random variable summing up the random
  values $X_i$ of each of the $n$ bits.  Since each bit is initialised
  with probability $1/2$, it holds that $\Pr(X_i=1)=1/2$ and
  $\expect{X} = n/2$.  By fixing $\delta = 1/3$ it follows that $(1+
  \delta)\expect{X}= (2/3)n$ and finally by applying inequality (b),
\[
\Pr(X > (2/3)n) \le \left(\frac{e^{1/3}}{(4/3)^{4/3}}\right)^{n/2} < \left(\frac{29}{30}\right)^{n/2} 
\]
In fact an exponentially small probability of deviating from $n/2$ by
a constant factor of the search space $c/n$ for any constant $c>0$ may
easily be obtained by Chernoff bounds.

\section{Artificial Fitness Levels (AFL)} \label{sec:fitnesslevels}

The artificial fitness levels technique is a very simple method to
achieve upper bounds on the runtime of elitist stochastic optimisation
algorithms.  Albeit its simplicity, it often achieves very good bounds
on the runtime.

The idea behind the method is to divide the search space of size $2^n$
into $m$ disjoint fitness-based partitions $A_1, \dots A_m$ of
increasing fitness such that $f(A_i) < f(A_j)$ $\forall i < j$.  The
union of these partitions should cover the whole search space and the
level of highest fitness $A_m$ should contain the global optimum (or
all global optima if there is more than one).

\begin{definition}\label{def:f-based-partition}
  A tuple $(A_1,A_2,\ldots,A_m)$ is an {\bf $f$-based partition} of $f:\mathcal{X}\rightarrow\mathbb{R}$ if
         \begin{enumerate}
         \item $A_1\cup A_2\cup \cdots\cup A_m = \mathcal{X}$
         \item $A_i\cap A_j=\emptyset$ for $i\neq j$
         \item $f(A_1)< f(A_2)< \cdots < f(A_m)$
         \item $f(A_m) = \max_x f(x)$
         \end{enumerate}    
\end{definition}

  \begin{figure}[t]
\centering

\begin{tikzpicture}[scale=0.9,transform shape]

  \draw[->] (-0.25,0) -- node[left] {Fitness} (-0.25,6);
  \clip[draw] (2,0) .. controls (0,3) .. (2,6) .. controls (4,3) .. (2,0);
  \draw[fill=blue!20] (2,0) .. controls (0,3) .. (2,6) .. controls (4,3) .. (2,0);

    \draw[fill=blue!20] (0.0,0.0) rectangle (4.0,1.0);
    \draw[fill=blue!30] (0.0,1.0) rectangle (4.0,2.0);
    \draw[fill=blue!20] (0.0,2.0) rectangle (4.0,3.0);
    \draw[fill=blue!30] (0.0,3.0) rectangle (4.0,4.0);
    \draw[fill=blue!20] (0.0,4.0) rectangle (4.0,5.0);
    \draw[fill=blue!30] (0.0,5.0) rectangle (4.0,6.0);

    \node at (2,0.7) {$A_1$};
    \node at (2,1.5) {$A_2$};
    \node at (2,2.5) {$A_3$};
    \node at (2,3.5) {$\vdots$};
    \node at (2,4.5) {$A_{m-1}$};
    \node at (2,5.3) {$A_m$};
    \draw (2,0) .. controls (0,3) .. (2,6) .. controls (4,3) .. (2,0);
 
\end{tikzpicture}

\caption{A partition of the search space satisfying the conditions of an $f$-based partition.}
\label{fig:fitnesslevels}
\end{figure}
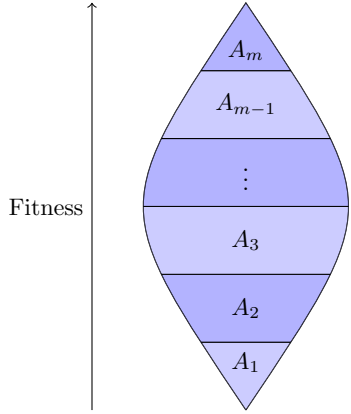

For functions of unitation, a natural way of defining a fitness-based
partition is to divide the search space into $n+1$ levels, each
defined by the number of ones in the bitstring. For the \ONEMAX
function, where fitness increases with the number of ones in the
bitstring, the fitness levels would be naturally defined as $A_i :=
\left\{ x\in \{0,1\}^n \mid \ONEMAX(x)=i \right\}$.

\subsection{AFL - Upper Bounds} 
 
Given a fitness-based partition of the search space, it is obvious
that an elitist algorithm using only one individual will only accept
points of the search space that belong to levels of higher or equal
fitness to the current level. Once a new fitness level has been
reached, the algorithm will never return to previous levels.  This
implies that each fitness level has to be left at most once by the
algorithm. Since in the worst case all fitness levels are visited, the
sum of the expected times to leave all levels is an upper bound on the
expected time to reach the global optimum.  The artificial fitness
levels method simplifies this idea by only requiring a lower bound
$s_i$ on the probability of leaving each level $A_i$ rather than
asking for the exact probabilities to leave each level.

\begin{theorem}[Artificial Fitness Levels] \label{AFL} Let
  $f:\mathcal{X}\rightarrow\mathbb{R}$ be a fitness function, $A_1
  \dots A_{m}$ a fitness-based partition of $f$ and $s_1 \dots
  s_{m-1}$ be lower bounds on the corresponding probabilities of
  leaving the respective fitness levels for a level of better
  fitness. Then the expected runtime of an elitist algorithm using a
  single individual is $\expect{T_{A,f}} \leq \sum_{i=1}^{m-1} 1/s_i$.
\end{theorem}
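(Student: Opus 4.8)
The plan is to set up a partition of the run into phases according to the current fitness level, and to bound the expected duration of each phase by a geometric waiting-time argument. First I would observe that, because the algorithm is elitist and uses a single individual, the fitness of the current search point $x^{(t)}$ is non-decreasing in $t$. Hence the sequence of fitness levels $\ell(t)$, defined by $x^{(t)} \in A_{\ell(t)}$, is non-decreasing as well. This means the algorithm visits an increasing subsequence of levels, and once it leaves level $A_i$ (meaning it moves to some $A_j$ with $j > i$, which by definition of the partition is exactly a move to strictly higher fitness) it never returns to $A_i$ or any lower level. Therefore each level among $A_1, \dots, A_{m-1}$ is ``active'' during at most one contiguous block of iterations, and the total runtime $T_{A,f}$ is at most $\sum_{i=1}^{m-1} T_i$, where $T_i$ is the number of iterations spent in level $A_i$ (with $T_i = 0$ if that level is skipped entirely).

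Next I would bound $\expect{T_i}$. Condition on the event that the algorithm ever enters $A_i$. While the current point lies in $A_i$, each iteration independently produces, with probability at least $s_i$, an offspring of strictly higher fitness, which elitist selection then accepts, ending the stay in $A_i$. (Here I use that $s_i$ is a lower bound on the probability of leaving $A_i$ for a better level, uniformly over all points of $A_i$ — this uniformity is what the statement implicitly requires, so that the per-step success probability is at least $s_i$ regardless of which point of $A_i$ we currently occupy.) Thus the number of iterations spent in $A_i$ is stochastically dominated by a geometric random variable with success probability $s_i$, giving $\expect{T_i \mid \text{enter } A_i} \leq 1/s_i$, and hence $\expect{T_i} \leq \Prob{\text{enter } A_i} \cdot 1/s_i \leq 1/s_i$ unconditionally (when $A_i$ is never entered, $T_i = 0$).

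Finally I would combine the two steps via linearity of expectation:
\[
\expect{T_{A,f}} \leq \sum_{i=1}^{m-1} \expect{T_i} \leq \sum_{i=1}^{m-1} \frac{1}{s_i},
\]
noting that level $A_m$ contributes nothing since reaching it already means the optimum has been found. The main subtlety — the ``hard part'' — is the uniformity issue hidden in the phrase ``probability of leaving the respective fitness level'': the geometric domination argument only works if $s_i$ lower-bounds the leaving probability from \emph{every} point of $A_i$, not merely from the first point entered, because elitism may let the algorithm improve \emph{within} $A_i$ before escaping it, and we must not assume the per-step escape probability stays at least $s_i$ without this hypothesis. I would make this assumption explicit and then the rest is the routine waiting-time computation sketched above.
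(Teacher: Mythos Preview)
Your proposal is correct and follows essentially the same approach as the paper: the paper's justification (given informally in the paragraph preceding the theorem rather than in a proof environment) is exactly that elitism makes the level index non-decreasing, each level is left at most once, and summing the waiting times $1/s_i$ bounds the expected runtime. Your version is simply a more careful formalisation of that argument via geometric domination and linearity of expectation, and your remark on the uniformity of $s_i$ over all points of $A_i$ makes explicit an assumption the paper leaves implicit.
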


The artificial fitness level method will now be applied to derive an
upper bound on the expected runtime of (1+1)~EA for the \ONEMAX
function.  Afterwards, the bound will be generalised to general linear
blocks of unitation.

\begin{theorem} \label{th:onemax}
 The expected runtime of the (1+1)~EA on \ONEMAX is $O(n \ln n)$.
\end{theorem}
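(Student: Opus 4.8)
The plan is to apply the Artificial Fitness Levels theorem (Theorem~\ref{AFL}) with the natural fitness-based partition for \ONEMAX, namely $A_i := \{x \in \{0,1\}^n \mid \ONEMAX(x) = i\}$ for $i = 0, 1, \dots, n$, so that there are $m = n+1$ levels and the optimum (the all-ones string) is the unique element of $A_n$. The theorem then gives $\expect{T} \le \sum_{i=0}^{n-1} 1/s_i$, where $s_i$ is a lower bound on the probability that a single mutation step of the (1+1)~EA moves from a point with exactly $i$ ones to a point of strictly higher fitness, i.e. strictly more ones.

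The key step is to produce a good lower bound $s_i$. From a string with $i$ ones there are $n-i$ zero-bits, and flipping any one of them while leaving all other $n-1$ bits unchanged produces a string with $i+1$ ones, which has higher \ONEMAX value and is therefore accepted. Each such specific one-bit-flip event has probability $(1/n)(1-1/n)^{n-1} \ge 1/(en)$, using the bound $(1-1/n)^{n-1} \ge 1/e$ established earlier in the excerpt. Since there are $n-i$ disjoint such events (flipping a distinct zero-bit), we get $s_i \ge (n-i)/(en)$.

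Substituting into the AFL bound yields
\[
\expect{T} \le \sum_{i=0}^{n-1} \frac{1}{s_i} \le \sum_{i=0}^{n-1} \frac{en}{n-i} = en \sum_{j=1}^{n} \frac{1}{j} = en \cdot H_n,
\]
where $H_n$ is the $n$-th harmonic number. Using the standard estimate $H_n \le \ln n + 1 = O(\ln n)$ gives $\expect{T} \le en(\ln n + 1) = O(n \ln n)$, as claimed.

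I do not expect any serious obstacle here: the partition is the one already flagged in the text as ``natural'' for \ONEMAX, the per-level success probability is a one-line counting argument combined with the already-stated inequality $(1-1/n)^{n-1} \ge 1/e$, and the final sum is a harmonic series. The only point requiring a little care is the reindexing of the sum (as $i$ runs from $0$ to $n-1$, the term $n-i$ runs over $n, n-1, \dots, 1$, giving exactly $H_n$) and citing the bound $H_n = O(\ln n)$; neither is difficult. It is worth noting in passing that this argument in fact bounds the time for the (1+1)~EA on \emph{any} linear unitation block, since the same reasoning applies verbatim within the block, which is presumably why the next step in the chapter generalises it.
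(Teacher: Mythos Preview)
Your proposal is correct and matches the paper's proof essentially line for line: the same fitness-based partition $A_i = \{x : \ONEMAX(x)=i\}$, the same lower bound $s_i \geq (n-i)/(en)$ obtained from flipping one zero-bit and leaving the rest unchanged, and the same harmonic-sum evaluation. Your additional remarks on the reindexing and the forthcoming generalisation to linear blocks are accurate and align with what the paper does next.
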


\begin{proof}
  The artificial fitness levels method will be applied to the $n+1$
  partitions defined by the number of ones in the bitstring, i.e.,
  $A_i := \left\{ x\in \{0,1\}^n \mid \ONEMAX(x)=i \right\}$. This
  means that all bitstrings with $i$ ones and $n-i$ zeroes belong to
  fitness level $A_i$.  For each level $A_i$, the method requires a
  lower bound on the probability of reaching any level $A_j$ where
  $j>i$.  To reach a level of higher fitness it is necessary to
  increase the number of ones in the bitstring.  However, it is
  sufficient to flip a \emph{zero} into a \emph{one} and leave the
  remaining bits unchanged.  Since the probability of flipping a bit
  is $1/n$ and there are $n-i$ zeroes that may be flipped, a lower
  bound on the probability to reach a level of higher fitness from
  level $A_i$ is:
\[
 s_i \geq (n-i) \cdot \frac{1}{n} \cdot \bigg(1 - \frac{1}{n}\bigg)^{n-1} \geq \frac{n-i}{en}
\]
where $(1- 1/n)^{n-1}$ is the probability of leaving $n-1$ bits unchanged and the inequality follows because $(1-1/n)^{n-1} \geq 1/e$ for all $n \in \mathbb{N}$.

Then by the artificial fitness levels method (Theorem~\ref{AFL}),
\[
 \expect{T_{(1+1)~EA, \ONEMAX}} \leq \sum_{i=0}^{m-1} 1/s_i \leq \sum_{i=0}^{n-1} \frac{en}{n-i} = en \sum_{i=1}^{n} \frac{1}{i} = O(n \ln n).
\]
\end{proof}

\begin{theorem}
  The expected runtime of the (1+1)-EA for a \emph{linear block} of
  length $m$ ending at position $k$ is $O(n \ln ((m+k)/k))$.
\end{theorem}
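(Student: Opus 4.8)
The plan is to imitate the proof of Theorem~\ref{th:onemax}, replacing \ONEMAX by the restriction of the fitness to the linear block and re-indexing the artificial fitness levels by the number of \emph{zeros} still present in the current search point. Concretely, I would partition the relevant part of the search space into the $m$ levels $B_j := \{x \mid n-|x| = j\}$ for $j = k+1,\dots,k+m$, together with a final level collecting every search point with at most $k$ zeros, i.e.\ every point at or beyond the end of the block. Since $a>0$ the block fitness $a|x|+b$ is strictly increasing as the number of zeros decreases, so this is a legitimate $f$-based partition of the block in the sense of Definition~\ref{def:f-based-partition}, with the end of the block playing the role of the optimum of the sub-function; by the block-decomposition argument (linearity of expectation over $T_1,\dots,T_r$) an upper bound on the time to traverse this partition is an upper bound on the expected time to overcome the block.

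Next I would lower-bound the probability $s_j$ of leaving level $B_j$ for a level of better fitness. Exactly as in the \ONEMAX analysis, it suffices for the (1+1)~EA to flip one of the $j$ present zero-bits to a one and leave the other $n-1$ bits unchanged; such a step is accepted because it strictly increases the fitness inside the block. Hence $s_j \ge j\cdot\frac1n\cdot(1-\tfrac1n)^{n-1}\ge \tfrac{j}{en}$. Applying Theorem~\ref{AFL} and comparing the harmonic tail with an integral then gives
\[
\expect{T} \;\le\; \sum_{j=k+1}^{k+m} \frac{1}{s_j} \;\le\; \sum_{j=k+1}^{k+m} \frac{en}{j} \;=\; en\sum_{j=k+1}^{k+m}\frac1j \;\le\; en\int_{k}^{k+m}\frac{\mathrm{d}x}{x} \;=\; en\ln\!\Bigl(\frac{m+k}{k}\Bigr),
\]
which is $O\!\bigl(n\ln((m+k)/k)\bigr)$ as claimed. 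As a sanity check, taking $k=0$ and $m=n$ degenerates the block to \ONEMAX, and replacing the integral estimate by $\sum_{j=1}^n 1/j = O(\ln n)$ recovers the $O(n\ln n)$ bound of Theorem~\ref{th:onemax}.

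I expect no genuine obstacle in the computation; the only points needing care are (i) verifying that the partition restricted to the block really satisfies the hypotheses of the artificial-fitness-levels theorem — in particular that elitism, together with $a>0$, prevents the algorithm from ever re-entering a level with more zeros once it has left it, so that each level is left at most once — and (ii) recording the (implicit) assumption $k\ge 1$ needed for the integral bound $\int_k^{k+m}\mathrm{d}x/x=\ln((m+k)/k)$ to be finite, with the edge case $k=0$ understood as the \ONEMAX-type estimate above. Everything else is a direct transcription of the earlier argument.
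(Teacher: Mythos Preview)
Your proposal is correct and follows essentially the same route as the paper: both partition the block by the number of zeros $j\in\{k+1,\dots,k+m\}$, bound $s_j\ge j/(en)$ via the one-bit-flip argument, and then estimate $\sum_{j=k+1}^{k+m}1/j\le\ln((m+k)/k)$. The only cosmetic difference is that the paper writes this last step as a difference of two harmonic partial sums rather than an explicit integral comparison; your added remarks on the $k\ge 1$ caveat and the \ONEMAX sanity check are fine and do not diverge from the paper's argument.
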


\begin{proof}
  Apply the artificial fitness levels method where each partition
  $A_i$ consists of the bitstrings in the block with $i$ zeroes.  Then
  the probability of leaving a fitness level is bounded by $s_i \geq
  i/n \cdot (1-1/n)^{n-1} \geq i/en$.  Given that at most $m$ fitness
  levels need to be left and that the block starts at position $m+k$
  and ends at position $k$, by Theorem~\ref{AFL} the expected runtime
  is:
\[
 \expect{T} \leq \sum_{i=k+1}^{k+m} \frac{en}{i} \leq en \sum_{i=k+1}^{k+m} \frac{1} {i}
\leq e n  \left(\sum_{i=1}^{k+m} \frac{1} {i} - \sum_{i=1}^{k} \frac{1} {i} \right)  
\leq e n  \ln \left(\frac{m+k}{k}\right)
\]
\end{proof}

\subsection{AFL - Lower Bounds}

Recently Sudholt introduced an artificial fitness levels method to
obtain lower bounds on the runtime of stochastic search algorithms
\cite{Sudholt2010}.  Since lower bounds are aimed for, apart from the
probabilities of leaving each fitness level, the method needs to also
take into account the probability that some levels may be skipped by
the algorithm.

\begin{theorem} \label{AFL-LowerBounds} Consider a fitness function
  $f:\mathcal{X}\rightarrow\mathbb{R}$ and $A_1 \dots A_m$ a
  fitness-based partition of $f$.  Let $u_i$ be the probability of
  starting in level $A_i$, $s_i$ be an upper bound on the probability
  of leaving $A_i$ and $p_{i,j}$ be an upper bound on the probability
  of jumping from level $A_i$ to level $A_j$.  If there exists some
  $0< \chi \leq 1$ such that for all $j>i$
\[
  p_{i,j}  \geq \chi \cdot {\sum_{k=j}^{m-1} p_{i,k}},
\]
then the expected runtime of an elitist algorithm using a single individual is 
\[
 \expect{T_{A,f}} \geq \chi \cdot \sum_{i=1}^{m-1} u_i \sum_{j=i}^{m-1} \frac{1}{s_j}
\]
\end{theorem}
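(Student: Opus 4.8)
The plan is to write the expected runtime as a sum, over the non-optimal fitness levels, of the expected number of iterations the single individual spends in each level, and to bound each such term from below. Let $T_j$ denote the random number of iterations in which the current search point lies in $A_j$. Since the algorithm is elitist, a level is never re-entered once it has been left, so the iterations spent in $A_j$ form a (possibly empty) contiguous block and $T = \sum_{j=1}^{m-1} T_j$. Taking expectations, $\expect{T} = \sum_{j=1}^{m-1} \expect{T_j} = \sum_{j=1}^{m-1} \Prob{A_j \text{ is visited}}\cdot \expect{T_j \mid A_j \text{ is visited}}$, so it suffices to bound the two factors separately.

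First I would show $\expect{T_j \mid A_j \text{ is visited}} \geq 1/s_j$. Conditioned on $A_j$ being entered, the individual sits at some point of $A_j$, and by the Markov property the future depends only on that point; from every point of $A_j$ an iteration leaves $A_j$ with probability at most $s_j$ (and, by elitism, only towards a strictly better level), so the number of iterations until $A_j$ is abandoned stochastically dominates a geometric random variable with success probability $s_j$ and thus has expectation at least $1/s_j$. Averaging over the entry point preserves this bound. The task is then reduced to proving $\Prob{A_j \text{ is visited}} \geq \chi \sum_{i=1}^{j} u_i$ for every $j \leq m-1$.

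The core of the argument is the claim that if the current search point lies in $A_i$ with $i \leq j \leq m-1$, then $A_j$ is eventually visited with probability at least $\chi$. I would prove this by induction on decreasing $i$ with $j$ fixed; the base case $i=j$ is immediate since $\chi \leq 1$. For $i<j$, condition on the level $K>i$ reached by the first iteration that leaves $A_i$ (such an iteration occurs almost surely, since otherwise $\expect{T}=\infty$ and the bound holds trivially). If $K=j$ then $A_j$ is visited; if $i<K<j$ then by the induction hypothesis $A_j$ is visited thereafter with probability at least $\chi$; and if $K>j$ then $A_j$ is missed forever. Hence $\Prob{A_j \text{ visited} \mid \text{in } A_i} \geq \Prob{K=j} + \chi\,\Prob{i<K<j}$, and a one-line computation shows this is at least $\chi$ exactly when $(1-\chi)\Prob{K=j} \geq \chi\,\Prob{K>j}$ --- which is what the hypothesis $p_{i,j} \geq \chi \sum_{k=j}^{m-1} p_{i,k}$ provides once rephrased as a statement about the distribution of the escape level $K$. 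I expect this rephrasing to be the main obstacle: one must pass from the per-iteration upper bounds $p_{i,k}$ on jumps out of $A_i$ into $A_k$, valid at every point of $A_i$, to the probability that the \emph{first} escape lands on a prescribed level, which is delicate because the individual may move around inside $A_i$ before leaving --- in the intended reading the $p_{i,k}$ already bound these escape-level probabilities, and direct jumps into the optimal level $A_m$ have to be excluded or handled on their own.

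It then remains to assemble the estimates. By the claim, $\Prob{A_j \text{ is visited}} = \sum_{i=1}^{m} u_i\,\Prob{A_j \text{ visited} \mid \text{start in } A_i} \geq \chi \sum_{i=1}^{j} u_i$, because a start above $A_j$ contributes $0$, a start in $A_j$ contributes $1 \geq \chi$, and a start below $A_j$ contributes at least $\chi$. Substituting into $\expect{T} \geq \sum_{j=1}^{m-1} \frac{1}{s_j}\,\Prob{A_j \text{ is visited}}$ and swapping the order of summation over $\{(i,j) : 1 \leq i \leq j \leq m-1\}$ yields $\expect{T} \geq \chi \sum_{i=1}^{m-1} u_i \sum_{j=i}^{m-1} \frac{1}{s_j}$, which is the claimed bound.
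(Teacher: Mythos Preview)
The paper does not actually prove this theorem: it is stated with a citation to Sudholt~\cite{Sudholt2010} and then only \emph{applied} (to obtain the $\Omega(n\log n)$ bound on \ONEMAX). So there is no proof in the paper to compare your proposal against.

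That said, your overall architecture---decompose $T=\sum_j T_j$, bound $\expect{T_j\mid A_j\text{ visited}}\geq 1/s_j$ by geometric domination, and show $\Prob{A_j\text{ visited}}\geq \chi\sum_{i\leq j}u_i$ via a downward induction on the current level---is exactly the standard route used to establish this result, and the final swap of summation is clean. You have also correctly identified the one real sticking point: the hypothesis is phrased in terms of \emph{upper bounds} $p_{i,k}$ on one-step jump probabilities, whereas your induction needs $(1-\chi)\Prob{K=j}\geq \chi\,\Prob{K>j}$ for the \emph{first-escape} distribution, which calls for a \emph{lower} bound on the left and an \emph{upper} bound on the right; moreover the sum in the hypothesis stops at $m-1$ while $\Prob{K>j}$ includes escapes directly into $A_m$. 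This is not a flaw in your reasoning but a slight informality in the theorem as stated here. In Sudholt's original formulation the $p_{i,j}$ are the actual conditional transition probabilities given that level $i$ is left (so the condition directly controls the escape distribution and the $A_m$ issue is absorbed), and the paper's own application to \ONEMAX tacitly uses it that way: it bounds $p_{i,j}$ from \emph{below} and $\sum_{k\geq j}p_{i,k}$ from \emph{above} when verifying the $\chi$-condition. If you adopt that reading, your induction goes through verbatim.
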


The method will first be illustrated for the (1+1)~EA on the \ONEMAX
function.  Afterwards, the result will be generalised to general
linear blocks of unitation.

\begin{theorem}\label{thm:flevels-onemax}
 The expected runtime of the (1+1)~EA on \ONEMAX is $\Omega(n \ln n)$.
\end{theorem}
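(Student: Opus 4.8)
The plan is to apply Sudholt's lower-bound fitness levels method (Theorem~\ref{AFL-LowerBounds}) to the very partition used for the upper bound in Theorem~\ref{th:onemax}, namely $A_i := \{x \in \{0,1\}^n : \ONEMAX(x) = i\}$ for $i = 0,1,\dots,n$, where $A_n$ is the unique optimal level. Since the initial point is uniform on $\{0,1\}^n$, the starting probabilities are $u_i = \binom{n}{i}2^{-n}$. To leave level $A_i$ the accepted offspring must carry strictly more than $i$ ones, which forces at least one of the $n-i$ zero-bits to flip; by a union bound this has probability at most $(n-i)/n$, so we may take the upper bound $s_i = (n-i)/n$. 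Likewise, moving from $A_i$ to $A_j$ with $j>i$ requires at least $j-i$ of the $n-i$ zero-bits to flip, so a union bound over all $\binom{n-i}{j-i}$ such bit-sets gives the valid upper bound $p_{i,j} = \binom{n-i}{j-i}\,n^{-(j-i)}$.

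The next, and I expect most delicate, step is to verify the decay condition of Theorem~\ref{AFL-LowerBounds} for these $p_{i,j}$. Writing $\ell = j-i \ge 1$, the ratio of consecutive tail terms is
\[
\frac{p_{i,j+1}}{p_{i,j}} \;=\; \frac{\binom{n-i}{\ell+1}}{\binom{n-i}{\ell}}\cdot\frac{1}{n} \;=\; \frac{n-i-\ell}{(\ell+1)\,n} \;\le\; \frac{1}{\ell+1} \;\le\; \frac{1}{2},
\]
so $\sum_{k\ge j} p_{i,k}$ is dominated by a geometric series of ratio at most $1/2$ and is therefore at most $2\,p_{i,j}$. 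Hence $p_{i,j} \ge \tfrac12 \sum_{k\ge j} p_{i,k}$ for all $j>i$, i.e.\ the condition holds with $\chi = 1/2$. The care needed here is in choosing the $p_{i,j}$ so that they are simultaneously genuine upper bounds on the transition probabilities \emph{and} smooth enough to admit a \emph{constant} $\chi$; a cruder bound could fail the second requirement.

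Plugging $\chi = 1/2$, $s_j = (n-j)/n$ and the $u_i$ into Theorem~\ref{AFL-LowerBounds} gives
\[
\expect{T} \;\ge\; \frac{1}{2}\sum_{i=0}^{n-1} u_i \sum_{j=i}^{n-1}\frac{1}{s_j} \;=\; \frac{1}{2}\sum_{i=0}^{n-1} u_i \sum_{j=i}^{n-1}\frac{n}{n-j} \;=\; \frac{1}{2}\sum_{i=0}^{n-1} u_i\, n\, H_{n-i},
\]
where $H_t = \sum_{r=1}^{t} 1/r$ is the $t$-th harmonic number and the middle equality re-indexes the inner sum by $t=n-j$. To extract $\Omega(n\ln n)$ I would discard every term with $i > 2n/3$ and use that for $i \le 2n/3$ one has $n\,H_{n-i} \ge n\,H_{\lfloor n/3\rfloor} = \Omega(n\ln n)$. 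Finally, $\sum_{i \le 2n/3} u_i = \Prob{\ONEMAX(x^{(0)}) \le 2n/3} \ge 1 - (29/30)^{n/2}$ by exactly the Chernoff estimate already carried out for the number of ones at initialisation, so $\expect{T} \ge \tfrac12\bigl(1 - (29/30)^{n/2}\bigr)\cdot\Omega(n\ln n) = \Omega(n\ln n)$, which is the claim.
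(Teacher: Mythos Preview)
Your proof is correct and follows essentially the same approach as the paper: the same $f$-based partition, the same upper bound $s_i=(n-i)/n$, a constant $\chi$ in Theorem~\ref{AFL-LowerBounds}, and the same harmonic-sum endgame restricted to $i\le 2n/3$. The only cosmetic differences are that you establish the $\chi$-condition by a geometric ratio argument on the bounds $p_{i,j}=\binom{n-i}{j-i}n^{-(j-i)}$ (yielding $\chi=1/2$), whereas the paper lower-bounds the actual transition probability by $\binom{n-i}{j-i}n^{-(j-i)}(1-1/n)^{n-(j-i)}$ and upper-bounds the tail sum by $\binom{n-i}{j-i}n^{-(j-i)}$ to get $\chi=1/e$, and the paper uses the Markov estimate from Eq.~(\ref{eq:1}) rather than your Chernoff bound for the mass below $2n/3$.
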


\begin{proof}
  Apply the artificial fitness levels method on the $n+1$ partitions
  defined by the number of ones in the bitstring, i.e., $A_i :=
  \left\{ x\in \{0,1\}^n \mid \ONEMAX(x)=i \right\}$. This means that
  all bitstrings with $i$ ones and $n-i$ zeroes belong to fitness
  level $A_i$.  To apply the artificial fitness levels method, bounds
  on $s_i$ and $\chi$ need to be derived.  An upper bound on the
  probability of leaving fitness level $A_i$ is simply $s_i \leq
  (n-i)/n$ because it is a necessary condition that at least one zero
  flips to reach a better fitness level.  The bound follows because
  each bit flips with probability $1/n$ and there are $n-i$ zeroes
  available to be flipped.  In order to obtain an upper bound on
  $\chi$, the method requires a lower bound on $p_{i,j}$ and an upper
  bound on $\sum_{k=j}^{m-1} p_{i,k}$.  For the lower bound on
  $p_{i,j}$ notice that in order to reach level $A_j$, it sufficient
  to flip $j-i$ zeroes out of the $n-i$ zeroes available and leave all
  the other bits unchanged. Hence the following bound is obtained:
\[
  p_{ij}  \geq    {n-i\choose j-i}\left(\frac{1}{n}\right)^{j-i}\left(1-\frac{1}{n}\right)^{n-(j-i)}
\]
For an upper bound on the sum, notice that to reach any level $A_k$ $k
\geq j$ from level $A_i$ it is necessary to flip at least $j-i$ zeroes
out of the $n-i$ available zeroes.  So,
\[
 \sum_{k=j}^{n-1} p_{i,k} \leq {n-i\choose j-i}\left(\frac{1}{n}\right)^{j-i}
\]
and for $\chi := 1/e$ the condition of Theorem~\ref{AFL-LowerBounds} is satisfied as follows:
\[
p_{i,j} \geq \left(1-\frac{1}{n}\right)^{n-(j-i)} \cdot \sum_{k=j}^{n-1} p_{i,k} \geq \chi \cdot \sum_{k=j}^{n-1} p_{i,k}
\]

By Eq. (\ref{eq:1}), the probability that the initial search point has
less than $(2/3)n$ 1-bits is at least 
\begin{align*}
  \sum_{i=1}^{(2/3)n} u_i \geq 1-\frac{3}{4}
\end{align*}
The statement of Theorem~\ref{AFL-LowerBounds} now yields
\begin{align*}
  \expect{T_{A,f}} & \geq \left(\frac{1}{e}\right)\cdot\sum_{i=1}^{n-1}u_i\sum_{j=i}^{n-1}\frac{1}{s_j}\\
            & >     \left(\frac{1}{e}\right)\cdot\left(\sum_{i=1}^{(2/3)n}u_i\right)\sum_{j=(2/3)n}^{n-1}\frac{1}{s_j}\\
            & \geq \left(\frac{1}{e}\right)\cdot\left(1-\frac{3}{4}\right)\sum_{j=(2/3)n}^{n-1}\frac{n}{n-j}\\
            & \geq \left(\frac{n}{4e}\right)\cdot \sum_{j=1}^{n/3}\frac{1}{j}.
\end{align*}
It now follows that $\expect{T_{A,f}}=\Omega(n\log n)$.
\end{proof}
Similarly the following result may also be proved for linear blocks of
unitation functions by defining the fitness partitions as 
$A_i := \{ x : n-|x| = k + m - i \}$ for $0\leq i\leq m$.

\begin{theorem}
  The expected runtime of the (1+1)-EA for a \emph{linear block} of
  length $m$ ending at position $k$ is $\Omega(n \ln ((m+k)/k))$.
\end{theorem}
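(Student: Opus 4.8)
The plan is to mirror the proof of Theorem~\ref{thm:flevels-onemax} (the $\Omega(n\ln n)$ lower bound for \ONEMAX), but restrict attention to the linear block and adjust the fitness partition and the summation limits accordingly. Concretely, I would define the $m+1$ partitions $A_i := \{x : n-|x| = k+m-i\}$ for $0\le i\le m$, so that $A_i$ collects the bitstrings inside the block with exactly $k+m-i$ zeroes; thus $A_0$ is the entrance of the block (with $m+k$ zeroes) and $A_m$ is its end (with $k$ zeroes). The whole argument of Theorem~\ref{AFL-LowerBounds} only needs the block to be climbed monotonically, which is exactly the defining property of a \LINEAR block.

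The key steps are as follows. First, bound $s_i$: to leave level $A_i$ within the block it is necessary that at least one of the $k+m-i$ zeroes flips, so $s_i \le (k+m-i)/n$. Second, bound $\chi$: exactly as in the \ONEMAX proof, to reach $A_j$ from $A_i$ it suffices to flip the right $j-i$ zeroes and leave the rest untouched, giving $p_{i,j} \ge \binom{k+m-i}{j-i}(1/n)^{j-i}(1-1/n)^{n-(j-i)}$, while reaching any $A_\ell$ with $\ell\ge j$ requires at least $j-i$ of the available zeroes to flip, giving $\sum_{\ell=j}^{m-1} p_{i,\ell} \le \binom{k+m-i}{j-i}(1/n)^{j-i}$; hence $\chi := 1/e$ works since $(1-1/n)^{n-(j-i)} \ge 1/e$. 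Third, bound the starting distribution: after initialisation the algorithm enters the block, if at all, at some level, and a constant fraction of the time it is in the lower half of the block, so $\sum_{i=0}^{m/2} u_i = \Omega(1)$ — or, to be safe, one can simply condition on the standard constant-probability event that initialisation lands with at most $(2/3)n$ ones, which forces the algorithm into the bottom portion of the block with probability $\Omega(1)$. Plugging these into Theorem~\ref{AFL-LowerBounds} gives $\expect{T} \ge \frac{1}{e}\sum_i u_i \sum_{j=i}^{m-1} 1/s_j = \Omega\!\left(\sum_{j\,\le\, m/2}^{m-1} \frac{n}{k+m-j}\right) = \Omega\!\left(n\sum_{\ell=k+1}^{k+m/2}\frac{1}{\ell}\right) = \Omega(n\ln((m+k)/k))$, where the last step uses $\sum_{\ell=a}^{b}1/\ell = \Theta(\ln(b/a))$ together with the fact that halving $m$ only changes the logarithm by an additive constant.

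The main obstacle I anticipate is handling the starting distribution $u_i$ cleanly. Unlike \ONEMAX, where the fitness-based partition is the whole search space and initialisation lands directly in some level, here the block is only a sub-region of $\{0,1\}^n$: points above the block (with fewer than $k$ zeroes overall) are not covered by $A_0,\dots,A_m$, and the algorithm may also reach the block from below via a preceding block rather than by direct initialisation. The honest fix is to phrase the result conditionally — ``given that the algorithm is at the beginning of the block'' — exactly as the \GAP-block bounds earlier in the chapter are phrased, and then note that initialisation puts the algorithm at or below the block entrance with probability $\Omega(1)$ whenever $m+k \ge (1/3)n$, while for smaller blocks the $\Omega(n\ln((m+k)/k))$ claim is anyway dominated by the $O(n\ln n)$ scale and the constant-probability argument on the lower half of the block still applies. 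I would state this caveat briefly rather than belabour it, since the substance of the proof is the $s_i$ and $\chi$ estimates, which go through verbatim from Theorem~\ref{thm:flevels-onemax}.
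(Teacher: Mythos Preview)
Your proposal is correct and is exactly the approach the paper takes: the paper does not spell out a proof but simply states that the result follows ``similarly'' to Theorem~\ref{thm:flevels-onemax} using the partitions $A_i := \{x : n-|x| = k+m-i\}$ for $0\le i\le m$, which is precisely what you do. Your discussion of the starting-distribution issue is a genuine subtlety that the paper glosses over; your suggested resolution---phrasing the bound conditionally on entering the block at its beginning, in line with how the \GAP block is treated---is the right way to handle it.
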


\subsection{Level-based analysis of non-elitist populations}\label{AFL-NonElitist}

A weakness with the classical artificial fitness level technique is
that it is limited to search heuristics that only keep one solution,
such as the (1+1) EA, and it heavily relies on the selection mechanism
to use elitism.  \cite{Lehre2014} recently introduced the so-called
\emph{level-based analysis}, a generalisation of fitness level
theorems for non-elitist evolutionary algorithms which is also
applicable to search heuristics with populations, and using higher
arity operators such as crossover.

Their theorem applies to any algorithm that can be expressed in the
form of Algorithm~\ref{alg:pop}, such as genetic algorithms
\cite{Lehre2014} and estimation of distribution algorithms UMDA
\cite{DangLehre2015UMDA}.  The main component of the algorithm is a
random operator $D$ which given the current population
$P_t\in\mathcal{X}^\lambda$ returns a probability distribution
$D(P_t)$ over the search space $\mathcal{X}$. The next population
$P_{t+1}$ is obtained by sampling individuals independently from this
distribution.

\begin{algorithm}
  \caption{\label{alg:pop}Population-based algorithm with independent sampling}
  \begin{algorithmic}[1]
    \State {\bf Initialisation}:    
    \Statex $t\gets 0$; Initialise $P_t$ uniformly at random from $\mathcal{X}^\lambda$.
    \State {\bf Variation and Selection}:
    \For{$i=1 \ldots \lambda$}
    \Statex Sample $P_{t+1}(i) \sim D(P_t)$ 
    \EndFor
    \State $t\gets t+1$; Continue at \ref{variation}
  \end{algorithmic}
\end{algorithm}

In contrast to classical fitness-level theorems, the level-based
theorem (Theorem~\ref{thm:level-based}) only assumes a partition
$(A_1,\ldots,A_{m+1})$ of the search space $\mathcal{X}$, and not an
$f$-based partition (see Definition~\ref{def:f-based-partition}).
Each of the sets $A_j, j\in[m+1]$ is called a \emph{level}, and the
symbol $A_j^+:=\bigcup_{i=j+1}^{m+1} A_i$ denotes the set of search
points above level $A_j$. Given a constant $\gamma_0\in(0,1)$, a
population $P\in\mathcal{X}^\lambda$ is considered to be at level
$A_j$ with respect to $\gamma_0$ if $|P\cap A_{j-1}^+|\geq
\gamma_0\lambda$ and $|P\cap A_{j}^+|< \gamma_0\lambda$ meaning that
at least a $\gamma_0$ fraction of the population is in level $A_j$ or
higher.

\begin{theorem}[\cite{Lehre2014}]\label{thm:level-based}
  Given any partition of a finite set $\mathcal{X}$ into $m$
  non-overlapping subsets $(A_1,\dots,A_{m+1})$, define $T :=
  \min\{t\lambda \mid |P_t\cap A_{m+1}|>0\}$ to be the first point in
  time that elements of $A_{m+1}$ appear in $P_t$ of
  Algorithm~\ref{alg:pop}. If there exist parameters $z_1,\dots,z_m,
  z_*\in(0,1]$, $\delta>0$, and a constant $\gamma_0 \in (0,1)$ such
  that for all $j\in[m]$, $P\in \mathcal{X}^\lambda$, $y\sim D(P)$ and
  $\gamma \in (0,\gamma_0]$ it holds
  \begin{description}
  \item[(C1)] $\displaystyle
    \Pr\left( y\in A_j^+
      \mid |P\cap A_{j-1}^+|\geq\gamma_0\lambda   
    \right)\geq z_j\geq z_*$
  \item[(C2)] $\displaystyle
    \Pr\left( y\in A_j^+
      \mid |P\cap A_{j-1}^+|\geq\gamma_0\lambda\text{ and }
      |P\cap A_{j}^+|\geq \gamma \lambda   
    \right)\geq (1+\delta)\gamma$,\text{ and}
  \item[(C3)] $\displaystyle \lambda \geq
    \frac{2}{a}\ln\left(\frac{16m}{ac\varepsilon z_*}\right)$
    with $\displaystyle a = \frac{\delta^2 \gamma_0}{2(1+\delta)}$,
    $\varepsilon = \min\{\delta/2,1/2\}$
    and $c = \varepsilon^4/24$
  \end{description}
  then
  \begin{align*}
    \expect{T} \leq \frac{2}{c\varepsilon}\left(m\lambda(1+\ln(1+c\lambda)) +\sum_{j=1}^{m}\frac{1}{z_j}\right).
  \end{align*}
\end{theorem}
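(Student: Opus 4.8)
The plan is to follow how the whole population climbs the partition, one level at a time, and to bound the expected number of generations spent on each level. Say that $P_t$ has \emph{reached level $j$} if $|P_t\cap A_{j-1}^+|\ge\gamma_0\lambda$; since $A_0^+=\mathcal{X}$, every population has reached level $1$, and since reaching level $m+1$ means $|P_t\cap A_{m+1}|\ge\gamma_0\lambda>0$, writing $\tau_j$ for the first generation at which $P_t$ has reached level $j$ gives $T\le\lambda\,\tau_{m+1}$. Using $\tau_{m+1}=\sum_{j=1}^{m}(\tau_{j+1}-\tau_j)$ and linearity of expectation, it suffices to bound, for each $j\in[m]$, the expected number of generations needed to pass from ``reached level $j$'' to ``reached level $j+1$''. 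This telescoping quietly assumes that a level, once reached, is not lost; because the algorithm is non-elitist that is not automatic, and handling it is precisely what condition~(C3) is for --- I return to it at the end.

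Fix a level $j$ with $P_t$ having reached level $j$, and track $k:=|P_t\cap A_j^+|$, the number of individuals strictly above level $j$; the goal is $k\ge\gamma_0\lambda$. While $k=0$, condition~(C1) says each of the $\lambda$ offspring lands in $A_j^+$ independently with probability at least $z_j$, so some offspring does so in a given generation with probability at least $1-(1-z_j)^\lambda\ge 1-e^{-z_j\lambda}$; hence at most $1+1/(z_j\lambda)$ generations in expectation bring us to $k\ge1$, and multiplying by $\lambda$ this is where the $\sum_j 1/z_j$ term comes from. Once $k\ge1$, apply condition~(C2) with $\gamma=k/\lambda\le\gamma_0$: it gives $\expect{|P_{t+1}\cap A_j^+|\mid P_t}\ge(1+\delta)k$, so $k$ has multiplicative drift upward by a factor $1+\delta$. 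Were this growth noiseless, $k$ would reach $\gamma_0\lambda$ within $O\!\left(\ln(1+c\lambda)/\ln(1+\delta)\right)$ generations, and after multiplying by $\lambda$ this is the origin of the $m\lambda\ln(1+c\lambda)$ term.

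The real work --- and the main obstacle --- is that the growth of $k$ is stochastic: $k$ can shrink, even back to $0$, and at the same time the ``lower frontier'' $|P_t\cap A_{j-1}^+|$ must stay at $\ge\gamma_0\lambda$ for (C1)--(C2) to remain usable. Two regimes need two tools. For the frontier, and for $k$ once it is already large, Chernoff bound~(a) applies directly: condition~(C2) at level $j-1$ with $\gamma=\gamma_0$ (legitimate, since having reached level $j$ forces $|P_t\cap A_{j-2}^+|\ge|P_t\cap A_{j-1}^+|\ge\gamma_0\lambda$) makes the conditional expectation of the relevant count a factor $1+\delta$ above its threshold, so the probability of falling below the threshold in one generation is $e^{-\Omega(\gamma_0\lambda\delta^2/(1+\delta))}=e^{-\Omega(a\lambda)}$, with $a=\delta^2\gamma_0/(2(1+\delta))$ the resulting Chernoff rate; condition~(C3), $\lambda\ge\frac2a\ln\!\left(16m/(ac\varepsilon z_*)\right)$, is calibrated exactly so that a union bound of this failure probability over all the generations that a successful ascent comprises stays below a small constant. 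For \emph{small} $k$ Chernoff is useless, since $\expect{|P_{t+1}\cap A_j^+|}$ is only a constant; here one needs a separate, elementary but delicate fact about a sum $S$ of independent Bernoulli variables --- if $\expect S\in[(1+\delta)k,\gamma_0\lambda]$ then $\Pr(S\ge(1+\varepsilon)k)\ge c$ --- which is where $\varepsilon=\min\{\delta/2,1/2\}$ and $c=\varepsilon^4/24$ enter. Chaining such constant-probability growth steps, $k$ inflates from $1$ to $\gamma_0\lambda$ in $O\!\left(\ln(1+c\lambda)/(c\varepsilon)\right)$ generations in expectation.

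Assembling the pieces: conditioned on none of the (rare) frontier- or count-collapse events occurring, the expected number of generations to pass level $j$ is $O\!\left(1+1/(z_j\lambda)+\ln(1+c\lambda)/(c\varepsilon)\right)$; summing over $j\in[m]$, multiplying by the $\lambda$ evaluations per generation, and absorbing constants yields $\frac{2}{c\varepsilon}\left(m\lambda(1+\ln(1+c\lambda))+\sum_{j=1}^{m}1/z_j\right)$ as a bound on the evaluations used on the event that no collapse occurs. To drop the conditioning I would use that this bound does not depend on the starting population (every population has already reached level~$1$): the probability of finishing all $m$ levels within that many evaluations is at least a constant, and since the same holds afresh from whatever population one is left with after a failed attempt, a standard restart argument bounds $\expect{T}$ by a constant multiple of the conditional bound; equivalently, one can package the argument into a single potential function and invoke the additive drift theorem, which is the route taken in the source. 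I expect the Bernoulli-sum lemma and the attendant juggling of the constants $a$, $c$, $\varepsilon$, $z_*$ in~(C3) to be the technically heaviest part; everything else is a fairly mechanical combination of (C1), (C2) and the Chernoff bounds already in hand.
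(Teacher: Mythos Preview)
The paper does not prove this theorem: it is stated with attribution to \cite{Lehre2014} and then used as a black box in the analysis of the $(\mu,\lambda)$~EA on \ONEMAX. There is therefore no in-paper proof to compare your proposal against.

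For what it is worth, your sketch does capture the shape of the argument one finds in the cited source: track $k=|P_t\cap A_j^+|$ level by level, use (C1) to produce the first individual above level $j$ (yielding the $\sum_j 1/z_j$ term), use (C2) together with a Bernoulli-sum growth lemma to amplify $k$ multiplicatively up to $\gamma_0\lambda$ (yielding the $m\lambda\ln(1+c\lambda)$ term, with $\varepsilon$ and $c$ arising from that lemma), and use (C3) via a Chernoff bound to keep the already-established frontier $|P_t\cap A_{j-1}^+|\ge\gamma_0\lambda$ from collapsing during the climb. Your closing remark that the whole thing is most cleanly wrapped into a single potential function and handled by additive drift is also accurate. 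But none of this appears in the present chapter, so the comparison you were asked for is vacuous here.
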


The theorem provides an upper bound on the expected optimisation time
of Algorithm \ref{alg:pop} if it is possible to find a partition
$(A_1,\ldots, A_{m+1})$ of the search space $\mathcal{X}$ and
accompanying parameters $\gamma_0, \delta, z_1,\ldots,z_m,z_*$ such
that conditions (C1), (C2), and (C3) are satisfied. Condition (C1)
requires a non-zero probability $z_j$ of creating an individual in
level $A_{j+1}$ or higher if there are already at least
$\gamma_0\lambda$ individuals in level $A_j$ or higher. In typical
applications, this imposes some conditions on the variation
operator. The condition is analogous to the probability $s_j$ in the
artificial fitness level technique. Condition (C2) requires that if in
addition there are $\gamma\lambda$ individuals at level $A_{j+1}$ or
better, then the probability of producing an individual in level
$A_{j+1}$ or better should be larger than $\gamma$ by a multiplicative
factor $1+\delta$. In typical applications, this imposes some
conditions on the strength of the selective pressure in the
algorithm. Finally, condition (C3) imposes minimal requirements on the
population size in terms of the parameters above.

As an example application of the level-based theorem, the
$(\mu,\lambda)$ EA is analysed, which is the non-elitist variant of
the $(\mu+\lambda)$ EA shown in Algorithm \ref{alg:mulambdaea}. The
two algorithms differ in the selection step (line \ref{lineSelection})
where the new population $P_{t+1}$ in $(\mu,\lambda)$ EA is chosen as
the best $\mu$ individuals out of $\{y_1,\ldots,y_\lambda\}$ and
breaking ties uniformly at random. While the $(\mu+\lambda)$ EA always
retains the best $\mu$ individuals in the population (hence the name
elitist), the $(\mu,\lambda)$ EA always discards the old individuals
$x^{(1)},\ldots,x^{(\mu)}$.

At first sight, it may appear as if the $(\mu,\lambda)$ EA cannot be
expressed in the form of Algorithm \ref{alg:pop}. The $\mu$
individuals $x^{(1)},\ldots,x^{(\mu)}$ that are kept in each generation are
not independent due to the inherent sorting of the offspring. However,
taking a different perspective, the population of the algorithm at
time $t$ could also be interpreted as the $\lambda$ offspring
$y^{(1)},\ldots,y^{(\lambda)}$. In this alternative interpretation, the new
population is now created by sampling uniformly at random among the
$\mu$ best individuals in the population, and applying the mutation
operator. The operator $D$ in Algorithm \ref{alg:pop} can now be
defined as in Algorithm~\ref{alg:d-mucommalambda}.

\begin{algorithm}
  \caption{\label{alg:d-mucommalambda}Operator $D$ corresponding to $(\mu,\lambda)$ EA}
  \begin{algorithmic}[1]
    \State {\bf Selection}:
    \Statex Sort the population $P_t=(y^{(1)},\ldots,y^{(\lambda)})$ such that
    $f(y^{(1)})\geq f(y^{(2)})\geq \ldots \geq f(y^{(\lambda)})$.
    \Statex Select $x$ uniformly at random among $\{y^{(1)},\ldots,y^{(\mu)}\}$.
    \State {\bf Variation (mutation)}:
    \Statex Create $x'$ by flipping each bit in $x$ with probability $\chi/n$.
    \State \Return $x'$
  \end{algorithmic}
\end{algorithm}

The following lemma will be useful when estimating the probability
that the mutation operator does not flip any bit positions.
\begin{lemma}\label{lemma:bound}
  For any $\delta\in(0,1)$ and $\chi>0$, if $n\geq
  (\chi+\delta)(\chi/\delta)$ then
  \begin{align*}
    \left(1-\frac{\chi}{n}\right)^n \geq (1-\delta)e^{-\chi}.
  \end{align*}
\end{lemma}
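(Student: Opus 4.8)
The plan is to reduce the statement to two completely elementary facts about the exponential function: the lower bound $1-x\ge e^{-x/(1-x)}$, valid for all $x\in[0,1)$ (which follows by checking that $x\mapsto \ln(1-x)+x/(1-x)$ vanishes at $0$ and has nonnegative derivative $x/(1-x)^2$ on $[0,1)$), together with the ubiquitous $e^{-t}\ge 1-t$ for all $t\in\mathbb{R}$. Everything else is bookkeeping with the monotonicity of $\exp$.

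First I would observe that the hypothesis $n\ge(\chi+\delta)(\chi/\delta)$ already forces $n>\chi$: indeed $(\chi+\delta)/\delta=1+\chi/\delta>1$, so $n\ge \chi\cdot\bigl(1+\chi/\delta\bigr)>\chi$. Hence $x:=\chi/n$ lies in $[0,1)$ and $n-\chi>0$, so all the expressions below are well defined. Applying $1-x\ge e^{-x/(1-x)}$ with this $x$ and raising both sides to the $n$-th power gives
\[
  \left(1-\frac{\chi}{n}\right)^n \;\ge\; \exp\!\left(-\frac{\chi}{1-\chi/n}\right)\;=\;\exp\!\left(-\frac{\chi n}{n-\chi}\right).
\]
Next I would split the exponent using $\dfrac{\chi n}{n-\chi}=\chi\left(1+\dfrac{\chi}{n-\chi}\right)=\chi+\dfrac{\chi^2}{n-\chi}$, which turns the previous bound into
\[
  \left(1-\frac{\chi}{n}\right)^n \;\ge\; e^{-\chi}\cdot e^{-\chi^2/(n-\chi)} \;\ge\; e^{-\chi}\left(1-\frac{\chi^2}{n-\chi}\right),
\]
where the last step is $e^{-t}\ge 1-t$ with $t=\chi^2/(n-\chi)>0$.

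It then remains only to verify $1-\dfrac{\chi^2}{n-\chi}\ge 1-\delta$, i.e.\ $\chi^2\le\delta(n-\chi)$, i.e.\ $n\ge \chi(\chi+\delta)/\delta=(\chi+\delta)(\chi/\delta)$ — which is precisely the hypothesis. Combining this with the chain above yields $\left(1-\chi/n\right)^n\ge(1-\delta)e^{-\chi}$, as claimed. There is no genuine obstacle here; the only point requiring a little care is to state the valid range of the inequality $1-x\ge e^{-x/(1-x)}$ and to confirm, as in the first step, that the hypothesis places $\chi/n$ strictly inside $[0,1)$ so that $n-\chi>0$ and the final division makes sense.
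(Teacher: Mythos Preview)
Your proof is correct. It rests on the same two elementary facts as the paper's argument, just organised differently: your inequality $1-x\ge e^{-x/(1-x)}$ is exactly the paper's $(1-1/x)^{x-1}\ge 1/e$ after the substitution $x\mapsto 1/x$, and your final step $e^{-t}\ge 1-t$ (at $t=\chi^2/(n-\chi)$) is the paper's $\ln(1-\delta)<-\delta$. The paper bundles everything into one exponent computation, showing from the hypothesis that $(n/\chi-1)(\chi-\ln(1-\delta))\ge n$ and then writing $(1-\chi/n)^n\ge\bigl[(1-\chi/n)^{n/\chi-1}\bigr]^{\chi-\ln(1-\delta)}\ge e^{-(\chi-\ln(1-\delta))}=(1-\delta)e^{-\chi}$; you instead first extract the factor $e^{-\chi}$ and then bound the residual $e^{-\chi^2/(n-\chi)}$ by $1-\delta$. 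Your version is arguably a shade more transparent because each inequality is applied once and in isolation, and you explicitly verify $n>\chi$ so that the manipulations are well defined; the paper's route is more compact but denser. Mathematically the two arguments are equivalent.
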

\begin{proof}
  Note first that $\ln(1-\delta)<-\delta$, hence
  \begin{align*}
    \left(\frac{n}{\chi}-1\right)(\chi-\ln(1-\delta))
    \geq n+\frac{n\delta}{\chi}-(\chi+\delta) \geq n.
  \end{align*}
  By making use of the fact that $(1-1/x)^{x-1}\geq 1/e$ and
  simplifying the exponent $n$ as above
  \begin{align*}
    \left(1-\frac{\chi}{n}\right)^n 
    \geq \left[\left(1-\frac{\chi}{n}\right)^{(n/\chi)-1}\right]^{\chi-\ln(1-\delta)}
    \geq (1-\delta)e^{-\chi}.
  \end{align*}
\end{proof}

The expected optimisation time of the $(\mu,\lambda)$~EA on \ONEMAX
can now be expressed in terms of the mutation rate $\chi/n$ and the
problem size $n$ assuming some constraints on the population sizes
$\mu$ and $\lambda$. The theorem is valid for a wide range of
mutation rates $\chi/n$. In the classical setting of $\chi=1$, the
expected optimisation time reduces to $O(n\lambda\ln\lambda)$.

\begin{theorem}
  The expected optimisation time
  of the $(\mu,\lambda)$ EA with bitwise mutation rate $\chi/n$ where
  $\chi\in(0,n/2)$, and population sizes $\mu$ and $\lambda$
  satisfying for any constant $\delta\in(0,1)$
  \begin{align*}
  \frac{\lambda}{\mu} 
    \geq \left(\frac{1+\delta}{1-\delta}\right) e^{\chi},
\quad\text{
                and }\quad
  \lambda  \geq \frac{4}{\delta^2e^\chi}
                \ln\left(\frac{24576n(n+1)}{\delta^7\chi}\right)
  \end{align*}
  on \ONEMAX is for any $n\geq (\chi+\delta)/(\chi/\delta)$ no more
  than
  \begin{align*}
    \frac{1536n}{\delta^5}\left(\lambda\ln(\lambda)+\frac{e^\chi\ln(n+2)}{\chi(1-\delta)}\right)+O(n\lambda).
 \end{align*}
\end{theorem}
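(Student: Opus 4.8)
The plan is to instantiate the level-based theorem (Theorem~\ref{thm:level-based}) for the operator $D$ of Algorithm~\ref{alg:d-mucommalambda}, with the canonical \ONEMAX partition by number of ones: put $m:=n$ and, for $j\in[n+1]$, $A_j:=\{x\in\{0,1\}^n:|x|=j-1\}$, so that $A_j^+=\{x:|x|\geq j\}$ and $A_{n+1}=\{1^n\}$ is the optimum. The remaining free parameters are chosen as $\gamma_0:=\frac{(1-\delta)e^{-\chi}}{1+\delta}\in(0,1)$ and the same $\delta\in(0,1)$ as in the statement. The role of the first population-size hypothesis $\lambda/\mu\geq\frac{1+\delta}{1-\delta}e^{\chi}$ is precisely to guarantee $\gamma_0\lambda\geq\mu$; this is used throughout, because then whenever at least $\gamma_0\lambda$ individuals lie in some $A_i^+$, all $\mu$ fittest individuals lie in $A_i^+$, so the selection step of $D$ returns an individual from $A_i^+$.

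First I would verify (C1). Condition on $|P\cap A_{j-1}^+|\geq\gamma_0\lambda$; then the selected parent $x$ has at least $j-1$ ones. Since the mutation rate satisfies $\chi/n<1/2$, the number of ones in the offspring is stochastically non-decreasing in $|x|$, so the worst case is $|x|=j-1$, and there flipping exactly one of the $n-j+1$ zero-bits and no other bit already places the offspring in $A_j^+$. This yields $z_j:=(n-j+1)\frac{\chi}{n}\big(1-\frac{\chi}{n}\big)^{n-1}$, and Lemma~\ref{lemma:bound} (applicable by the hypothesis on $n$) gives $z_j\geq\frac{(n-j+1)\chi(1-\delta)}{ne^{\chi}}$; hence one may take $z_*:=\frac{\chi(1-\delta)}{ne^{\chi}}$.

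Next I would verify (C2). Suppose in addition $|P\cap A_j^+|\geq\gamma\lambda$ with $\gamma\leq\gamma_0$. The number of $A_j^+$-individuals among the $\mu$ fittest is at least $\min\{\mu,\gamma\lambda\}$, so selection returns an individual already at level $j$ or above with probability at least $\min\{1,\gamma\lambda/\mu\}$, and then simply not flipping any bit keeps the offspring in $A_j^+$ with probability $\big(1-\frac{\chi}{n}\big)^n\geq(1-\delta)e^{-\chi}$ by Lemma~\ref{lemma:bound}. When $\gamma\lambda\leq\mu$ this gives $\Pr(y\in A_j^+)\geq\frac{\lambda}{\mu}(1-\delta)e^{-\chi}\gamma\geq(1+\delta)\gamma$ by the first hypothesis; when $\gamma\lambda>\mu$ it gives $\Pr(y\in A_j^+)\geq(1-\delta)e^{-\chi}=(1+\delta)\gamma_0\geq(1+\delta)\gamma$ by the definition of $\gamma_0$. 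So (C2) holds with this $\delta$.

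Finally I would discharge (C3) and read off the bound. With the above parameters $a=\frac{\delta^2\gamma_0}{2(1+\delta)}=\frac{\delta^2(1-\delta)e^{-\chi}}{2(1+\delta)^2}$, $\varepsilon=\delta/2$ (since $\delta<1$) and $c=\varepsilon^4/24$; substituting $m=n$ and $1/z_*=\frac{ne^{\chi}}{\chi(1-\delta)}$ into $\lambda\geq\frac{2}{a}\ln\!\big(\frac{16m}{ac\varepsilon z_*}\big)$ and bounding the logarithm crudely shows that the second population-size hypothesis implies (C3) --- the explicit numerical constant and the power of $\delta$ there are exactly what makes the substitution close, and this constant-chasing is the one genuinely tedious step. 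To conclude, $\frac{2}{c\varepsilon}=\frac{48}{\varepsilon^5}=\frac{1536}{\delta^5}$; the sum $\sum_{j=1}^{n}1/z_j\leq\frac{ne^{\chi}}{\chi(1-\delta)}\sum_{i=1}^{n}\frac1i\leq\frac{ne^{\chi}\ln(n+2)}{\chi(1-\delta)}$; and $m\lambda\big(1+\ln(1+c\lambda)\big)\leq n\lambda\ln\lambda+O(n\lambda)$ because $c<1$. Feeding these into the bound of Theorem~\ref{thm:level-based} gives exactly $\frac{1536n}{\delta^5}\big(\lambda\ln\lambda+\frac{e^{\chi}\ln(n+2)}{\chi(1-\delta)}\big)+O(n\lambda)$. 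The main obstacle is the (C3) bookkeeping; once $\gamma_0$ is pinned down by the two population-size hypotheses, conditions (C1)--(C2) are essentially forced, and the monotonicity argument underlying them is where $\chi<n/2$ is needed.
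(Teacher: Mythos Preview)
Your overall strategy matches the paper's: apply Theorem~\ref{thm:level-based} with the canonical partition by number of ones, verify (C1)--(C3), and read off the bound. The one substantive difference is the choice of $\gamma_0$. The paper takes $\gamma_0:=\mu/\lambda$, so that $\gamma\leq\gamma_0$ automatically gives $\gamma\lambda\leq\mu$ and (C2) collapses to the single estimate $\Pr(y\in A_j^+)\geq(\gamma\lambda/\mu)(1-\chi/n)^n\geq(1+\delta)\gamma$; you instead fix $\gamma_0=(1-\delta)e^{-\chi}/(1+\delta)$, the extremal value allowed by the first hypothesis, and then need your two-case split. For (C1) the paper argues via a bit-by-bit case analysis (using $\chi/n<1/2$ in the form $1-\chi/n\geq\chi/n$) rather than stochastic monotonicity, but the content is the same and leads to the same $z_j$ and $z_*$.

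The place where your write-up is too quick is precisely the (C3) bookkeeping you flag as ``genuinely tedious''. With your $\gamma_0$ one gets $a=\tfrac{\delta^2(1-\delta)}{2(1+\delta)^2e^{\chi}}$, hence $\tfrac{2}{a}=\tfrac{4(1+\delta)^2e^{\chi}}{\delta^2(1-\delta)}$, which carries $e^{\chi}$ in the \emph{numerator}; the second population-size hypothesis, however, has $e^{\chi}$ in the \emph{denominator}. So a direct substitution does not show that the hypothesis implies (C3), and the constants are not ``exactly what makes the substitution close''. The paper's own (C3) calculation writes $a=\tfrac{\delta^2(\lambda/\mu)}{2(1+\delta)}\geq\tfrac{\delta^2 e^{\chi}}{2(1-\delta)}$, i.e.\ it plugs $\lambda/\mu$ (not $\gamma_0=\mu/\lambda$) into the formula for $a$; that is how the $e^{\chi}$ lands on the side needed to match the stated hypothesis on $\lambda$. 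In short, the discrepancy you would run into in the constant-chase is tied to the paper's own handling of (C3) rather than to any defect in your strategy, but you should not claim the substitution closes cleanly with your choice of $\gamma_0$.
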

\begin{proof}  
  Apply the level-based theorem with the same $m:=n+1$ partitions
  as in the proof of Theorem~\ref{thm:flevels-onemax}.  Since the
  parameter $\delta$ is assumed to be some constant $\delta\in(0,1)$,
  it also holds that the parameters $a,\varepsilon,$ and $c$ are
  positive constants. The parameters $\gamma_0,
  z_1,\ldots,z_m,$ and $z_*$ will be chosen later.

  To verify that conditions (C1) and (C2) hold for any $j\in[m]$, it
  is necessary to estimate the probability that operator $D$ produces
  a search point $x'$ with $j+1$ one-bits when applied to a population
  $P$ containing at least $\gamma_0\lambda$ individuals, each having
  at least $j$ one-bits (formally $|P\cap A_{j-1}^+|\geq
  \gamma_0\lambda$). Such an event is called a \emph{successful sample}.
  
  Condition (C1) asks for bounds $z_j$ for each $j\in[m]$ on the
  probability that the search point $x'$ returned by Algorithm
  \ref{alg:d-mucommalambda} contains $j+1$ one-bits.  First chose the
  parameter setting $\gamma_0:=\mu/\lambda$. This parameter setting is
  convenient, because the selection step in
  Algorithm~\ref{alg:d-mucommalambda} always picks an individual $x$
  among the best $\mu=\gamma_0\lambda$ individuals in the population.
  By the assumption that $|P\cap A_{j-1}^+|\geq \gamma_0\lambda$, the
  algorithm will always select an individual $x$ containing at least
  $j+i$ one-bits for some non-negative integer $i\geq 0$.

  Assume without loss of generality, that the first $j$ bit-positions
  in the selected individual $x$ are one-bits, and let $k, j<k\leq n$,
  be any of the other bit positions. If there is a zero-bit in
  position $k$ or if $i\geq 2$, then a successful sample occurs if the
  mutation operator flips only bit position $k$. If there is a one-bit
  in position $k$, and if $i=1$, then the step is still successful if
  the mutation operator flips none of the bit positions. Since the
  probability of not flipping a position is higher than the
  probability of flipping a position, i.e., $1-\chi/n\geq \chi/n$, the
  probability of a successful sample is therefore in both cases at least
  \begin{align}
    (n-j) (\chi/n)(1-\chi/n)^{n-1}.    
  \end{align}
  By Lemma~\ref{lemma:bound}, the probability above is at least
  $z_j:=(n-j)(\chi/n)e^{-\chi}(1-\delta).$ The parameter $z_*$ is
  chosen to be the minimal among these probabilities,
  i.e. $z_*:=(\chi/n)e^{-\chi}(1-\delta)$.

  Condition (C2) assumes in addition that $\gamma\lambda<\mu$
  individuals have fitness $j+1$ or higher. In this case, it suffices
  that the selection mechanism picks one of the best $\gamma\lambda$
  individuals among the $\mu$ individuals, and that none of the bits
  are mutated in the selected individual. The probability of this
  event is at least
  \begin{align*}
    \frac{\gamma\lambda}{\mu} (1-\chi/n)^{n}\geq
    \frac{\gamma\lambda}{\mu} e^{-\chi}(1-\delta)
  \end{align*}
  Hence, to satisfy condition (C2), it suffices to require that
  \begin{align*}
    \frac{\gamma\lambda}{\mu} \exp(-\chi)(1-\delta)
    \geq \gamma(1+\delta),
  \end{align*}  
  which is true whenever 
  \begin{align*}
    \frac{\lambda}{\mu} 
    \geq \left(\frac{1+\delta}{1-\delta}\right) e^{\chi}.
  \end{align*}

  To check condition (C3), notice that 
  $\varepsilon = \delta/2$, and
  $c = \delta^4/384$, hence
  \begin{align*}
    a & = \frac{\delta^2(\lambda/\mu)}{2(1+\delta)}\geq
    \frac{\delta^2e^{\chi}}{2(1-\delta)},\text{ and}\\
    ac\varepsilon z_* & \geq \frac{\delta^2\chi}{2n}c\varepsilon = \frac{\delta^7\chi}{1536n}
  \end{align*}
  Condition (C3) is now satisfied, because the population size
  $\lambda$ is required to fulfil
  \begin{align*}
    \frac{2}{a}\ln\left(\frac{16m}{ac\varepsilon z_*}\right)
    & \leq \frac{4(1-\delta)}{\delta^2e^\chi}
      \ln\left(\frac{24576mn}{\delta^7\chi}\right)
      \leq \lambda
  \end{align*}
  All conditions are satisfied, and the theorem follows.
\end{proof}

\subsection{Conclusions}

The artificial fitness levels method was first described by Wegener
\cite{Wegener2002}.  The original method was designed for the
achievement of upper bounds on the runtime of stochastic search
heuristics using only one individual such as the (1+1)~EA.  Since
then, several extensions of the method have been devised for the
analysis of more sophisticated algorithms.  Sudholt introduced the
method presented in Section~\ref{AFL-LowerBounds} for the obtainment
of lower bounds on the runtime \cite{Sudholt2010}. In an early study,
\cite{Witt2006} used a potential function that generalises the fitness
level argument of \cite{Wegener2002} to analyse the ($\mu$+1)~EA. His
analysis achieved tight upper bounds on the runtime of the
($\mu$+1)~EA on \LEADINGONES and \ONEMAX by waiting for a sufficient
amount of individuals of the population to \emph{take over} a given
fitness level $A_i$ before calculating the probability to reach a
fitness level of higher fitness. Chen et al. extended the analysis
to offspring populations by analysing the  ($N$+$N$)~EA, also taking 
into account the take over process \cite{Chen2009}.
Lehre introduced a general fitness-level method for
arbitrary population-based EAs with non-elitist selection mechanisms
and unary variation operators \cite{Lehre2011}. This technique was later generalised
further into the level-based method presented in Section
\ref{AFL-NonElitist} \cite{Lehre2014}. The method allows the analysis
of sophisticated non-elitist heuristics such as genetic algorithms
equipped with mutation, crossover and stochastic selection mechanisms,
both for classical as well as noisy and uncertain optimisation
\cite{DangLehre2015Noise}.

\section{Drift Analysis}

Drift analysis is a very flexible and powerful tool that is widely
used in the analysis of stochastic search algorithms.  The high level
idea is to predict the long term behaviour of a stochastic process by
measuring the expected progress towards a target in a single step.
Naturally, a measure of progress needs to be introduced, which is
generally called a \emph{distance function}.  Given a random variable
$X_k$ representing the current state of the process at step $k$, over
a finite set of states $S$, a distance function $d:S \rightarrow
\mathbb{R}_0^+$ is defined such that $d(X_k) = 0$ if and only if $X_k$
is a target point (e.g., the global optimum).  Drift analysis aims at
deriving the expected time to reach the target by analysing the
decrease in distance in each step, i.e., $d(X_{k+1}) - d(X_k)$. The
expected value of this decrease in distance, $\Delta_k =
\expect{d(X_{k+1}) - d(X_k)\mid X_k}$ is called the \emph{drift}. See
Figure~\ref{fig:drift} for an illustration.  If the initial distance
from the target is $d(X_0)$ and a bound on the drift $\Delta$ (i.e.,
the expected improvement in each step) is known, then bounds on the
expected runtime to reach the target may be derived.

\begin{figure}[t]
\begin{center}
        \includegraphics[width=7cm]{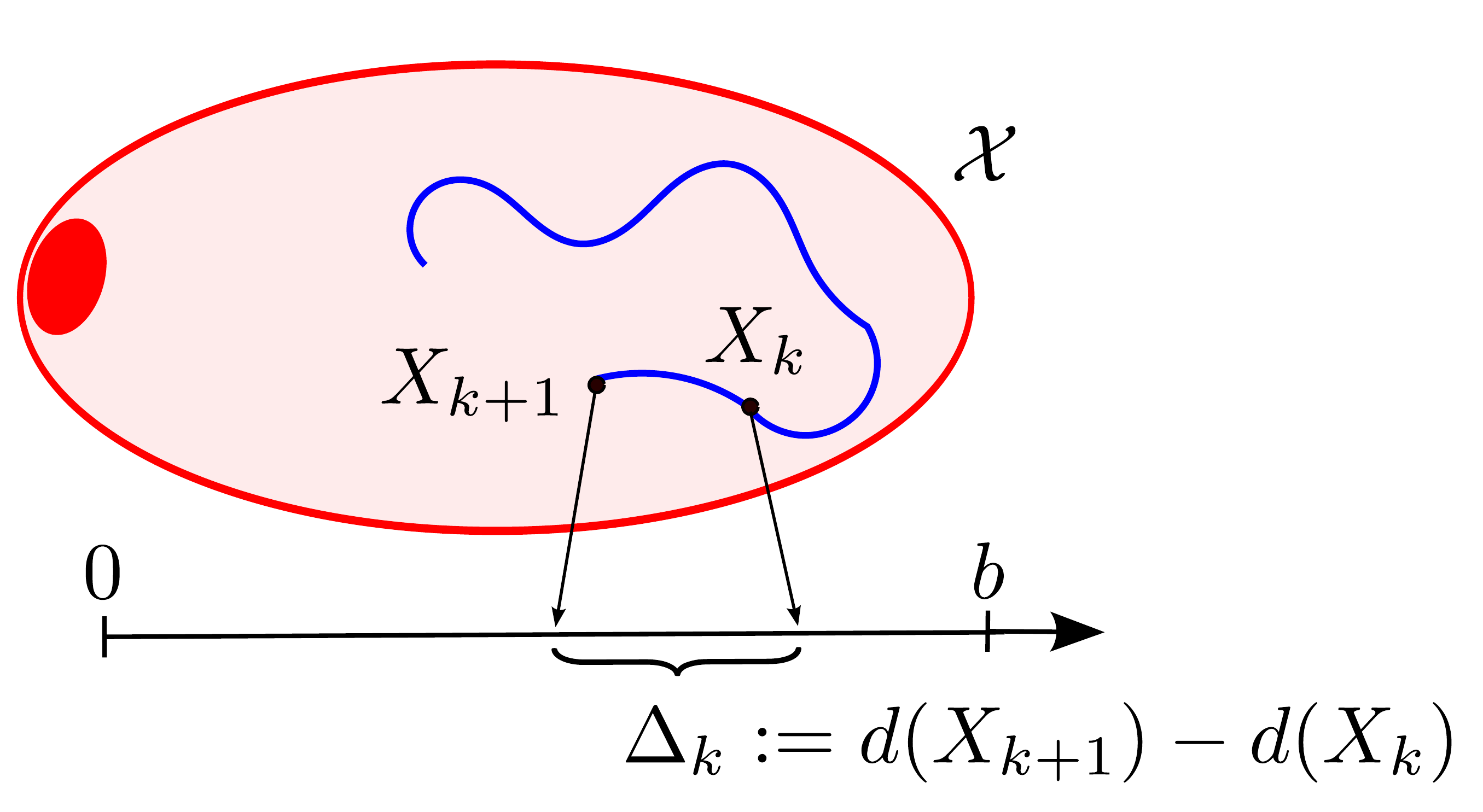}
      \end{center}
      \caption{An illustration of the drift $\Delta$ at time step $k$
        of a process represented by the random variable $X$ and a
        distance function $d$.}
\label{fig:drift}       
\end{figure}

\subsection{Additive Drift Theorem}

The additive drift theorem was introduced to the field of evolutionary
computation by He and Yao \cite{HeYao2001}.  The theorem allows to
derive both upper and lower bounds on the runtime of stochastic search
algorithms.  Consider a distance function $Y_k = d(X_k)$ indicating
the current distance, at time $k$, of the stochastic process from the
optimum.  The theorem simply states that if at each time step $k$, the
drift is \emph{at least} some value $-\varepsilon$ (i.e., the process
has moved closer to the target) then the expected number of steps to
reach the target is \emph{at most} $Y_0/\varepsilon$. Conversely if
the drift in each step is \emph{at most} some value $-\varepsilon$,
then the expected number of steps to reach the target is \emph{at
  least} $Y_0/\varepsilon$.

\begin{theorem}[Additive Drift Theorem] \label{additivedrift} Given a
  stochastic process $X_1, X_2,\ldots $ over an interval
  $[0,b]\subset\mathbb{R}$ and a distance function $d:S \rightarrow
  \mathbb{R}_0^+$ such that $d(X_k)=0$
  if and only if $X$ contains the target. Let $Y_k = d(X_k)$ for all $k$, define $T:=\min\{k\geq 0\mid Y_k= 0\}$, and assume $\expect{T}<\infty$.\\
  Verify the following conditions:
    \begin{itemize}
      \setlength{\itemindent}{2em}
    \item[(C1+)] $\forall k\quad \expect{Y_{k+1}-Y_k\mid Y_k>0}\leq-\varepsilon$
    \item[(C1$-$)] $\forall k\quad  \expect{Y_{k+1}-Y_k\mid Y_k>0}\geq-\varepsilon$
    \end{itemize}    
  Then,  
    \begin{enumerate}
    \item If (C1+) holds for an $\varepsilon>0$, then
      $\expect{T\mid Y_0}\leq b/\varepsilon$.
    \item If (C1$-$) holds for an $\varepsilon>0$, then
      $\expect{T\mid Y_0}\geq Y_0/\varepsilon$.
    \end{enumerate}
\end{theorem}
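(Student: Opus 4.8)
The plan is to run the classical stopped‑process telescoping argument (this is the version of He and Yao's proof that is cleanest to carry out). Let $\mathcal{F}_k:=\sigma(X_0,\dots,X_k)$ be the natural filtration and work with the stopped process $\widetilde Y_k:=Y_{\min\{k,T\}}$; since the target is absorbing (once reached it is kept, so $T$ is unchanged by freezing the process there) this loses nothing. The starting point is a single one‑step estimate. On the event $\{T>k\}\in\mathcal{F}_k$ we have $Y_k>0$ and $\widetilde Y_{k+1}=Y_{k+1}$, so reading (C1+) as the statement that $\expect{Y_{k+1}-Y_k\mid\mathcal{F}_k}\leq-\varepsilon$ holds pointwise on $\{Y_k>0\}$; on $\{T\leq k\}$ the stopped process is frozen. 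Hence
\[
\expect{\widetilde Y_{k+1}\mid\mathcal{F}_k}\;\leq\;\widetilde Y_k-\varepsilon\cdot\mathbf{1}\{T>k\},
\]
and taking expectations conditional on $Y_0$ gives $\expect{\widetilde Y_{k+1}}\leq\expect{\widetilde Y_k}-\varepsilon\,\Prob{T>k}$, using $\widetilde Y_0=Y_0$.

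Next I would telescope this from $k=0$ to $k=N-1$ and use $\widetilde Y_N\geq0$ to obtain, for every $N$,
\[
0\;\leq\;\expect{\widetilde Y_N}\;\leq\;Y_0-\varepsilon\sum_{k=0}^{N-1}\Prob{T>k},
\qquad\text{so}\qquad
\varepsilon\sum_{k=0}^{N-1}\Prob{T>k}\;\leq\;Y_0\;\leq\;b .
\]
The partial sums are monotone in $N$, so letting $N\to\infty$ and using $\expect{T}=\sum_{k\geq0}\Prob{T>k}$ (valid since $T$ is a non‑negative integer variable) yields $\varepsilon\,\expect{T\mid Y_0}\leq Y_0\leq b$, which is the first claim (in fact with the slightly stronger bound $Y_0/\varepsilon$).

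For the second claim I would repeat the computation with (C1$-$), which reverses the one‑step inequality to $\expect{\widetilde Y_{k+1}\mid\mathcal{F}_k}\geq\widetilde Y_k-\varepsilon\,\mathbf{1}\{T>k\}$, and telescoping now gives $\expect{\widetilde Y_N}\geq Y_0-\varepsilon\sum_{k=0}^{N-1}\Prob{T>k}$. The one genuinely non‑formal step is passing to the limit on the left: since $\expect{T}<\infty$ we have $T<\infty$ almost surely, hence $\widetilde Y_N=Y_{\min\{N,T\}}\to Y_T=0$ pointwise, and because the process stays in the bounded interval $[0,b]$ the bounded convergence theorem gives $\expect{\widetilde Y_N}\to0$. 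Letting $N\to\infty$ therefore yields $0\geq Y_0-\varepsilon\,\expect{T\mid Y_0}$, i.e. $\expect{T\mid Y_0}\geq Y_0/\varepsilon$.

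\textbf{Main obstacle.} The calculations are routine once the stopped process is in place; the real care is in the two limit interchanges and in the role of the hypotheses. For the lower bound both assumptions are used essentially: $\expect{T}<\infty$ guarantees $Y_T=0$ a.s., and the boundedness of $[0,b]$ justifies bounded convergence — dropping the latter, the lower bound can genuinely fail (a process that halves in expectation but whose surviving value can grow without bound reaches $0$ faster than $Y_0/\varepsilon$). A secondary, mostly cosmetic, point is that the literal conditioning ``$\mid Y_k>0$'' in (C1$\pm$) must be read as a pointwise statement on $\{Y_k>0\}$ (equivalently $\{T>k\}$) for the filtration/telescoping step to go through, and one should note that $\widetilde Y_k$ is indeed $\mathcal{F}_k$‑measurable.
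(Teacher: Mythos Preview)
Your argument is correct and is essentially the standard He--Yao proof via the stopped process and telescoping; the handling of the two limit passages (monotone convergence for the upper bound, bounded convergence using $Y_k\in[0,b]$ for the lower bound) is exactly right, as is your remark that the conditioning in (C1$\pm$) must be read pointwise on $\{Y_k>0\}=\{T>k\}$.

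There is nothing to compare against, however: the paper states Theorem~\ref{additivedrift} without proof, attributing it to He and Yao \cite{HeYao2001}, and proceeds directly to an illustrative application. Your write-up would in fact fill that gap cleanly; note also that your upper-bound calculation already gives the sharper conclusion $\expect{T\mid Y_0}\leq Y_0/\varepsilon$, of which the paper's stated bound $b/\varepsilon$ is the uniform (in $Y_0$) consequence.
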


%
%
%
%
%

\begin{figure}[t]
\begin{center}
        \includegraphics[width=11cm]{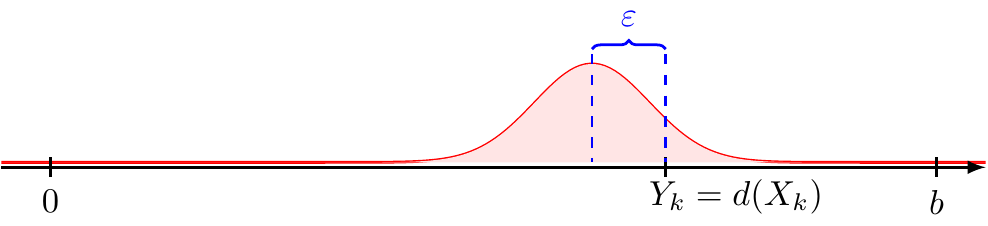}
      \end{center}
      \caption{An illustration of the condition of the Additive Drift
  Theorem. If the expected distance to the optimum decreases of
  \emph{at least} $\varepsilon$ at each step (i.e., condition C1+),
  then an upper bound on the runtime is achieved. If the distance
  decreases of \emph{at most} $\varepsilon$ at each step (i.e.,
  condition C1-), then a lower bound on the runtime is obtained.}
\label{fig:distribution}       
\end{figure}

An Example application of the additive drift theorem follows
concerning the (1+1)~EA for plateau blocks of functions of unitation
of length $m$ positioned such that $k> n/2 + \varepsilon n$.

\begin{theorem}
  The expected runtime of the (1+1)-EA for a \emph{plateau block} of
  length $m$ ending at position $k> n/2 + \varepsilon n$ is
  $\Theta(m)$.
\end{theorem}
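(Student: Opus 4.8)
The plan is to use the Additive Drift Theorem (Theorem~\ref{additivedrift}) with the natural distance function $d(x) := n - |x| - k$, counting the number of zero-bits the current search point has in excess of the $k$ zeroes at the end of the plateau block. On a plateau block all search points with between $k$ and $k+m$ zeroes have the same fitness $a$, so every offspring produced from a plateau point is accepted; since $k > n/2 + \varepsilon n$, the current search point has at least $n/2 + \varepsilon n$ zeroes and at most $n/2 - \varepsilon n$ ones, so the algorithm performs what is essentially an unbiased random walk but with a genuine bias toward flipping zeroes into ones (there are strictly more zeroes than ones). The target is $d = 0$, i.e.\ reaching a bitstring with exactly $k$ zeroes (the end of the block), and the initial distance is at most $m$, so $b = m$.

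For the \emph{upper bound} I would verify condition (C1+): when $d(X_t) > 0$ the search point has $z \geq k+1 > n/2$ zeroes and $o = n - z < n/2$ ones. The expected one-step change in the number of zeroes is $\expect{d(X_{t+1}) - d(X_t) \mid X_t} = (o - z)/n$, because each zero flips to a one with probability $1/n$ (decreasing $d$) and each one flips to a zero with probability $1/n$ (increasing $d$), and all such moves are accepted on the plateau. Since $z - o \geq 2\varepsilon n$, this drift is at most $-2\varepsilon$, a negative constant, so (C1+) holds with $\varepsilon' := 2\varepsilon$ and the Additive Drift Theorem gives $\expect{T} \leq m/(2\varepsilon) = O(m)$. (One must also confirm $\expect{T} < \infty$, which follows from the standard $O(n^n)$ bound, or directly from the positive drift.) For the \emph{lower bound} I would apply condition (C1$-$): the same computation shows the drift is at least $(o-z)/n \geq -1$ (in fact at least $-(1 - 2\varepsilon)$, but $-1$ suffices), and one must be slightly careful because (C1$-$) as stated bounds $\expect{Y_{t+1} - Y_t \mid Y_t > 0}$ from below by $-\varepsilon$ over the whole interval; taking $\varepsilon' := 1$ yields $\expect{T} \geq Y_0 / 1 = Y_0$. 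To get $\Omega(m)$ rather than merely $\Omega(Y_0)$, I would argue that with constant probability the random initialisation places the search point near the left end of the block so that $Y_0 = \Omega(m)$ — for instance, by a Chernoff bound the initial number of ones is $n/2 \pm o(n)$ with overwhelming probability, and if the block sits entirely in the region $[k, k+m]$ of the unitation axis with $k + m$ not much above $n/2$, then conditional on entering the block the algorithm enters it with $\Theta(m)$ zeroes to remove; combined with the per-step drift bound this gives $\expect{T} = \Omega(m)$.

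The main obstacle is handling the boundary and the conditioning correctly rather than the drift computation itself, which is routine. Specifically: (i) the Additive Drift Theorem requires the process to stay in $[0,b]$, but on a plateau the zero-count can in principle also drift \emph{above} $k+m$ (back toward the left end) since those points are still accepted — so I should either extend the interval to $[0, k+m - k] = [0, m]$ and note that the block is entered from the left with $d \leq m$, or, more cleanly, redefine the stopping time to be the first hitting of $d = 0$ and observe that the relevant state space for the excess-zero count while inside the block is exactly $\{0, 1, \dots, m\}$; (ii) for the lower bound one must ensure the constant-probability event that $Y_0 = \Omega(m)$ is compatible with the algorithm actually being "at the start of the block" — this is where the hypothesis $k > n/2 + \varepsilon n$ does real work, guaranteeing that a typical initial string (with $\approx n/2$ ones) lies at or before the block's left boundary, so the block is traversed from one end and $Y_0$ is genuinely $\Theta(m)$ with constant probability. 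Once these boundary issues are pinned down, combining the two drift bounds gives $\expect{T} = \Theta(m)$.
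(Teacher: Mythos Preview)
Your approach is essentially the paper's: additive drift with the distance $d(x)=n-|x|-k$, the drift computation $(o-z)/n$, and the observation that $z\geq k>n/2+\varepsilon n$ forces a constant negative drift of at most $-2\varepsilon$. The upper bound is exactly what the paper does, and your looser lower drift bound (the paper uses $z\leq m+k$ to get drift at most $2(m+k)/n-1$ in the decreasing convention, rather than your $1$) still yields $\expect{T}\geq Y_0$, which is enough.

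The one genuine gap is your treatment of $Y_0$. In the paper's framework these block runtimes are \emph{traversal times}: the statement is about the time to go from the left end of the block (a bitstring with $m+k$ zeroes) to the right end ($k$ zeroes), as one summand in the decomposition $\expect{T}\leq\sum_i\expect{T_i}$. The paper therefore simply takes $Y_0=m$ by assumption, and the lower bound $\expect{T}\geq Y_0/\varepsilon'$ immediately gives $\Omega(m)$. There is no random initialisation in this statement at all.

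Your attempt to recover $Y_0=\Omega(m)$ via Chernoff on the initial string is not just unnecessary but actually fails under the hypothesis: since $k>n/2+\varepsilon n$, every point in the block has more than $n/2+\varepsilon n$ zeroes, whereas a random initial point has $n/2+o(n)$ zeroes with overwhelming probability, i.e.\ \emph{fewer} zeroes than the right end of the block. From random initialisation the algorithm would already be past this block and (by elitism) would never enter it; your conditioning ``if $k+m$ is not much above $n/2$'' contradicts $k>n/2+\varepsilon n$. Drop that paragraph, take $Y_0=m$ as given by the block-traversal convention, and the lower bound is immediate. Your boundary remark (i) is a fair technical point that the paper glosses over; noting that moves to more than $k+m$ zeroes are either rejected (previous block has lower fitness) or only help the upper bound resolves it.
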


\begin{proof}
  The additive drift theorem will be applied to derive both upper and
  lower bounds on the expected runtime.  The starting point is a
  bitstring $X_0$ with $m+k$ zeroes and the target point is a
  bitstring $X_t$ with $k$ zeroes.  Choose to use the natural distance
  function $Y_t = d(X_t) := n -|X_t|$ that counts the number of zeroes
  in the bitstring.  Subtract $k$ from the distance such that target
  points with $k$ zeroes have distance 0 and the initial point has
  distance $m$.  As long as points on the plateau are generated, they
  will be accepted because all plateau points have equal fitness.
  Given that each bit flips with probability $1/n$, and at each step
  the current search point has $Y_t$ zeroes and $n-Y_t$ ones, the
  drift is
\[
 \Delta_t := \expect{Y_t - Y_{t+1}\mid Y_t>0} = \frac{Y_t}{n}  - \frac{n-Y_t}{n} = \frac{2 \cdot Y_t}{n} -1
\]
A lower bound on the drift is obtained by considering that as long as
the end of the plateau has not been reached there are always at least
$k$ zeroes that may be flipped (i.e., $Y_t \geq k$). Accordingly for
an upper bound, at most $m+k$ zeroes may be available to be flipped
(i.e., $Y_t \leq m+k$). Hence,
\[
 \frac{2k}{n} -1 \leq \Delta_t \leq \frac{2(m+k)}{n} -1
\]
Then by additive drift analysis (Theorem~\ref{additivedrift}),
\[
 \expect{T \mid Y_0} \leq \frac{m}{(2k)/n -1} = \frac{mn}{2k-n} = O(m)
\]
and 
\[
  \expect{T \mid Y_0} \geq \frac{m}{2(m+k)/n -1} = \frac{mn}{2(m+k)-n} = \Omega(m)
\]
where the last equalities hold as long as $k > n/2 + \varepsilon n$.
\end{proof}
Note again that if the plateau block is followed by a gap block, then
an upper bound on the expected time to optimise both blocks is
achieved by multiplying the upper bounds obtained for each block. This
is necessary because points in the gap will not be accepted by the
(1+1)~EA.

\subsection{Multiplicative Drift Theorem} \label{sec:multiplicative}
In the additive drift theorem the worst case decrease in distance is
considered.  If the expected decrease in distance changes considerably
in different areas of the search space, then the estimate on the drift
may be too pessimistic for the obtainment of tight bounds on the
expected runtime.

Drift analysis of the (1+1)~EA for the classical \ONEMAX function will
serve as an example of this problem. Since the global optimum is the
all-ones bitstring and the fitness increases with the number of ones a
natural distance function is $Y_t = d(X_t) = n - \ONEMAX(X_t)$ which
simply counts the number of zeroes in the current search point.  Then
the distance will be zero once the optimum is found.  Points with less
one-bits than the current search point will not be accepted by the
algorithm because of their lower fitness.  So the drift is always
positive, i.e., $\Delta_t \geq 0$ and the amount of progress is the
expected number of ones gained in each step.  In order to find an
upper bound on the runtime, a lower bound on the drift is needed
(i.e., the worst case improvement).  Such worst case occurs when the
current search point is optimal except for one 0-bit.  In this case
the maximum decrease in distance that may be achieved in a step is
$Y_t - Y_{t+1}=1$ and to achieve such progress it is necessary that
the algorithm flips the zero into a one and leaves the other bits
unchanged. Hence, the drift is
\[
 \Delta_t \geq 1 \cdot \frac{1}{n} \left(1 - \frac{1}{n}\right)^{n-1} \geq \frac{1}{en} := \varepsilon
\]
Since the expected initial distance is $\expect{Y_0} = n/2$ due to
random initialisation, the drift theorem yields
\[
 \expect{T \mid Y_0} \leq \frac{\expect{Y_0}}{\varepsilon} = \frac{n/2}{1/(en)} = e/2 \cdot n^2 = O(n^2)
\]

In Section~\ref{sec:fitnesslevels} it was proven that the runtime of
the (1+1)~EA for \ONEMAX is $\Theta(n\ln n)$, hence a bound of
$O(n^2)$ is not tight.  The reason is that on functions such as
\ONEMAX the amount of progress made by the algorithm depends crucially
on the distance from the optimum.  For \ONEMAX in particular, larger
progress per step is achieved when the current search point has many
zeroes that may be flipped.  As the algorithm approaches the optimal
solution the amount of expected progress in each step becomes smaller
because search points have increasingly more one-bits than zero-bits
in the bitstring. In such cases a distance function that takes into
account these properties of the objective function needs to be used.
For \ONEMAX a correct bound is achieved by using a distance function
that is logarithmic in the number of zeroes $i$, i.e., $Y_t = d(X_t)
:= \ln (i+1)$ where a $1$ is added to $i$ in the argument of the
logarithm such that the global optimum has distance zero (i.e.,
$\ln(1)=0$).  With such distance measure, the decrease in distance
when flipping a zero and leaving the rest of the bitstring unchanged
is
\[ 
\ln(i+1) - \ln(i) = \ln \left(1 + \frac{1}{i}\right) \geq \frac{1}{2i} 
\]
where the last inequality holds for all $i \geq 1$.  Since it is
sufficient to flip a zero and leave everything else unchanged to
obtain an improvement, the drift is
\[
 \Delta_t \geq \frac{i}{en} \cdot \frac{1}{2i} = \frac{1}{2en} := \varepsilon
\]

Given that the maximum possible distance is $Y_0 \geq \ln(n+1)$, the
drift theorem yields
\[
 \expect{T} \leq \frac{Y_0}{1/(2en)} = 2en \cdot \ln(n+1) = O(n \ln n).
\]

The multiplicative drift theorem was introduced as a handy tool to deal with situations as the one described above where the amount of progress depends
on the distance from the target.

\begin{theorem}[Multiplicative Drift Theorem~\cite{Doerr2010}]
  Let $\{X_{t}\}_{t\in\mathbb{N}_0}$ be random variables describing a
  Markov process over a finite state space $S \subseteq \mathbb{R}$.
  Let $T$ be the random variable that denotes the earliest point in
  time $t \in \mathbb{N}_0$ such that $X_{t} = 0$.  If there exist
  $\delta, c_{\min}, c_{\max} > 0$ such that for all $t < T$,
	\begin{enumerate}
		\item $\expect{X_{t}-X_{t+1} \mid X_{t}} \geq \delta X_{t}$ and
		\item $c_{\min} \leq X_{t} \leq c_{\max}$,
	\end{enumerate}
 then
		\[\expect{T} \leq \frac{2}{\delta} \cdot \ln\left(1+\frac{c_{\max}}{c_{\min}}\right)\]
\end{theorem}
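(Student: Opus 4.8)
The plan is to derive the bound from the additive drift theorem (Theorem~\ref{additivedrift}) by a logarithmic change of variable. First I would introduce the distance function $d:[0,c_{\max}]\to\mathbb{R}_0^+$ defined by $d(x):=\ln\!\left(1+x/c_{\min}\right)$ and set $Y_t:=d(X_t)$. This particular $d$ is chosen for three reasons: it is increasing, it is \emph{concave on all of $[0,c_{\max}]$, including the point $0$}, and $d(0)=0$ while $d(x)>0$ for $x>0$; consequently $Y_t=0$ exactly when $X_t=0$, so $T=\min\{t\mid Y_t=0\}$ as required, and $0\le X_t\le c_{\max}$ (assumption~2, together with absorption at $0$ after time $T$) gives $0\le Y_t\le d(c_{\max})=\ln(1+c_{\max}/c_{\min})=:b$. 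The point of using $\ln(1+x/c_{\min})$ rather than the more obvious $\ln(x/c_{\min})$ is precisely to remove the singularity at $x=0$: because $d$ is finite and concave up to and including $0$, Jensen's inequality can be applied directly to the possibly $0$-valued variable $X_{t+1}$.

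Next I would bound the (additive) drift of $Y_t$. Fix $t<T$ and condition on $X_t=x$, noting $x\ge c_{\min}$. By Jensen's inequality (concavity of $d$), then monotonicity of $d$ together with assumption~1, $\expect{Y_t-Y_{t+1}\mid X_t=x}=d(x)-\expect{d(X_{t+1})\mid X_t=x}\ge d(x)-d\!\left(\expect{X_{t+1}\mid X_t=x}\right)\ge d(x)-d((1-\delta)x)$. Writing the last difference as $\ln\!\left(1+\frac{\delta x}{c_{\min}+(1-\delta)x}\right)$ and observing that the inner fraction is at least $\frac{\delta}{2-\delta}$ \emph{precisely because} $x\ge c_{\min}$, I get $d(x)-d((1-\delta)x)\ge\ln\tfrac{2}{2-\delta}=-\ln(1-\delta/2)\ge\delta/2$. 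Hence condition (C1+) of Theorem~\ref{additivedrift} holds with $\varepsilon=\delta/2$, and the additive drift theorem yields $\expect{T}\le b/\varepsilon=\frac{2}{\delta}\ln(1+c_{\max}/c_{\min})$, as claimed.

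The genuinely non-routine step is the choice of $d$: it must (i) vanish only at the target, (ii) be concave all the way down to $0$ so Jensen survives the event $\{X_{t+1}=0\}$, and (iii) produce the one-step estimate $d(x)-d((1-\delta)x)\ge\delta/2$ with exactly the constant needed to recover the stated bound — and it is in exactly this last estimate that the hypothesis $X_t\ge c_{\min}$ is consumed. The remaining points are minor bookkeeping that I would dispatch quickly: one may assume $\delta\in(0,1)$ (if $\delta\ge1$ then assumption~1 and $X_{t+1}\ge 0$ force $X_{t+1}=0$ and the statement is trivial); conditioned on $t<T$ the support of $X_{t+1}$ lies in $[0,c_{\max}]$, so both the Jensen step and the bound $Y_t\le b$ are legitimate; and the a~priori requirement $\expect{T}<\infty$ of Theorem~\ref{additivedrift} is automatic here, since a drift of at least $\delta/2$ on the bounded interval $[0,b]$ already forces a finite expected hitting time, so there is no circularity.
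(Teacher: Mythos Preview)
The paper does not actually prove this theorem: it is stated with a citation to \cite{Doerr2010} and then used as a black box. What the paper does do, in the discussion immediately preceding the theorem statement, is work out the \ONEMAX\ case by hand using the additive drift theorem with the logarithmic distance $d(X_t)=\ln(i+1)$ where $i$ is the number of zero-bits; this is presented as motivation for why a multiplicative drift theorem is desirable, not as a proof of it.

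Your argument is correct, and it is exactly the natural abstraction of that \ONEMAX\ computation: you replace the ad-hoc $\ln(i+1)$ by the general $d(x)=\ln(1+x/c_{\min})$, use concavity and Jensen to pass the expectation inside, and then reduce to Theorem~\ref{additivedrift}. The key estimate $d(x)-d((1-\delta)x)\ge -\ln(1-\delta/2)\ge \delta/2$ for $x\ge c_{\min}$ is clean and is precisely where the lower bound $c_{\min}$ is used, as you note. The handling of the edge cases ($\delta\ge 1$, support of $X_{t+1}$, finiteness of $\expect{T}$) is fine. So your write-up in fact supplies a proof that the paper omits, using the same logarithmic-rescaling idea the paper only sketches for a special case.
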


The following derivation of an upper bound on the runtime of the
(1+1)~EA for linear blocks illustrates the multiplicative drift
theorem.

\begin{theorem}
  The expected time for the (1+1)-EA to optimise a linear unitation
  block of length $m$ ending at position $k$ is $O(n \ln ((m+k)/k))$
\end{theorem}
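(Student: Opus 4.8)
The plan is to re-derive this upper bound by drift analysis, mirroring the logarithmic-potential computation already carried out for \ONEMAX earlier in this section. I would first record the structural facts that make the argument work. Inside a \LINEAR block the fitness is strictly increasing in the number of one-bits, so by elitism the number of zero-bits of the current search point of the (1+1)~EA never increases, every offspring with strictly more one-bits is accepted, and the block is ``crossed'' precisely the first time the zero-count falls to $k$. By the block-decomposition argument of the Test Functions section it suffices to bound the time for the (1+1)~EA to go from a search point with $m+k$ zero-bits to one with $k$ zero-bits, and for a lower bound on the drift it suffices to count only single-bit flips.

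Let $z$ denote the number of zero-bits, and take the distance function $d(x):=\ln(z)-\ln(k)=\ln(z/k)$, i.e.\ the logarithmic \ONEMAX potential, shifted so that the block's lower endpoint $z=k$ has distance $0$; at the top of the block $d=\ln((m+k)/k)$, and $d$ is non-increasing. From a state with $z>k$ zeros, flipping exactly one of the $z$ zero-bits and none of the other $n-1$ bits happens with probability at least $\tfrac1n(1-\tfrac1n)^{n-1}\ge\tfrac1{en}$; there are $z$ such mutually exclusive events, each such offspring is accepted (one more one-bit, still inside the block or exiting at its bottom), and each lowers the potential by $\ln(z/(z-1))\ge 1/z$. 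Hence the drift is at least $z\cdot\tfrac1{en}\cdot\tfrac1z=\tfrac1{en}$, and mutations flipping several bits can only lower $d$ further. Feeding $\varepsilon:=1/(en)$ and $b:=\ln((m+k)/k)$ into the Additive Drift Theorem (Theorem~\ref{additivedrift}) gives $\expect{T}\le b/\varepsilon=en\ln((m+k)/k)=O(n\ln((m+k)/k))$. This is exactly the ``multiplicative'' picture this section is about — the drift in the raw distance $z-k$ scales with $z$ — so one could also invoke the Multiplicative Drift Theorem on $X_t:=z-k$, with $\delta=\Theta(1/n)$, $c_{\min}=1$, $c_{\max}=m$; the logarithmic potential is the most direct way to obtain precisely the stated form of the bound.

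The step I expect to need the most care is the justification of the drift lower bound, not any calculation: one has to argue that each single-bit-flip move counted above is genuinely accepted and that no accepted move ever increases $d$, which is where the strict monotonicity of \LINEAR inside the block and the elitism of the (1+1)~EA are actually used, together with the minor bookkeeping at the boundary (a mutation that overshoots $z=k$ simply ends the phase and is folded into the target state). The remaining pieces — the estimate $\ln(z/(z-1))\ge 1/z$, and, should one prefer the Multiplicative Drift Theorem with its stated absorbing state $0$, reconciling its $c_{\min}$ with the block endpoint $k$ — are routine.
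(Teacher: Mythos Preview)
Your proposal is correct and essentially the same as the paper's proof: the paper applies the Multiplicative Drift Theorem directly with $X_t$ equal to the number of zero-bits, $\delta=1/(en)$, $c_{\min}=k$, $c_{\max}=m+k$, obtaining $2en\ln(1+(m+k)/k)$; your logarithmic potential plus additive drift is precisely the unpacked form of that argument, exactly as the paper itself motivates the Multiplicative Drift Theorem via the $\ln$-distance for \ONEMAX just before this theorem. Your version even yields a slightly better leading constant ($en$ versus $2en$), and you correctly flag that running the Multiplicative Drift Theorem on $X_t:=z-k$ instead would not recover the stated $k$-dependence.
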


\begin{proof}
  Let $X_t$ be the number of zero-bits in the bitstring at time step
  $t$, representing the distance from the end of the linear block.  By
  remembering that increases in distance are not accepted due to
  elitism, the expected decrease in distance at time step can be
  bounded by
\[
 \expect{X_{t+1}|X_t} \leq X_t - 1 \cdot \frac{X_t}{en} = X_t\left(1 - \frac{1}{en}\right)
\]
simply by considering that if a zero-bit is flipped and nothing else
then the distance decreases by 1. Then the drift is:
\[
\expect{X_{t}- X_{t+1}|X_t} \geq X_t - X_t\left(1 - \frac{1}{en}\right) = \frac{1}{en} X_t := \delta X_t 
\]
By fixing $k=c_{\min} \leq X_t \leq c_{\max} = m+k$ the multiplicative
drift theorem yields
\[
  \expect{T} \leq \frac{2}{\delta} \cdot \ln\left(1+\frac{c_{\max}}{c_{\min}}\right) = {2en} \ln(1+(m+k)/k) 
= O(n\ln ((m+k)/k))
\]
\end{proof}
By fixing $1=c_{\min} \leq X_t \leq c_{\max} = n$ an $O(n \ln n)$
bound on the expected runtime of the (1+1)~EA for \ONEMAX is achieved.

\subsection{Variable Drift Theorem}\label{sec:variable}

The multiplicative drift theorem is applicable when the drift of a
stochastic process is linear with respect to the current
position. However, in some stochastic processes, the drift is
non-linear in the current position, i.e.,
\begin{align}
  \expect{X_t-X_{t+1} \mid  X_t\ge \xmin} \ge h(X_t)
\end{align}
for some function $h$. The following variable drift theorem provides
bounds on the expectation and the tails of the hitting time
distribution of such processes, given some assumptions about the
function $h$.

\begin{theorem}[Corollary 1 in \cite{LehreWitt2013TailDrift}]\label{thm:variable-drift}
  Let $(X_t)_{t\ge 0}$, be a stochastic process over some state space
  $S\subseteq \{0\}\cup [\xmin,\xmax]$, where $\xmin\ge 0$. Let
  $h\colon [\xmin,\xmax]\to\mathbb{R}^+$ be a differentiable function.
  Then the following statements hold for the first hitting time
  $T:=\min\{t\mid X_t=0\}$.
  \begin{enumerate}
  \item[(i)] If $\expect{X_t-X_{t+1} \mid  X_t\ge \xmin} \ge h(X_t)$ and
    $h'(x)\geq 0$, then 
    \begin{align*}
      \expect{T\mid X_0} \le \frac{\xmin}{h(\xmin)} + \int_{\xmin}^{X_0} \frac{1}{h(y)} \,\mathrm{d}y.
    \end{align*}

  \item[(ii)] If $\expect{X_t-X_{t+1} \mid  X_t\ge \xmin} \le h(X_t)$ and
    $h'(x)\leq 0$, then 
    \begin{align*}
      \expect{T\mid X_0} \ge \frac{\xmin}{h(\xmin)} + \int_{\xmin}^{X_0} \frac{1}{h(y)} \,\mathrm{d}y.
    \end{align*}
		
  \item[(iii)] If $\expect{X_t-X_{t+1} \mid  X_t\ge \xmin} \ge h(X_t)$ and 
    $h'(x)\geq \lambda$ for some $\lambda>0$, then
    \begin{align*}
      \Prob{T\ge t \mid X_0} < \exp\left(-\lambda \left(t-\frac{\xmin}{h(\xmin)}-\int_{\xmin}^{X_0} \frac{1}{h(y)}\,\mathrm{d}y\right)\right).
    \end{align*}
  \item[(iv)]
    If $\expect{X_t-X_{t+1} \mid  X_t\ge \xmin} \le h(X_t)$
    and $h'(x)\leq -\lambda$ for some $\lambda>0$, then
    \begin{align*}
      \Prob{T < t \mid X_0>0} < \frac{e^{\lambda t}-e^{\lambda}}{e^\lambda-1} \exp\left(-\frac{\lambda\xmin}{h(\xmin)}-\int_{\xmin}^{X_0} \frac{\lambda}{h(y)}\,\mathrm{d}y\right).
    \end{align*}
  \end{enumerate}
\end{theorem}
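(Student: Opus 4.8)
The plan is to reduce all four statements to drift theorems for the \emph{rescaled} process $Y_t := g(X_t)$, where
\[
g(x) := \frac{\xmin}{h(\xmin)} + \int_{\xmin}^{x}\frac{1}{h(y)}\,\mathrm{d}y \quad\text{for } x\ge\xmin, \qquad g(0):=0.
\]
This $g$ is exactly the right-hand side appearing in the statement, so $Y_0=g(X_0)$, and since $g$ is strictly increasing with $g(0)=0$ the processes $X_t$ and $Y_t$ share the hitting time $T$. The first step is an elementary one-step estimate relating $g$-increments to $X$-increments: when $h$ is non-decreasing (cases (i),(iii)) one has $g(a)-g(b)\ge (a-b)/h(a)$ for all $a,b$ in the state space — checked by considering $b\ge\xmin$ (where it is immediate from $1/h$ being non-increasing on the range of integration) and $b=0$ (where the term $\xmin/h(\xmin)\ge\xmin/h(a)$ absorbs the part below $\xmin$) — and symmetrically, when $h$ is non-increasing (cases (ii),(iv)) the reversed inequality $g(a)-g(b)\le (a-b)/h(a)$ holds.

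For part (i), setting $a=X_t\ge\xmin$, $b=X_{t+1}$ and taking conditional expectations gives $\expect{Y_t-Y_{t+1}\mid X_t}\ge \expect{X_t-X_{t+1}\mid X_t}/h(X_t)\ge h(X_t)/h(X_t)=1$, so $Y_t$ has additive drift at least $1$ while positive. Since the state space is bounded, $Y_t$ lives in $[0,g(\xmax)]$, so applying the Additive Drift Theorem~\ref{additivedrift} (in its $Y_0/\varepsilon$ upper-bound form, with condition (C1+) and $\varepsilon=1$) returns $\expect{T\mid X_0}\le Y_0=g(X_0)$, which is the claim; the hypothesis $\expect{T}<\infty$ required there is either assumed or obtained from a crude waiting-time bound. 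Part (ii) is the mirror image: the reversed one-step inequality yields $\expect{Y_t-Y_{t+1}\mid X_t}\le 1$, i.e.\ condition (C1$-$) with $\varepsilon=1$, and Theorem~\ref{additivedrift} gives $\expect{T\mid X_0}\ge Y_0=g(X_0)$.

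For the tail statements (iii) and (iv), the drift of $Y_t$ is controlled exactly as above, but a tail bound needs in addition control of the \emph{exponential} moments of the one-step change $Y_{t+1}-Y_t$, and this is what the curvature hypotheses $h'\ge\lambda$ (resp.\ $h'\le-\lambda$) provide: integrating the pointwise bound $1/h(y)\le (1/\lambda)(\ln h)'(y)$ shows that a displacement of $X$ of a given size changes $g(X)$ by at most a logarithmic amount, which on the progress event yields an inequality of the shape $e^{\lambda(Y_t-Y_{t+1})}\le 1+\lambda(X_t-X_{t+1})/h(X_t)$ (and the reversed variant for (iv)). From here one runs the standard exponential-potential / Hajek-type argument: for (iii), show that $e^{-\lambda(Y_t+t)}$ is a supermartingale up to time $T$, so $\Pr(Y_t=0)\le e^{\lambda t}e^{-\lambda Y_0}$ and a union bound over the first $t$ steps gives the stated exponential tail; for (iv), show the complementary bound $\Pr(T=j)\le e^{\lambda j-\lambda Y_0}$ and sum the geometric series $\sum_{j=1}^{t-1}e^{\lambda j}=(e^{\lambda t}-e^{\lambda})/(e^{\lambda}-1)$ to recover the displayed prefactor.

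The main obstacle is precisely this last point: converting the curvature condition into a clean bound on $\expect{e^{\pm\lambda(Y_{t+1}-Y_t)}\mid\mathcal F_t}$, and in particular handling the ``wrong-direction'' steps in which $X$ moves away from $0$ and $Y$ increases, for which no a priori jump bound is given. Monotonicity of $h$ already guarantees such steps do not hurt the \emph{drift}, but one must verify they do not inflate the exponential moment either — and this is exactly where $h'\ge\lambda$ (equivalently, the sub-linear growth of $g$) is essential. Once the supermartingale property is in place, the remaining work — the union bound and summing the geometric series — is routine.
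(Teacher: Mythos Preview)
The paper does not prove Theorem~\ref{thm:variable-drift}; it is quoted as ``Corollary~1 in \cite{LehreWitt2013TailDrift}'' and then used as a black box in the subsequent application to linear functions. So there is no in-paper proof to compare against.

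That said, your reduction for parts (i) and (ii) is correct and is the standard argument (and the one used in the cited source): the transformation $Y_t=g(X_t)$ with $g(x)=\xmin/h(\xmin)+\int_{\xmin}^{x}1/h(y)\,\mathrm{d}y$, together with the pointwise estimate $g(a)-g(b)\ge (a-b)/h(a)$ (respectively $\le$) coming from the monotonicity of $h$, yields unit additive drift for $Y_t$, and Theorem~\ref{additivedrift} closes. One minor point: the version of the additive drift theorem stated in \emph{this} chapter gives only $b/\varepsilon$ as the upper bound, not $Y_0/\varepsilon$; the sharper form you invoke is standard but strictly external.

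For (iii) and (iv), the sketch has concrete problems beyond the gap you already flag. First, the displayed pointwise inequality $e^{\lambda(Y_t-Y_{t+1})}\le 1+\lambda(X_t-X_{t+1})/h(X_t)$ is false: take $h(x)=\lambda x$ (so $h'=\lambda$) and $X_{t+1}=0$; then the left side equals $e\cdot X_t/\xmin$ while the right side equals $2$, and the inequality fails already at $X_t=\xmin$. Second, the supermartingale you propose for (iii) points the wrong way. Requiring $e^{-\lambda(Y_t+t)}$ to be a supermartingale on $\{T>t\}$ is equivalent to $\expect{e^{\lambda(Y_t-Y_{t+1})}\mid\mathcal{F}_t}\le e^{\lambda}$, but Jensen together with $\expect{Y_t-Y_{t+1}\mid\mathcal{F}_t}\ge 1$ forces the \emph{reverse} inequality, so such a supermartingale cannot exist except in the degenerate deterministic case. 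Even if it did, it would bound $\Pr(Y_t=0)$, i.e.\ the lower tail $\Pr(T\le t)$, which is the opposite of what (iii) asks. The potential that actually works for (iii) is $e^{\lambda Y_t}$: one needs $\expect{e^{\lambda(Y_{t+1}-Y_t)}\mid\mathcal{F}_t}\le e^{-\lambda}$ on $\{T>t\}$, after which $\Pr(T>t)\le e^{-\lambda(t-Y_0)}$ follows directly from Markov's inequality, with no union bound. Establishing \emph{that} exponential-moment inequality from $h'\ge\lambda$ is the real content, and the argument in \cite{LehreWitt2013TailDrift} goes through a more general moment-generating-function condition rather than the simple pointwise bound you sketch. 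Your plan for (iv) --- bounding $\Pr(T=j)$ and summing the geometric series to recover the prefactor $(e^{\lambda t}-e^{\lambda})/(e^{\lambda}-1)$ --- is at least aimed at the correct tail.
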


To illustrate the variable drift theorem, an upper bound on the
optimisation time of the (1+1) EA on the class of linear functions
with bounded coefficients will be derived. More formally, this class
of functions  contains any function of the form
\begin{align*}
  f(x) := \sum_{i=1}^{n} w_ix_i,
\end{align*}
with bounded, positive coefficients $w_1,\ldots, w_n\in(\wmin,\wmax)$
where $0<\wmin<\wmax.$

The drift function $h$ in this example turns out to be linear, hence
the multiplicative drift theorem could have been applied instead.

\begin{theorem}
  The expected optimisation time of the (1+1) EA on linear functions is
  less than
  $t(n):=en(\ln(n) + \ln(\wmax/\wmin)+1)$, and the probability that
  the optimisation time 
  exceeds $t(n)+ren$ for any $r\geq 0$ is no more than $e^{-r}$.
\end{theorem}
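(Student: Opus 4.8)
The plan is to apply the variable drift theorem (Theorem~\ref{thm:variable-drift}), parts~(i) and~(iii), with a suitably chosen potential function. The natural distance $n-\ONEMAX(x)$ does not work because, as already observed for \ONEMAX, on a linear function the drift in the number of zero-bits is not monotone enough and, worse, a decrease in the number of zeros need not correspond to an increase in $f$: flipping a heavy zero to one while flipping several light ones to zeros is accepted and \emph{increases} the distance in this crude measure. The classical remedy, due to Doerr, Johannsen and Winzen and refined by Witt, is to weight the bits. First I would sort the coefficients so that $w_1\geq w_2\geq\cdots\geq w_n$, and then define a potential $g(x):=\sum_{i=1}^n \gamma_i(1-x_i)$ where the weights $\gamma_i$ are chosen to interpolate between the uniform weighting (good near the optimum) and a weighting proportional to the $w_i$ (needed so that accepted moves never increase $g$). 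A convenient concrete choice is $\gamma_i := 1$ for $i \le n/2$ and $\gamma_i$ scaled like $\wmax/\wmin$ for the heavier half — or, more cleanly, $\gamma_i = \max\{1,(\wmax/\wmin)\cdot(\text{something})\}$; the key structural requirement is the two bounds $1 \le \gamma_i \le \wmax/\wmin$ and that $\gamma$ is ``aligned'' with $w$ in the sense that whenever $f(y)\geq f(x)$ one has $g(y)\leq g(x)$.

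Once the potential is fixed, the argument proceeds in three steps. \emph{Step 1:} verify monotonicity of acceptance — show that the $(1+1)$~EA under $f$ can never move to a point of strictly larger $g$-value, so that $X_t := g(x^{(t)})$ is a non-increasing-in-expectation process with $X_t=0$ iff the optimum is found. This is where the alignment of $\gamma$ with $w$ is used. \emph{Step 2:} lower-bound the drift. Conditioning on $X_t>0$, there is at least one zero-bit; flipping exactly the position of the currently ``heaviest'' zero-bit (the one with largest $\gamma_i$ among zeros) and nothing else happens with probability $\geq \frac{1}{n}(1-1/n)^{n-1}\geq \frac{1}{en}$, and contributes $\gamma_i \geq \frac{X_t}{n}\cdot(\text{const})$; more carefully, summing over all zero-bits one gets $\expect{X_t-X_{t+1}\mid X_t}\geq \frac{1}{en}X_t =: h(X_t)$ with $h(x)=x/(en)$ linear and increasing. \emph{Step 3:} apply Theorem~\ref{thm:variable-drift}(i) with $\xmin := \wmin/\wmax$ times a suitable constant (the smallest possible positive value of $g$) — actually $\xmin$ can be taken as $1$ after rescaling $g$ so that the lightest weight is $1$ — and $X_0 \leq \xmax := n\cdot(\wmax/\wmin)$. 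The integral evaluates to $en\int_{1}^{X_0}\frac{1}{y}\,dy = en\ln X_0 \leq en\ln(n\wmax/\wmin)$, and the boundary term $\xmin/h(\xmin)=en$, giving $\expect{T\mid X_0}\leq en(\ln n + \ln(\wmax/\wmin)+1)=t(n)$. For the tail bound, apply part~(iii) with $\lambda := 1/(en)$ (since $h'(x)=1/(en)$), which yields
\[
\Prob{T\geq t\mid X_0} < \exp\!\left(-\frac{1}{en}\Bigl(t - t(n)\Bigr)\right),
\]
and substituting $t=t(n)+ren$ gives the claimed bound $e^{-r}$.

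The main obstacle is Step~1 together with the precise choice of the $\gamma_i$: one must pick weights that are simultaneously bounded in the ratio $\wmax/\wmin$ (so the integral and hence $t(n)$ come out right) \emph{and} guarantee that every $f$-improving bit-flip combination is also $g$-non-increasing. Showing the latter requires a careful case analysis on which zeros turn to ones and which ones turn to zeros in an accepted step, using the sorted order of the $w_i$; this is the delicate combinatorial heart of the proof, and getting the constant in $t(n)$ to match exactly ``$\ln n + \ln(\wmax/\wmin)+1$'' pins down the admissible scalings of $\gamma$. Everything after that — the drift computation and the two invocations of the variable drift theorem — is routine.
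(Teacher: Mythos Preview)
Your overall framework is exactly right --- variable drift theorem, weighted potential, linear drift $h(x)=x/(en)$, then parts~(i) and~(iii) with $\lambda=1/(en)$ --- and your Steps~2 and~3 go through essentially as you describe. But you have manufactured an obstacle in Step~1 that is not there, and the specific $\gamma_i$ you sketch would not actually satisfy the monotonicity you need.

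The paper simply takes $\gamma_i = w_i$, i.e.\ the potential is the fitness deficit itself:
\[
X_t \;=\; \sum_{i=1}^n w_i\bigl(1-x_i^{(t)}\bigr) \;=\; \Bigl(\sum_i w_i\Bigr) - f\bigl(x^{(t)}\bigr).
\]
With this choice your Step~1 is \emph{trivial}: the (1+1)~EA is elitist on $f$, so $f(x^{(t+1)})\geq f(x^{(t)})$ always, hence $X_{t+1}\leq X_t$ deterministically. No case analysis, no sorting, nothing. Then the single-bit-flip argument gives $\expect{X_t-X_{t+1}\mid X_t=r}\geq r/(en)$ exactly as you wrote, and one applies the theorem with $\xmin=\wmin$ (the smallest possible positive value of $X_t$) and $X_0\leq n\wmax$, yielding $en + en\ln(n\wmax/\wmin)=t(n)$.

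The elaborate $\gamma_i$ you allude to (Doerr--Johannsen--Winzen, Witt) are designed for the \emph{unbounded} case, where one wants a bound independent of $\wmax/\wmin$. Crucially, those potentials do \emph{not} satisfy the strict monotonicity ``$f(y)\geq f(x)\Rightarrow g(y)\leq g(x)$'' that you posit; accepted steps can increase $g$, and the whole point of those papers is a delicate expected-drift computation showing the increase is outweighed on average. So your Step~1 as stated would fail for the $\gamma$ you propose. Conversely, if you insist on strict monotonicity for all accepted moves, a short argument with two-bit swaps shows you are essentially forced into $\gamma_i\propto w_i$ --- which is exactly the paper's choice, and for which everything is immediate.
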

\begin{proof}
  Define the distance $X_t$ at time
  $k$ to be the function value that ``remains'' at time $k$, i.e.,
    \begin{align*}
      X_t := \left(\sum_{i=1}^n w_i\right) - \left(\sum_{i=1}^n w_i x^{(t)}_i\right) = \sum_{i=1}^nw_i\left(1-x^{(t)}_i\right),
    \end{align*}
    where $x_i^{(t)}$ is the $i$-th bit in the current search point at
    time $t$. For any $i\in[n]$, assume that the mutation operator
    flipped only bit position $i$, and no other bit positions, an
    event denoted by the symbol $\mathcal{E}_i$.  If
    $x_i^{(t)}=0$, then bit position $i$ flipped from 0 to 1 and the
    distance reduced by $w_i$. Otherwise, if $x_i^{(t)}=1$, then bit
    position $i$ flipped from 1 to 0, the new search point was not
    accepted, and the distance reduced by 0. Hence, the distance
    always reduces by $w_i(1-x_{i}^{(t)})$ when event $\mathcal{E}_i$
    occurs.  Using the law of total probability, and noting that the
    distance can never increase, one obtains
    \begin{align*}
      \expect{X_t-X_{t+1}\mid X_t=r} 
      &\geq \sum_{i=1}^n
      \Prob{\mathcal{E}_i\mid X_t=r}
      \expect{X_t-X_{t+1} \mid \mathcal{E}_i\wedge X_t=r}\\
      &\geq 
      \left(\frac{1}{n}\right)
      \left(1-\frac{1}{n}\right)^{n-1}\sum_{i=1}^n w_i\left(1-x_i^{(t)}\right)
      \geq \frac{r}{en}.
    \end{align*}
    Therefore 
    $\expect{X_t-X_{t+1}\mid X_t}\geq h(X_t) $ for the function
    $h(x)=x/en$ which has derivative $h'(x)=1/en$.  By
    Theorem~\ref{thm:variable-drift} part (i), it follows that
    \begin{align*}
      \expect{T\mid X_0} 
      & \leq \frac{\xmin}{h(\xmin)}+\int_{\xmin}^{X_0}\frac{1}{h(y)} \,\mathrm{d}y\\
      & =    \frac{\wmin }{h(\wmin)}+\int_{\wmin}^{n\wmax}\frac{en}{y} \,\mathrm{d}y\\
      & =     en + en(\ln(n\wmax)-\ln(\wmin))\\
      & =     en(\ln(n) + \ln(\wmax/\wmin)+1) =: t(n).
    \end{align*}
    Furthermore, Theorem~\ref{thm:variable-drift} part (iii) with
    $\lambda:=1/en$ gives for any $r\geq 0$
    \begin{align*}
      \Prob{T\geq t(n)+enr}
      \leq e^{-r}.
    \end{align*}
    
\end{proof}

\subsection{Negative-Drift Theorem} \label{sec:negativedrift} The
drift theorems presented in previous subsections are designed to prove
polynomial bounds on the runtime of stochastic search algorithms.  For
this a positive drift towards the optimum is required.  On the other
hand, a negative drift indicates that the stochastic process moves
away from the optimum in expectation at each step.  In such a case it
is unlikely that the algorithm is efficient for the function it is
attempting to optimise.  Rather, an exponential lower bound on the
runtime could probably be proved. The negative-drift theorem is the
standard technique used for the purpose.

Apart from a negative drift, the theorem also requires a second
condition showing that large jumps are unlikely.  The intuitive reason
for this second condition is that if large jumps were possible, then
the algorithm could maybe be able to jump to the optimum even if in
expectation it is drifting away. For technical reasons also large
jumps heading away from the optimum need to be excluded (see
\cite{OlivetoWitt2012}).

\begin{theorem}[Negative-Drift Theorem] \label{th:negativedrift} Let
  $X_t$, $t\geq 0$ be the random variables describing a Markov process
  over the state space $S$ and denote the increase in distance in one
  step $D_t(i) := (X_{t+1} - X_t | X_t=i)$ for $i \in S$ and $t \geq
  0$.  Suppose there exists an interval $[a,b]$ of the state space and
  three constants $\delta, \varepsilon, r > 0$ such that for all $t
  \geq 0$ the following conditions hold:
    \begin{enumerate}
    \item $\Delta_t(i) = \expect{D_t(i)} \geq \varepsilon$ for $a < i < b$
    \item $\Prob{|D_t(i)|) = j} \leq \frac{1}{(1+\delta)^{j-r}}$ for $i>a$ and $j\geq 1$ 
    \end{enumerate}    
    Then there exists a constant $c^*$ such that for $T:=\min\{t\geq 0
    : X_t \leq a | X_0 \geq b \}$ it holds $Pr(T \leq 2^{c^*(b-a)}) =
    2^{-\Omega(b-a)}$.
  \end{theorem}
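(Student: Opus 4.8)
The plan is to run the classical exponential-potential (Hajek-style) argument. I would fix a small constant $\lambda>0$, to be pinned down at the end, and work with the potential $g(i):=e^{-\lambda(i-a)}$. This $g$ is at least $1$ exactly on the target region $\{i\le a\}$ and is exponentially small, $g(i)\le e^{-\lambda(b-a)}$, on $\{i\ge b\}$; since we start from $X_0\ge b$, the whole task reduces to bounding the probability that the process $g(X_t)$ ever climbs to a value $\ge 1$ within $t^\star:=2^{c^\star(b-a)}$ steps.

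The key one-step lemma is that inside the interval the potential contracts in expectation: for $a<i<b$ we have $\expect{g(X_{t+1})\mid X_t=i}=g(i)\cdot\expect{e^{-\lambda D_t(i)}\mid X_t=i}$, and I would show $\expect{e^{-\lambda D_t(i)}}\le 1-c\lambda\varepsilon$ for some constant $c>0$. To do this, expand $e^{-\lambda d}\le 1-\lambda d+\tfrac{\lambda^2}{2}d^2e^{\lambda|d|}$: the linear term contributes $-\lambda\,\Delta_t(i)\le-\lambda\varepsilon$ by condition~1, while the tail condition $\Prob{|D_t(i)|=j}\le(1+\delta)^{-(j-r)}$ guarantees that $\expect{d^2e^{\lambda|d|}}$ is bounded by a constant $K=K(\delta,r)$ as soon as $\lambda<\ln(1+\delta)$, because the relevant sum is a convergent geometric-type series. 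Choosing $\lambda$ small — proportional to $\varepsilon$ and below $\ln(1+\delta)$ — makes $-\lambda\varepsilon+O(\lambda^2K)$ strictly negative and yields the contraction; equivalently the killed process $g(X_t)$ is a supermartingale as long as $X_t\in(a,b)$.

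The main obstacle is that condition~1 is assumed only on $(a,b)$, so $g(X_t)$ need not be a supermartingale once $X_t\ge b$: in fact the positive drift pushes the process above $b$, whence a downward step can re-enter the interval and increase $g$. I would handle this with an \emph{excursion plus union-bound} argument. By condition~2, the probability that in any single step the process makes a downward jump of size $\Omega(b-a)$ — the only way either to re-enter $(a,b)$ far below $b$ or to reach $a$ in one leap — is at most $(1+\delta)^{-\Omega(b-a)}$; summed over the at most $t^\star$ steps this is still $2^{-\Omega(b-a)}$ when $c^\star$ is small. Conditioning on no such jump, every fresh ``attempt'' to cross the interval downwards starts from a state $b-O(1)$ with potential $O(e^{-\lambda(b-a)})$. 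Applying the optional stopping theorem to the potential stopped upon leaving $[a,b]$ (a genuine supermartingale there) together with Markov's inequality bounds the probability that one attempt reaches $a$ by $O(e^{-\lambda(b-a)})$; there are at most $t^\star$ attempts, so another union bound gives total failure probability $t^\star\cdot O(e^{-\lambda(b-a)})=2^{-\Omega(b-a)}$ provided $c^\star$ is chosen below $\lambda/\ln 2$ (and below $\log_2(1+\delta)$ times the jump-exponent constant). Combining the two bad events gives $\Prob{T\le 2^{c^\star(b-a)}}=2^{-\Omega(b-a)}$, as required.

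As an alternative packaging, once the two ingredients above are in hand — a bounded exponential moment of $-D_t(i)$ on $(a,b)$ and a bounded ``overshoot'' moment when entering from outside $[a,b]$ — one can simply invoke Hajek's drift lemma, whose boundary term plays exactly the role of the excursion estimate; the bookkeeping is the same. I expect the genuinely delicate point throughout to be tracking constants carefully, so that the final $c^\star$ and the hidden constant in $\Omega(b-a)$ depend only on $\varepsilon,\delta,r$ and not on the problem size.
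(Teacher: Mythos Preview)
The paper does not actually contain a proof of this theorem. Theorem~\ref{th:negativedrift} is stated without proof and attributed, in the concluding remarks of the drift-analysis section, to Oliveto and Witt~\cite{OlivetoWitt2011}; the paper then proceeds directly to an \emph{application} of the theorem (Theorem~\ref{th:needle} on \NEEDLE). So there is no in-paper proof for me to compare your proposal against.

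That said, your proposal is the right one and is essentially the argument used in the original reference. The exponential potential $g(i)=e^{-\lambda(i-a)}$, the Taylor expansion $e^{-\lambda d}\le 1-\lambda d+\tfrac{\lambda^2}{2}d^2e^{\lambda|d|}$ combined with the geometric tail from condition~2 to control the second-order term, and the choice of $\lambda$ small in terms of $\varepsilon,\delta,r$ to force contraction on $(a,b)$ --- this is precisely Hajek's method and is how the cited proof proceeds. Your treatment of the boundary at $b$ (large-jump union bound plus per-excursion optional stopping) is also the standard way the original handles the fact that condition~1 is only assumed on $(a,b)$; indeed the authors themselves remark in the text that ``for technical reasons also large jumps heading away from the optimum need to be excluded'', which is exactly the ingredient you identify. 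Your alternative packaging via Hajek's lemma with an overshoot term is equally valid and arguably cleaner. The only thing to watch is the bookkeeping you already flag: making sure the constants in the large-jump bound and in the per-attempt success probability both beat $t^\star=2^{c^\star(b-a)}$ simultaneously, which forces $c^\star$ to be the minimum of two quantities depending on $\lambda$ and on $\ln(1+\delta)$.
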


  \begin{figure}[t]
\centering

\begin{tikzpicture}[scale=5]
\draw[thick, line width=1.5] (0,0) node[left=3] {target} -- 
	node[solid,pos=0.225] {\tikz \filldraw (0,0) circle (2pt);}
	node[solid,pos=0.225, below=2] {$a$}
        node[solid,pos=0.6] {\tikz \filldraw(0,0) circle(2pt); }
	node[solid,pos=1] {\tikz \filldraw (0,0) circle (2pt);}
	node[solid,pos=1, below=2] {$b$}
		(1,0) node[right=10] {};
\draw[color=red, line width = 2pt, ->] (0.45,0.1) -- (0.55,0.1) 
node[above=3, color=black] {drift away from target} -- (0.7,0.1);
\draw[->,color=red, dotted] (0.6,0)  .. controls (0.5,-0.15) .. (0.4,0);
\draw[->,color=red, dotted] (0.6,0)  .. controls (0.55,-0.2) .. (0.5,0);
\draw[->,color=red, thick] (0.6,0)  .. controls (0.575,-0.3) .. (0.55,0);

\draw[->,color=red, dotted] (0.6,0)  .. controls (0.7,-0.15) .. (0.8,0);
\draw[->,color=red, dotted] (0.6,0)  .. controls (0.65,-0.2) .. (0.7,0);
\draw[->,color=red, thick] (0.6,0)  .. controls (0.625,-0.3) .. (0.65,0);
\path (0.75,-0.15) node[right=0.5]{no large jumps};
\path (0.75,-0.21) node[right=0.5]{towards target};
\draw[color=blue, line width = 1.5pt] (1.005,0) -- (1.4,0) node[right=3] {start};
\end{tikzpicture}

\caption{An illustration of the two conditions of the Negative-Drift Theorem.}
\label{fig:negativedrift}
\end{figure}
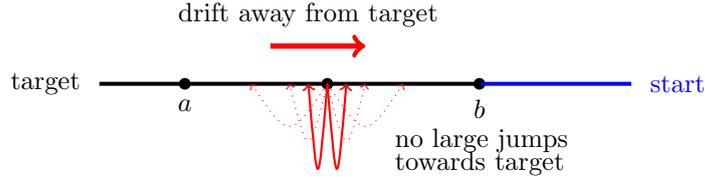

The following theorem is an example application showing exponential
runtime of the (1+1)~EA for the \NEEDLE function.

 \begin{theorem} \label{th:needle} 
 Let $\eta> 0$ be constant.  Then
  there is a constant $c>0$ such that with probability $1-
  2^{-\Omega(n)}$ the (1+1)~EA on $\text{{\sc Needle}}$ creates
  only search points with at most $n/2+\eta n$ ones in $2^{cn}$ steps.
\end{theorem}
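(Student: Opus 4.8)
The plan is to apply the Negative-Drift Theorem (Theorem~\ref{th:negativedrift}) to the process $X_t := |x^{(t)}|$, the number of one-bits in the current search point of the (1+1)~EA on \NEEDLE. Since every non-optimal search point has fitness $0$, selection accepts \emph{every} offspring except that the all-ones string, once reached, is never left; in particular, as long as the process stays below the optimum, the bitstring performs an unbiased bitwise-mutation random walk with acceptance probability $1$. We choose the interval $[a,b]$ with $a := n/2$ and $b := n/2 + \eta n$, and aim to show that from a start above $b$ it takes $2^{\Omega(n)}$ steps to reach $a$, i.e. to drop below $n/2 + \eta n$ ones; negating this gives the claimed statement. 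First I would check the initialisation: by the Chernoff bound computation already carried out in the excerpt, the initial search point has at most $n/2 + \eta n$ ones with probability $1 - 2^{-\Omega(n)}$, but actually for the theorem we want to \emph{start} in the region $X_0 \ge b$ — so instead I would observe that it suffices to prove: conditioned on the process ever entering the ``dangerous'' region $X_t \ge b$, it stays at or above $a$ for $2^{cn}$ steps whp; combined with the whp-bound that we never even enter that region from a typical start, the theorem statement (only search points with at most $n/2 + \eta n$ ones are created in $2^{cn}$ steps) follows. The cleanest route is simply: with probability $1 - 2^{-\Omega(n)}$ the initial point has at most $n/2 + \eta n /2$ ones (Chernoff), and then applying the negative-drift theorem with $a := n/2+\eta n/2$, $b := n/2 + \eta n$ to the ``mirrored'' process $n - X_t$ shows the upper region is not reached within $2^{cn}$ steps whp.

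Next I would verify the two drift conditions for the region $n/2 + \eta n/2 < i < n/2 + \eta n$. For condition~(1), the expected one-step change in the number of ones when exactly the bits are flipped by bitwise mutation is $\expect{X_{t+1} - X_t \mid X_t = i} = (n-i)/n - i/n = 1 - 2i/n$, since each of the $n-i$ zeroes flips with probability $1/n$ (gaining a one) and each of the $i$ ones flips with probability $1/n$ (losing a one), and all offspring are accepted. For $i > n/2 + \eta n/2$ this equals $1 - 2i/n < -\eta$, so the process drifts \emph{away} from the optimum (towards fewer ones) by a constant $\varepsilon := \eta$ — i.e. in the mirrored coordinate $Y_t := n - X_t$, which measures distance towards the target $0$ in the relevant direction, there is constant positive drift away from the target, matching the hypothesis shape of Theorem~\ref{th:negativedrift}. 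For condition~(2), the number of bits that flip in one step is $\bin(n,1/n)$, so $\Prob{|D_t(i)| = j} \le \Prob{\bin(n,1/n) = j} \le 1/j! \le (1/2)^{j-1}$ for $j \ge 1$ (using the bound $1/i! \cdot (1-1/n)^{n-i} \le 1/i!$ from the excerpt and $j! \ge 2^{j-1}$), which gives the required form with $\delta := 1$ and $r := 1$.

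With both conditions in hand, the Negative-Drift Theorem yields a constant $c^* > 0$ such that, started above $b$, the time to reach below $a$ is at least $2^{c^*(b-a)} = 2^{c^*\eta n/2}$ with probability $1 - 2^{-\Omega(\eta n/2)} = 1 - 2^{-\Omega(n)}$ (since $\eta$ is a constant). Setting $c := c^*\eta/2$ and combining with the Chernoff bound on initialisation via a union bound, I conclude that with probability $1 - 2^{-\Omega(n)}$ the (1+1)~EA on \NEEDLE does not create any search point with more than $n/2 + \eta n$ ones within $2^{cn}$ steps, which is the statement. The main obstacle I anticipate is purely bookkeeping: making sure the ``mirroring'' of the process is done correctly so that the drift hypothesis (drift $\ge \varepsilon$ away from the target $0$) is matched with the correct sign, and choosing the two thresholds $a,b$ so that there is a constant gap $b - a = \Theta(n)$ while still leaving room for the Chernoff slack at initialisation; the drift and large-jump estimates themselves are immediate from the binomial structure of bitwise mutation and the fact that \NEEDLE's flat fitness landscape makes the walk unbiased and always-accepting.
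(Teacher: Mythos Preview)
Your proposal is correct and follows essentially the same route as the paper: apply the Negative-Drift Theorem to the number of zeroes (your ``mirrored'' coordinate $Y_t=n-X_t$) over an interval of width $\Theta(\eta n)$ below $n/2$, use the unbiased-walk drift $(n-2i)/n$ on the plateau together with the $1/j!\le(1/2)^{j-1}$ jump bound for bitwise mutation, and handle initialisation via Chernoff. The paper simply works directly in the zeroes-coordinate with $a=n/2-2\gamma n$ and $b=n/2-\gamma n$ (implicitly $\gamma=\eta/2$), so no mirroring is needed; one small slip in your write-up is that $\Prob{|D_t(i)|=j}\le\Prob{\bin(n,1/n)\ge j}$ rather than $=j$, but the final bound $(1/2)^{j-1}$ still follows.
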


\begin{proof}
  Apply the negative-drift theorem and set the interval $[a,b]$ as $a
  := n/2 - 2 \gamma n$ zeroes and $b:= n/2 - \gamma n$ zeroes, with
  $\gamma$ a positive constant.  By Chernoff bounds the probability
  that the initial random search point has less than $n/2 - \gamma n$
  zeroes is $e^{-\Omega(n)}$, implying that the algorithm starts
  outside the interval $[a,b]$ as desired. The remainder of the proof
  shows that the two conditions of the drift theorem hold.

  Since the interval is a plateau, all created points have the same
  fitness and are accepted.  Given that the current search point has
  $i$ zeroes and that each bit is flipped with probability $1/n$, the
  drift is
\[
 \Delta_t(i) = \frac{n-i}{n} - \frac{i}{n} = \frac{n-2i}{n} \geq 2 \gamma := \varepsilon
\]
where the last inequality is achieved because in the interval there
are always at least $b$ zeroes, i.e., $i \geq n/2 - \gamma n$.
Concerning the second condition for standard bit mutation, the
probability of flipping $j$ bits decreases exponentially with the
number of bits $j$
\[
 \Prob{|\Delta_t(i)| \geq j} \leq {n \choose j} \left(\frac{1}{n}\right)^j \leq \left(\frac{n^j}{j!}\right)^j \leq \frac{1}{j!} \leq \left(\frac{1}{2}\right)^{j-1}
\]
and the condition holds for $\delta=r=1$.  By the negative drift
theorem the optimum is found within $2^{c*(b-a)}=2^{cn}$ steps with
probability at most $2^{-\Omega(b-a)}=2^{-\Omega(n)}$.  
\end{proof}

The proof can be generalised to plateau blocks of functions of
unitation positioned such that $k+m < (1/2 - \varepsilon) n$.

\begin{theorem}
  The time for the (1+1)~EA to optimise a \emph{plateau block} of
  length $m$ at position such that $k+m < (1/2 - \varepsilon) n$ is at
  least $2^{\Omega(m)}$ with probability at least $1- 2^{-\Omega(m)}$.
\end{theorem}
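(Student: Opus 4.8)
The plan is to apply the Negative-Drift Theorem (Theorem~\ref{th:negativedrift}) exactly as in the proof of Theorem~\ref{th:needle}, the only difference being that the \emph{entire} plateau block can now serve as the interval $[a,b]$. Let $X_t$ denote the number of zeroes in the current search point and set $a:=k$ and $b:=k+m$, so that $b-a=m$. By the block decomposition the algorithm reaches the beginning of the plateau with exactly $m+k$ zeroes, so $X_0=b$ and the process starts at the upper end of the interval, as required by the theorem; in particular no Chernoff argument about the initial point is needed here (in contrast to Theorem~\ref{th:needle}). The stopping time $T:=\min\{t\ge 0: X_t\le a \mid X_0\ge b\}$ is then precisely the time until the (1+1)~EA first produces a search point with at most $k$ zeroes, i.e.\ until it leaves the plateau at its end.

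First I would record that the process behaves like a confined random walk on $\{k+1,\dots,k+m\}$: every search point whose number of zeroes lies in $(k,k+m]$ has the same fitness $a$ and is therefore accepted, while every search point with more than $k+m$ zeroes has fitness $0<a$ and is rejected. Hence $X_t$ never exceeds $b$ until it drops to $\le a$, which also keeps the large-jump condition clean. For condition~1, at a point with $i$ zeroes the drift is $\Delta_t(i)=\expect{D_t(i)}=\frac{n-i}{n}-\frac{i}{n}=\frac{n-2i}{n}$, and since the hypothesis $k+m<(1/2-\varepsilon)n$ forces $i\le k+m<(1/2-\varepsilon)n$ for every point of the plateau, we obtain $\Delta_t(i)>2\varepsilon$, a positive constant; thus condition~1 holds with $2\varepsilon$ playing the role of the theorem's $\varepsilon$. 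For condition~2, exactly as in Theorem~\ref{th:needle}, $\Prob{|D_t(i)|\ge j}\le{n\choose j}(1/n)^j\le 1/j!\le(1/2)^{j-1}$, so the large-jump condition is satisfied with $\delta=r=1$.

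With both conditions in place, the Negative-Drift Theorem provides a constant $c^*>0$ such that $\Prob{T\le 2^{c^*(b-a)}}=2^{-\Omega(b-a)}$. Substituting $b-a=m$ gives $\Prob{T\le 2^{c^*m}}=2^{-\Omega(m)}$, i.e.\ with probability $1-2^{-\Omega(m)}$ the plateau block is not overcome within $2^{c^*m}=2^{\Omega(m)}$ steps, which is the claim (taking $c:=c^*$).

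I do not expect a genuine obstacle here; the only points that need a little care are making sure the interval is chosen entirely inside the region where the drift has the right sign — which is exactly what the assumption $k+m<(1/2-\varepsilon)n$ guarantees, and which is why for \NEEDLE one can only use a sub-interval below $n/2$ rather than the whole plateau — and checking that mutations leaving the plateau upwards are rejected so that the confinement $X_t\le b$ genuinely holds. One should also note in passing that the statement is only meaningful when $m$ grows with $n$, since $c^*$ is a constant.
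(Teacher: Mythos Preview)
Your proof is correct and follows exactly the paper's approach: the paper's own proof is a single sentence stating that the argument of Theorem~\ref{th:needle} applies with $b:=m+k$ and $a:=k$, and you carry out precisely this plan, correctly observing that the assumption $k+m<(1/2-\varepsilon)n$ yields the constant positive drift $(n-2i)/n>2\varepsilon$ throughout the interval and that the Chernoff step for the initial point is unnecessary since the block decomposition delivers $X_0=b$ directly. Your additional remarks about confinement and the boundary behaviour go beyond what the paper spells out but do not deviate from its argument.
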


\begin{proof}
  The proof follows the same arguments as the proof of
  Theorem~\ref{th:needle} by setting $b:=m+k$ zeroes and $a:=k$
  zeroes.  
\end{proof}

\subsection{Conclusions}
Drift analysis dates back to 1892 when it was applied for the analysis
of stability equilibria in ordinary differential equations
\cite{Lyapunov1892}.  The first use of drift techniques for the
runtime analysis of stochastic search heuristics was performed by
Sasaki and Hajek to analyse simulated annealing on instances of the
maximum matching problem \cite{SasakiHajek1988}.  Drift analysis was
applied in a considerable number of applications in evolutionary
computation after He and Yao introduced the additive drift technique
to the field \cite{HeYao2001}.  Several extensions for the analysis of
more sophisticated algorithms and processes have been since devised.
The multiplicative drift method introduced in
Section~\ref{sec:multiplicative} is due to Doerr et
al. \cite{Doerr2010}. An improved version introduced by Doerr and
Goldberg allows to derive bounds also on the probability to deviate
from the expected runtime. Witt recently introduced a multiplicative
drift theorem to achieve lower bounds on the runtime
\cite{Witt2012}. While the multiplicative drift theorem may only be
used when the drift is linear, a variable drift theorem was introduced
to deal with cases where the expected progress is non-linear
\cite{Johannsen2012} with respect to the current position. A general
variable drift theorem with tail bounds was introduced in
\cite{LehreWitt2013TailDrift}. This theorem subsumes most existing
drift theorems, and a special case of this theorem was given in
Section~\ref{sec:variable} (see Theorem~\ref{thm:variable-drift}).

Oliveto and Witt proposed the negative-drift theorem presented in
Section~\ref{sec:negativedrift} to derive exponential lower bounds on
the runtime of randomised search heuristics \cite{OlivetoWitt2011}.
This theorem was the main tool used in the first analysis of the
standard \emph{simple genetic algorithm (SGA)}
\cite{OlivetoWitt2014,OlivetoWitt2015}.  Finally, Lehre extended the
negative drift theorem to allow a systematic analysis of
population-based heuristics using non-elitist selection mechanisms
\cite{Lehre2010}.

\section{Final Overview}
This chapter has presented the most commonly used techniques for the
time complexity analysis of stochastic search heuristics.  Example
applications have been shown concerning simple evolutionary algorithms
on classical test problems.  The same techniques have allowed the
obtainment of time complexity results on more sophisticated function
classes, such as standard combinatorial optimisation problems.  The
reader is referred to \cite{NeumannWitt2010} for an overview of such
advanced results concerning evolutionary algorithms and ant colony
optimisation.  Several other techniques have been used to analyse
stochastic search heuristics, such as typical runs, family trees,
martingales, probability generating functions and branching processes.
For an introduction to these tools for the analysis of evolutionary
algorithms, the reader is referred to a recent extensive monograph
\cite{Jansen2013} which covers a more extensive set of techniques than
is possible in this book chapter.  Apart from EAs and ACO for discrete
single-objective optimisation, several other aspects of stochastic
search optimisation have been investigated theoretically, such as
simulated annealing, evolution strategies for continuous optimisation,
particle swarm optimisation, memetic algorithms, and multi-objective optimisation techniques.  These topics are overviewed in
\cite{AugerDoerr2010}. Significant results have been achieved recently
that have not yet been covered in monographs and edited book
collections.  Amongst these, certainly worth mentioning are the recent
considerable advances in black box optimisation
\cite{LehreWitt2012,DoerrWinzenTCS2014}, which may be regarded as a
complementary line of research to runtime analysis.  Rather than
determining the time required by a given heuristic for a problem
class, black box optimisation focuses on determining the best possible
performance achievable by any search heuristic for that class of
problems. A recent interesting variation to classical runtime analysis
is to determine the expected solution quality achieved by a stochastic
search heuristic if it is only allowed a predefined budget
\cite{JansenZarges2014a}. Fixed budget computation analyses are driven
by the consideration that in practical applications only a predefined
amount of resources are available and the hope is to use such
resources at best to achieve solutions of the highest quality. Recent
years have witnessed considerable advances in the theory of artificial
immune systems (AISs). Results concerning standard AISs, such as the
B-Cell algorithm, for classical combinatorial optimisation problems
have appeared \cite{JansenOlivetoZarges2011, JansenZarges2012}
together with analyses of sophisticated AIS operators such as
stochastic ageing mechanisms \cite{OlivetoSudholt2014}. Complexity
analyses of parallel evolutionary algorithms \cite{Sudholt2015} and
genetic programming \cite{NeumannOReillyWagner2011} have also recently
appeared. Finally, systematic work has been carried out in unifying
theories of evolutionary algorithms and population genetics
\cite{Paixao2015}.

\section{Acknowledgement}
The research leading to these results has received funding from the
European Union Seventh Framework Programme (FP7/2007-2013) under grant
agreement no 618091 (SAGE) and by the EPSRC under grant agreement no EP/M004252/1 (RIGOROUS).

\bibliographystyle{abbrv}
\bibliography{references}

\end{document}